\documentclass[pdflatex,sn-mathphys-num]{sn-jnl}
\usepackage{graphicx}%
\usepackage{multirow}%
\usepackage{amsmath,amssymb,amsfonts}%
\usepackage{amsthm}%
\usepackage{mathrsfs}%
\usepackage[title]{appendix}%
\usepackage{xcolor}%
\usepackage{textcomp}%
\usepackage{manyfoot}%
\usepackage{booktabs}%
\usepackage{algorithm}%
\usepackage{algorithmicx}%
\usepackage{algpseudocode}%
\usepackage{listings}%

\usepackage{placeins}
\usepackage{subcaption}
\usepackage{siunitx}
\usepackage{tabularx}
\usepackage{enumitem}
\usepackage{svg}
\usepackage[percent]{overpic}
\usepackage[utf8]{inputenc}
\usepackage{hyperref} 

\usepackage{makecell}

\hypersetup{
    colorlinks=true,
    linkcolor=black,     
    citecolor=black,     
    filecolor=black,     
    urlcolor=black       
}
\newtheorem{proposition}{Proposition}

\begin{document}
\title[Article Title]{Learning-Based Dynamic Obstacle Avoidance for a UAV Using Only Three Range Sensors}
\author{\fnm{Mohammad Reza} \sur{Ranjbar Divkoti}}
\author{\fnm{A. Pedro} \sur{Aguiar}}
\affil{\orgdiv{SYSTEC-ARISE Research Center for Systems and Technologies}, \orgname{Faculty of Engineering, University of Porto}, \orgaddress{\postcode{4200-465}, \city{Porto}, \country{Portugal.}}
\\ \textbf{Corresponding author:} Mohammad Reza Ranjbar Divkoti (Email: up202111345@edu.fe.up.pt)}

\abstract{
We present a learning-based approach to kinodynamic online motion planning for an Unmanned Aerial Vehicle (UAV) operating at a fixed altitude in unknown dynamic environments, where real-time avoidance of both static and dynamic obstacles must be achieved under conditions of extreme partial observability. The UAV is controlled with a single degree of freedom (yaw only), resulting in constrained, nonholonomic motion similar to fixed-wing platforms.  

The proposed framework integrates a behavior grid map representation with Deep Reinforcement Learning (DRL), using Proximal Policy Optimization (PPO) for stable policy learning in continuous control. The key idea is the co-design of a state representation and control policy that enables reliable navigation using only three low-cost directional range sensors, without reliance on dense sensing modalities such as LiDAR or vision-based systems. The behavior grid map dynamically aggregates sparse measurements into a structured local representation that supports real-time decision-making for obstacle avoidance and target reaching.  \textcolor{black}{Extensive simulations across environments of varying sizes and obstacle densities demonstrate that the proposed standard and enhanced methods achieve higher success rates than PPO variants and Model Predictive Control (MPC) (94\% vs. 79--90\% in small-scale high-congestion scenarios, and 83\% vs. 62--71\% in large-scale high-congestion scenarios), while maintaining real-time performance. Real-world experiments across four scenarios further confirm practical feasibility, with consistent target-reaching behaviour and no collisions under the tested conditions.}}
\keywords{UAV Online Motion Planning, Deep Reinforcement Learning, Behavior Grid Map, Dynamic Obstacle Avoidance, PPO}


\maketitle

\section{Introduction}\label{sec:introduction}
Recent years have seen enormous advances in autonomous robotic vehicles, with a focus on efficiency and safety. An Unmanned Aerial Vehicle (UAV), commonly referred to as a drone, has become increasingly popular for a wide range of applications, including environmental research, package delivery, agricultural monitoring, and aerial surveillance~\cite{rezaee2024comprehensive}.

The problem of efficient online motion planning has been a major focus in UAV. Various motion-planning techniques have been thoroughly analyzed and categorized by recent studies~\cite{ait2022uav}. For instance in~\cite{lin2021autonomous}, an offline path planning method based on a map-based approach and vision-based obstacle detection for collision avoidance has been proposed. However, effectively addressing dynamic obstacle avoidance in UAV environments remains an open area for future research.

Drones have become increasingly popular over the last ten years due to their numerous applications. Effective path planning is necessary for autonomous drone operation to prevent collisions~\cite{ranjbar2025rcs}. The work in~\cite{gugan2023path} examines path generation and environmental representation methods, evaluating popular approaches and their cons. By analyzing influential publications, the study reveals current trends and offers insights for developing more robust path planners for autonomous drones.

Online motion planning algorithms can be broadly divided into two categories: traditional and learning-based approaches~\cite{xiao2022motion,gonzalez2015review}. Traditional algorithms, lacking a learning component, rely on predefined algorithms and rules to generate robot trajectories. While effective in static environments, these methods lack adaptability to dynamic scenarios. In contrast, learning-based algorithms, particularly those leveraging machine learning and DRL techniques, address these limitations~\cite{kaufmann2023champion, he2021explainable, he2020integrated, angulo2022policy}.

Deep Reinforcement Learning (DRL) has drawn a lot of interest. The application of DRL in this context has been thoroughly investigated by researchers~\cite{aradi2020survey, liu2020path, yan2020towards, chiang2019rl, wang2019autonomous, choi2021robust, hodge2021deep, huang2019deep,wang2020deep, kang2024uav}. DRL can learn challenging online motion planning tasks and adapt to dynamic environments. Researchers are investigating methods that combine sensor fusion~\cite{ali2019path}, optimal control~\cite{zheng2023constrained, gao2023fixed}, and graph theory~\cite{ma2022optimal} to further improve the capabilities of DRL in online motion planning. This integration can enhance the decision-making process and offer a more thorough understanding of the environment. The integration of low-cost sensors and smart software for various vehicle monitoring scenarios, sensor error modeling, and enhancing semantic scene understanding should be the main areas of future research~\cite{gyagenda2022review}.

Several advanced learning algorithms have been introduced, including Asynchronous Advantage Actor-Critic (A3C)~\cite{ryou2018applying, mnih2016asynchronous}, Advantage Actor-Critic (A2C), Trust Region Policy Optimization (TRPO)~\cite{schulman2015trust}, and Proximal Policy Optimization (PPO)~\cite{schulman2017proximal, qi2022uav, zhu2022auv, chikhaoui2022ppo, li2023uav}. To stabilize and speed up convergence, A3C and A2C used multi-thread techniques, while TRPO and PPO added trust region constraints and adaptive penalties. 

Table~\ref{tab:discussion} provides a summary of key research papers and their contributions, offering insights into the current state of the field.

\begin{table*}
    \centering
    \caption{A summary of key research papers and their contributions}
    \label{tab:discussion}
    \small
    \begin{tabularx}{\linewidth}{XXXX}
        \hline
        \textbf{\textcolor{black}{Category}} & \textbf{Pros} & \textbf{Cons} & \textbf{Open Problem} \\
        \hline
        \begin{itemize}[leftmargin=0.1cm, labelsep=0cm, itemsep=0cm]
            \item~\textit{\textcolor{black}{UAV Path Planning and Obstacle Avoidance~\cite{ait2022uav,lin2021autonomous,ranjbar2025rcs,qu2020novel,ge2020path,zhu2021deep}}}
        \end{itemize}
        & 
        \begin{itemize}[leftmargin=0.1cm, labelsep=0cm, itemsep=0cm]
            \item~Comprehensive analyses of path-planning approaches.
            \item~Innovations in obstacle detection for UAV.
        \end{itemize} 
        & 
        \begin{itemize}[leftmargin=0.15cm, labelsep=0cm, itemsep=0cm]
            \item~Challenges in dynamic obstacle avoidance.
            \item~Limitations of traditional online motion planning algorithms.
        \end{itemize} 
        & 
        \begin{itemize}[leftmargin=0.1cm, labelsep=0cm, itemsep=0cm]
            \item~Develop real-time adaptive solutions to dynamic obstacles.
        \end{itemize} 
        \\
        \begin{itemize}[leftmargin=0.1cm, labelsep=0cm, itemsep=0cm]
            \item~\textit{\textcolor{black}{DRL-Based Motion Planning and Navigation~\cite{aradi2020survey,liu2020path,yan2020towards}}}
        \end{itemize}
        & 
        \begin{itemize}[leftmargin=0.1cm, labelsep=0cm, itemsep=0cm]
            \item~Potential for complex online motion planning with DRL.
            \item~Integration of sensor fusion and optimal control.
        \end{itemize} 
        & 
        \begin{itemize}[leftmargin=0.15cm, labelsep=0cm, itemsep=0cm]
            \item~Challenges in accounting for environmental disturbances.
            \item~Issues with poor generalization in DRL.
        \end{itemize} 
        & 
        \begin{itemize}[leftmargin=0.1cm, labelsep=0cm, itemsep=0cm]
            \item~Integrate low-cost hardware sensors with smart software to enhance environmental understanding.
        \end{itemize} 
        \\
        \hline
    \end{tabularx}
\end{table*}

Despite these advances, a significant gap remains in deploying learning-based UAV navigation frameworks in real-world settings with low-cost, low-perception sensors. Many existing approaches assume access to rich sensory data such as 3D LiDAR, dense RGB-D maps, or full environment reconstructions~\cite{huang2025gtrxl, almahamid2024viznav}, which are often expensive and computationally intensive. This reliance makes them impractical for cost-sensitive or resource-constrained UAV applications.

In contrast to these works, this paper investigates a lightweight and cost-effective online motion planning framework that uses only three rangefinder sensors positioned in the front, left, and right directions of the UAV. These inexpensive sensors provide sparse distance readings, and the system must make safe navigation decisions using this limited information. To compensate for the perception constraints, we integrate these readings into a behavior grid map, a compact, local, and dynamically updated spatial representation that feeds a PPO-based policy to generate control actions in real-time.

Although behavior or occupancy maps have been used in DRL settings, they are typically supported by rich and dense observations~\cite{fan2025flying, liu2024flexible}. Our work challenges this assumption by demonstrating that strong learning-based navigation performance is achievable even with minimal and noisy sensory input if combined with a task-adapted behavior grid and structured low-level policy design. While recent works leverage high-resolution sensors and dense maps (see Table~\ref{tab:comparison}), our method uniquely demonstrates DRL-powered obstacle avoidance using sparse, minimal sensing.

\begin{table*}
\centering
\caption{Comparison of DRL-based UAV online motion planning methods under different sensor modalities}
\label{tab:comparison}
\small
\begin{tabularx}{\linewidth}{XXXXXX}
\hline
\textbf{Ref (Year)} & \textbf{Sensors} & \textbf{Perception / Input} & \textbf{Map / Representation} & \textbf{DRL Algorithm} & \textbf{Environment / Hardware}\\
\hline
Fan et al. (2025)~\cite{fan2025flying} & 3D LiDAR & Point-cloud & 2D obstacle map via LiDAR encoder & PPO-/end-to-end & Simulated and real UAV, dynamic environment \\
Liu et al. (2024)~\cite{liu2024flexible} & Airborne LiDAR & 3D scans & Occupancy grid for formation & PPO-style & Simulated, multi-UAV \\
Huang et al. (2025)~\cite{huang2025gtrxl} & Vision, sensors, and LiDAR & Multimodal (RGB, depth) & Transformer-enhanced perception & SAC and GTrXL & Simulated, dynamic environment \\
Almahamid (2024)~\cite{almahamid2024viznav} & RGB and depth \newline camera & Voxel grids & 3D voxel map & Off-policy DRL & 3D environments simulated \\
Wang et al. (2024)~\cite{wang2024end} & 2D LiDAR & Relative distances & End-to-end and potential field & PF-SAC & Wheeled robot, but LiDAR-based\\
\hline
\textbf{This Work (2025)} & 3 rangefinders (front, left, and right) & Sparse range readings & Local behavior grid map & PPO & Simulated and real UAV, 3 sensors \\
\hline
\end{tabularx}
\end{table*}

In addition to the severe perception constraints, the control authority of the UAV is deliberately limited in this work. The system is modeled with a single degree of freedom, where only the yaw rate can be adjusted while the forward velocity remains constant. This constraint results in non-holonomic motion behavior similar to that of a fixed-wing UAV, which cannot perform in-place rotations or lateral movements. As a consequence, the UAV must continuously move forward and commit to its heading decisions in advance, making online obstacle avoidance and target reaching significantly more challenging, especially in cluttered and dynamic environments.

To the best of our knowledge, this is the first work to successfully integrate a behavior grid map with PPO for dynamic online UAV motion planning, considering a single degree-of-freedom model and relying solely on three sparse rangefinder sensors, without the use of LiDAR, stereo vision, or global maps. This setup makes our approach well-suited for low-power, real-time deployment on a resource-limited UAV operating in dynamic, cluttered environments. 

\textcolor{black}{
\subsection{Methodological Rationale}\label{subsec:justification}
This section provides a structured rationale for the proposed framework by positioning it relative to classical navigation approaches and highlighting the key design choices that enable operation under extreme partial observability and sparse sensing conditions.
}

\textcolor{black}{
\textbf{Reactive Methods:}
Classical reactive techniques such as Artificial Potential Fields (APF)~\cite{khatib1986real} and Tangent Bug~\cite{kamon1998tangentbug} are computationally efficient and suitable for real-time applications. However, they are susceptible to local minima and exhibit degraded performance under sparse sensing, where limited range measurements lead to fragmented obstacle representations, often resulting in oscillatory behavior and unreliable avoidance decisions.
}

\textcolor{black}{
\textbf{Predictive Methods:}
Predictive approaches, including the Dynamic Window Approach (DWA)~\cite{fox2002dynamic} and Model Predictive Control (MPC)~\cite{mayne2000constrained}, improve safety by explicitly considering future trajectories. DWA is effective for short-horizon local planning but is sensitive to incomplete perception and may fail to capture the UAV’s geometric footprint under sparse feedback. MPC is adopted as the primary baseline, as it represents a standard optimization-based framework for constrained systems. While MPC provides theoretically optimal control under accurate environment models, it relies on sufficiently rich geometric information. Under sparse sensing, individual range measurements are treated as isolated observations, leading to blind regions between sensing directions and under-approximation of obstacle boundaries. This limitation provides a consistent reference to evaluate the proposed approach under identical perception constraints.
}

\textcolor{black}{
\textbf{DRL Policy (PPO):}
We employ Proximal Policy Optimization (PPO) due to its stability and robustness in continuous control settings~\cite{tang2025deep}. While DRL-based navigation is well established, the contribution of this work lies in the co-design of the policy and the state representation tailored to operate under extreme partial observability, rather than in the learning algorithm itself.
}

\textcolor{black}{
\textbf{Behavior Grid Map:}
Standard Local Occupancy Grid Maps (LOGM)~\cite{thrun2002probabilistic, ren2024rogmap} rely on dense observations to construct coherent obstacle representations. In the sparse sensing regime, such representations degenerate into disconnected measurements, commonly referred to as the \textit{Swiss cheese effect}. The proposed behavior grid map addresses this limitation by introducing a spatial influence kernel that transforms discrete sensor readings into continuous regions encoding approximate obstacle influence and clearance. This representation does not aim to reconstruct exact geometry, but rather provides a structured prior that regularizes sparse observations and enables the policy to reason about obstacle proximity despite missing information. Notably, many existing works consider multi-beam LiDAR (e.g., 16-beam) as a minimal viable sensing setup~\cite{tao2022lidar}, whereas the present work operates with only three directional measurements, representing a significantly more constrained perception regime.
}

\textcolor{black}{
\textbf{Sinusoidal Motion Strategy:}
The sinusoidal motion strategy is formulated as an active perception mechanism~\cite{bajcsy1988active}. While exploratory motion is commonly used for mapping in unknown environments~\cite{placed2023survey}, here it is integrated as a task-driven primitive for real-time obstacle avoidance. By inducing controlled oscillations in the UAV heading, the sensing directions are swept across a wider spatial region over time, effectively increasing angular coverage. This behavior can be interpreted as a form of temporal sensing augmentation, partially mitigating the limitations of sparse measurements without introducing additional hardware or computational overhead.
}

\subsection{Main Contributions}
The main contributions of this paper are summarized as follows:

\begin{itemize}
    \item \textcolor{black}{A learning-based UAV navigation framework designed for operation under extreme partial observability, integrating a behavior grid map with PPO. The approach enables kinodynamic motion planning under a single degree-of-freedom (yaw-only) control model using only three sparse rangefinder measurements, departing from prior works that rely on dense sensing modalities. The framework combines obstacle and target encoding, dynamic grid updating, and compact state extraction to support real-time decision-making with low computational overhead.}

    \item \textcolor{black}{A lightweight and deployable system tailored for resource-constrained platforms, requiring minimal memory and computational resources during execution. The approach relies on low-cost range sensors and demonstrates suitability for onboard implementation without the need for LiDAR or vision-based perception systems.}

    \item \textcolor{black}{A perception-enhancement strategy based on sinusoidal motion, formulated as an active sensing mechanism to improve environmental coverage under sparse observations. The enhanced method introduces controlled oscillations during forward motion to increase effective perception, improving navigation robustness while introducing a measured increase in energy consumption.}

    \item \textcolor{black}{Extensive validation in both simulation and real-world environments. Simulations across multiple environment sizes and obstacle densities show that the proposed standard and enhanced methods achieve higher success rates than PPO variants and MPC baselines (94\% vs.\ 79--90\% in small-scale high-congestion scenarios, and 83\% vs.\ 62--71\% in large-scale high-congestion scenarios), while maintaining real-time performance. Real-world experiments across four scenarios further demonstrate practical feasibility, with consistent target-reaching behaviour and no collisions under the tested conditions.}

\end{itemize}

The remaining paper is organized as follows. The problem is formally described in Section~\ref{sec:problem-definition}. The online motion planning algorithm to solve the problem is presented in Section~\ref{sec:algorithm}. In Section~\ref{sec:result}, the results are discussed. Section~\ref{sec:conclusion} concludes the paper.

\section{Problem Definition}\label{sec:problem-definition}
This paper addresses the problem of planning a feasible kinodynamic trajectory for a UAV while avoiding static and dynamic obstacles. Let $W = \{O, x, y, z\}$ be the inertial frame, where the axes are oriented in the north, west, and upward directions. Similarly, let $B = \{O_b, x_b , y_b , z_b \}$ represent the body-fixed frame. By applying classical Newtonian or Lagrangian methods, the quadcopter's dynamics can be formulated as follows~\cite{mahony2012multirotor}
\begin{equation}
\label{equ:dynamics}
\begin{aligned}
    \dot{\xi} &= R_{WB}(\eta) V_B \\
    m \dot{V}_B &= - m \omega \times V_B + m R_{WB}^T(\eta) g_W + T_B + F_d  \\
    \dot{\eta} &= T(\eta) \omega  \\
        I \dot{\omega} &= - \kappa \omega - \omega \times (I \omega) + \tau_B
\end{aligned}
\end{equation}
where $\xi = [\xi_x, \xi_y, \xi_z]^T$ is the position in the inertial frame $W$, $\eta = [\phi, \theta, \psi]^T$ represents the Euler angles (roll $\phi$, pitch $\theta$, and yaw $\psi$), $R_{WB}(\eta)$ is the rotation matrix from the body-fixed frame to the inertial frame, $T(\eta)$ is the transformation matrix from angular velocities to Euler angle rates, $V_B = [u, v, w]^T$ is the linear velocity in the body-fixed frame, $\omega = [p, q, r]^T$ is the angular velocity in the body-fixed frame, $m$ is the quadcopter mass, $I$ is the body-fixed inertia matrix,
$
T_B =
\begin{bmatrix}
T_x &
T_y &
T_z
\end{bmatrix}^T
=
\begin{bmatrix}
0 &
0 &
k \sum_{i=1}^{4} \Omega_i^2
\end{bmatrix}^T
$
is the total thrust vector acting along the body-fixed $z$-axis, \( k \) is the thrust coefficient, \( \Omega_i \) is the angular velocity of rotor \( i \), 
\(
\tau_B =
\begin{bmatrix}
\tau_\phi \\
\tau_\theta \\
\tau_\psi
\end{bmatrix}
=
\begin{bmatrix}
l k (\Omega_1^2 - \Omega_2^2 - \Omega_3^2 + \Omega_4^2) \\
l k (\Omega_1^2 + \Omega_2^2 - \Omega_3^2 - \Omega_4^2) \\
b (\Omega_1^2 - \Omega_2^2 + \Omega_3^2 - \Omega_4^2)
\end{bmatrix}
\)
represents the control torques, \( l \) is the distance from each rotor to the center of mass, \( b \) is the yaw torque coefficient, $\kappa = \text{diag}([\kappa_1, \kappa_1, \kappa_2])$ is the damping ratio matrix, $g_W = [0, 0, g]^T$ is the gravitational acceleration vector, $F_d = -\Lambda \left( V_B + R_{WB}^T(\eta) V_w \right)$ is the aerodynamic drag force experienced by the quadcopter in the body-fixed frame, $\Lambda = \text{diag}([\Lambda_1, \Lambda_1, \Lambda_2])$ is the drag coefficient matrix, and $V_w = [u_w, v_w, w_w]^T$ is the wind velocity in the inertial frame.

\textcolor{black}{
In this work, the UAV operates under its full nonlinear dynamics, which are regulated by low-level proportional–integral–derivative (PID) controllers. A constant altitude reference (\(z\)) and a constant forward velocity reference (\(u\)) are provided to the controller, while the lateral velocity reference is constrained to \(v = 0\). The proposed algorithm generates the yaw angular velocity command (\(r\)), enabling the UAV to adjust its heading for navigation and obstacle avoidance, while the PID controllers track the commanded references.
}
\\
\\
\textit{The goal of this paper is to design a kinodynamically feasible online motion planning and control strategy that enables a UAV to safely navigate environments with both static and dynamic obstacles, using only sparse, low-cost range sensing.}
\\
\\
\textcolor{black}{The main challenge arises from the severely limited sensing capabilities of the UAV. In particular, for simplicity and to match the experimental hardware configuration, the platform is equipped with only three low-cost rangefinder sensors measuring obstacle distances in the forward, left, and right directions. Unlike dense sensing modalities such as LiDAR, these sensors provide one-dimensional measurements along narrow beams with a maximum range of $c$ meters, resulting in only three scalar observations at each time step. This sparse sensing configuration leads to an extremely limited and fragmented perception of the environment, where obstacle boundaries are under-sampled and may appear as disconnected detections, significantly reducing the ability to reliably infer spatial structure and avoid collisions. In addition, to remain consistent with this sensing setup, we assume that no dynamic obstacles approach the UAV from behind.}

Figure~\ref{fig:problem} illustrates this setup, showing the drone equipped with three rangefinder sensors (forward, left, and right), represented by red lines. A notable limitation is highlighted: while the drone can detect the blue part of the ball as an obstacle, it fails to detect the box positioned in its blind spot. This constraint poses significant challenges for effective obstacle detection and navigation. 

\begin{figure}
    \centering
    \includegraphics[width=0.5\linewidth]{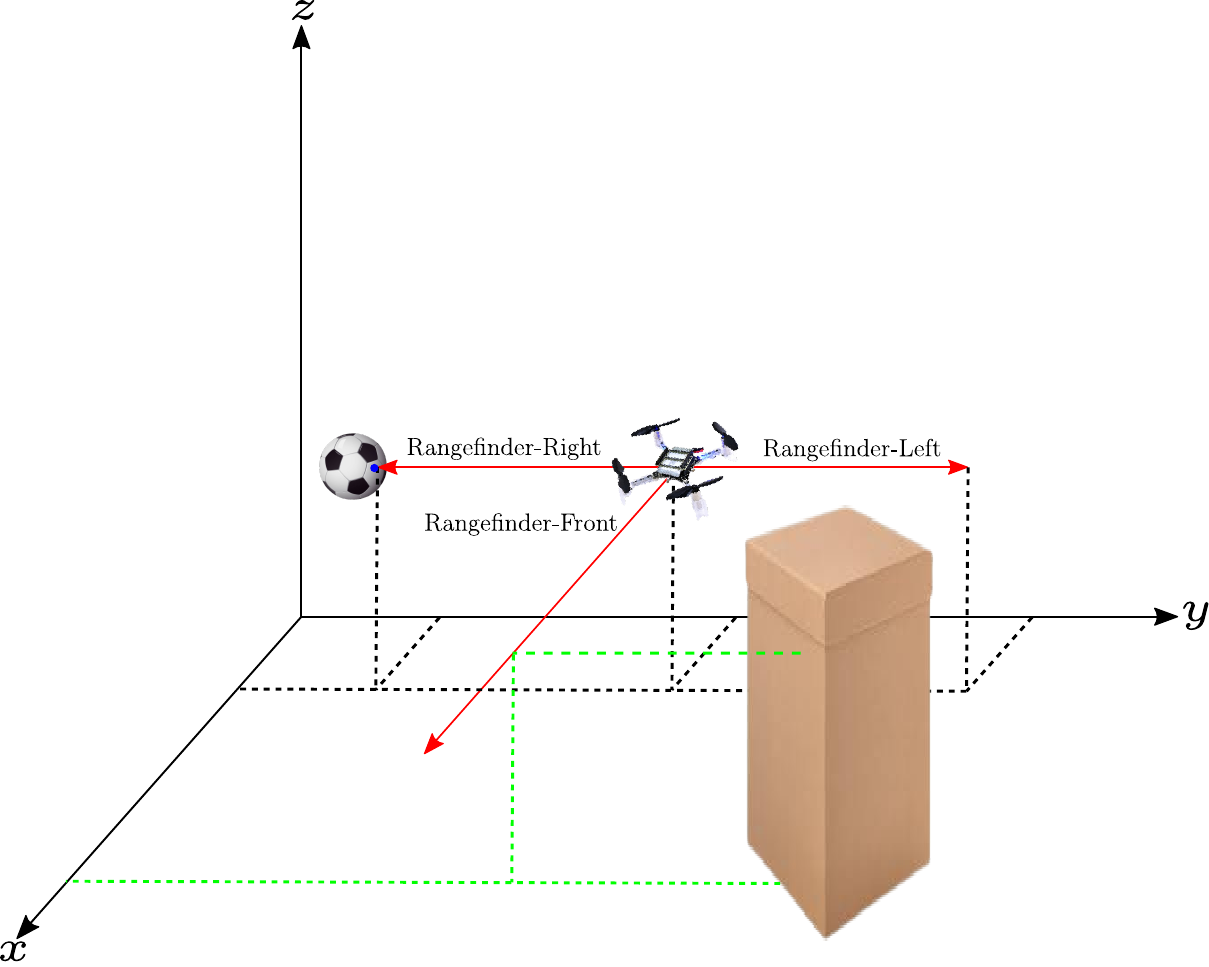}
    \caption{Illustration of the setup: the drone is equipped with three rangefinder sensors (forward, left, and right), shown as red lines. The right sensor detects only the blue part of the ball as an obstacle, while all sensors fail to detect the box, as it is positioned in the drone's blind spot.}
    \label{fig:problem}
\end{figure}

\section{Online Motion Planning Algorithm}\label{sec:algorithm}
This section provides a detailed description of the online motion planning algorithm. To address the problem defined in Section~\ref{sec:problem-definition}, a DRL approach is adopted. DRL is particularly well-suited to the kinodynamic online motion planning problem in dynamic environments where handcrafted rules or classical planners may struggle with real-time decision-making under uncertainty. DRL enables the UAV to learn adaptive navigation strategies directly from interaction with the environment, handling high-dimensional state spaces without requiring an explicit model of the UAV's dynamics or the environment.

The proposed method relies on a model-free DRL framework to handle the problem of incomplete and inaccurate modeling of the environment and obstacle dynamics. Among model-free methods, PPO was selected due to its favorable trade-off between performance and training stability. PPO maintains reliable updates through clipping mechanisms, making it particularly robust for continuous control problems and environments with noisy sensory input, conditions that reflect our UAV’s limited sensing capabilities.
The PPO agent learns a navigation policy that maps behavior grid-based states, derived from low-cost range sensor inputs, to velocity commands, ensuring safe and efficient motion toward the goal. Further implementation details are provided in the following subsections.

\subsection{Behavior Grid Map Algorithm}\label{subsec:behavior_grid_map_algorithm}
Building on the concept of using low-cost hardware with smart software, this paper tackles the challenge of detecting unknown obstacles using only three distance values (forward, left, and right) obtained from low-cost sensors. To address this limitation, a behavior grid map is introduced to determine the robot’s state at each time step $t$ and keep a perception memory of the environment.

To ensure memory efficiency, at each time step $t$, a portion of the environment is represented as a grid map where each cell contains a probability value indicating the likelihood of an obstacle and/or a target. The robot is always located at the center of the grid, and the grid forms around it. The grid size is a fixed value based on the rangefinder's capacity. The grid is updated at each time step according to the robot's movements and sensor readings.

The algorithm for generating the grid map consists of three parts: marking obstacles and the target, updating the grid, and extracting states. These three parts are described in detail below.

\subsection*{Marking Obstacles and the Target}
In this section, the positions of the obstacles and the target are marked in the grid map based on the robot's position and its heading angle.

Consider the driving car perspective (see Figure~\ref{fig:index_obstacle_d}), where the robot is located constantly at position \( \begin{bmatrix} 0 \\ 0 \end{bmatrix} \) in the grid map \( G \in \mathbb{R}^{n \times n} \), where \( n \) is the size of the grid in both dimensions. Let \( \psi_t \) be the robot's heading angle at time step \( t \). Let \( d^{(l)}_{t}, d^{(f)}_{t}, d^{(r)}_{t} \) be the distances measured by the robot's rangefinder sensors to the closest left, forward, and right obstacles at time step \( t \), respectively. Let \( d^{(\iota)}_{t} \) be the distance between the robot and the target at time step \( t \). Define the angles 
$
\gamma^{(l)}_{t} = \psi_t - \bar{\psi}, 
\gamma^{(f)}_{t} = \psi_t, \text{ and }
\gamma^{(r)}_{t} = \psi_t + \bar{\psi}
$ as the angle between the robot's direction and the left, forward, and right closet obstacles at time step \( t \), respectively, and \( \gamma^{(\iota)}_{t} \) as the angle between the robot's direction and the target at time step \( t \). 
With this setup, at each time step \( t \) the map $G$ is constructed by including a Gaussian term $
\mathcal{N}\left(\begin{bmatrix}\mu_i \\ \mu_j\end{bmatrix}^{(k)}_t, \sigma^2\right)
$  with mean \( \begin{bmatrix}\mu_i \\ \mu_j\end{bmatrix}^{(k)}_t\)   (\(k \in \{l, f, r, \iota\} \)) and variance \( \sigma^2 \) at corresponding row and column indices of \( G \) according with the distance measured by the robot’s rangefinder sensors to the obstacles or target. The grid map \( G\) is initialized with all cell values set to \( 0 \), and it is updated for each time step $t$ based on Algorithm~\ref{alg:grid_construction}.

\begin{figure}
    \centering
    \includegraphics[width=0.5\linewidth]{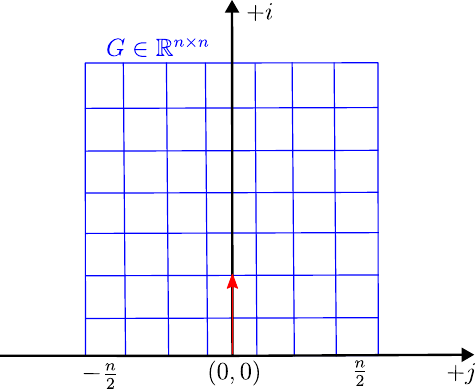}
    \caption{Driving car perspective of the grid map $G$.}
    \label{fig:index_obstacle_d}
\end{figure}

Algorithm~\ref{alg:grid_construction} constructs the behavior grid map by projecting rangefinder measurements from the robot’s local frame into a grid aligned with the robot’s heading. For each detected obstacle and the target, a Gaussian distribution with mean
\( \begin{bmatrix} \mu_i \\ \mu_j \end{bmatrix}^{(k)}_t \) and variance \( \sigma^2 \) is generated and mapped onto the grid. This distribution illustrates the probability of the presence of obstacles and the target. Figure~\ref{fig:mark-obstacle} shows this distribution.

\begin{algorithm}
\small
\caption{Construction of the Behavior Grid Map at Time Step $t$}
\label{alg:grid_construction}
\begin{algorithmic}[1]
\State \textbf{Inputs:} Grid size \( n \), scaling factor \( f \), robot heading \( \psi_t \), variance \( \sigma^2 \),
sensor distances \( d^{(k)}_t \), sensor angles \( \gamma^{(k)}_t \),
 \( k \in \{l \text{ (left)}, f \text{ (front)}, r \text{ (right)}, \iota \text{ (target)}\} \)
\State \textbf{Output:} Behavior grid map \( G_t \in \mathbb{R}^{n \times n} \) at time step $t$

\State Define rotation matrix
\(
R(-\psi_t)=
\begin{bmatrix}
\cos(-\psi_t) & -\sin(-\psi_t) \\
\sin(-\psi_t) & \cos(-\psi_t)
\end{bmatrix}
\)

\For{each \( k \in \{l, f, r, \iota\} \)}
    \State Compute rotated position
    \(
    \begin{bmatrix}
    G_{x} \\
    G_{y}
    \end{bmatrix}^{(k)}_t
    =
    R(-\psi_t)
    \begin{bmatrix}
    d^{(k)}_t \cos(\gamma^{(k)}_t) \\
    d^{(k)}_t \sin(\gamma^{(k)}_t)
    \end{bmatrix}
    \)

    \State Compute grid indices
    \(
    \begin{bmatrix}
    \mu_i \\
    \mu_j
    \end{bmatrix}^{(k)}_t
    =
    \left\lfloor
    f
    \begin{bmatrix}
    G_{x} \\
    G_{y}
    \end{bmatrix}^{(k)}_t
    \right\rfloor
    \)

    \If{\( k \in \{l, f, r\} \)  and \( 0 \leq \mu_i,\mu_j \leq n\)}
        \State Add a positive Gaussian distribution centered at
        \( \begin{bmatrix}
        \mu_i \\
        \mu_j
        \end{bmatrix}^{(k)}_t \) with variance~\( \sigma^2 \)
        \(
        G_t \gets \mathbb{F} (G_t, \mathcal{N}\left(
        \begin{bmatrix}
        \mu_i \\
        \mu_j
        \end{bmatrix}^{(k)}_t, \sigma^2
        \right) )
        \) \Comment{$\mathbb{F}$ is a general cell-wise function that, for example, can be the maximum operator.}
    \EndIf
    \If{\( k = \iota  \)  and \( 0 \leq \mu_i,\mu_j \leq n\)}
        \State Add a negative Gaussian distribution centered at
        \( \begin{bmatrix}
        \mu_i \\
        \mu_j
        \end{bmatrix}^{(k)}_t \) with variance \( \sigma^2 \)
        \(
        G_t \gets \mathbb{F} (G_t, - \mathcal{N}\left(
        \begin{bmatrix}
        \mu_i \\
        \mu_j
        \end{bmatrix}^{(k)}_t, \sigma^2
        \right))
        \)
    \EndIf
\EndFor

\State \Return \( G_t \)
\end{algorithmic}
\end{algorithm}

\begin{figure}
    \centering
    \includegraphics[width=0.5\linewidth]{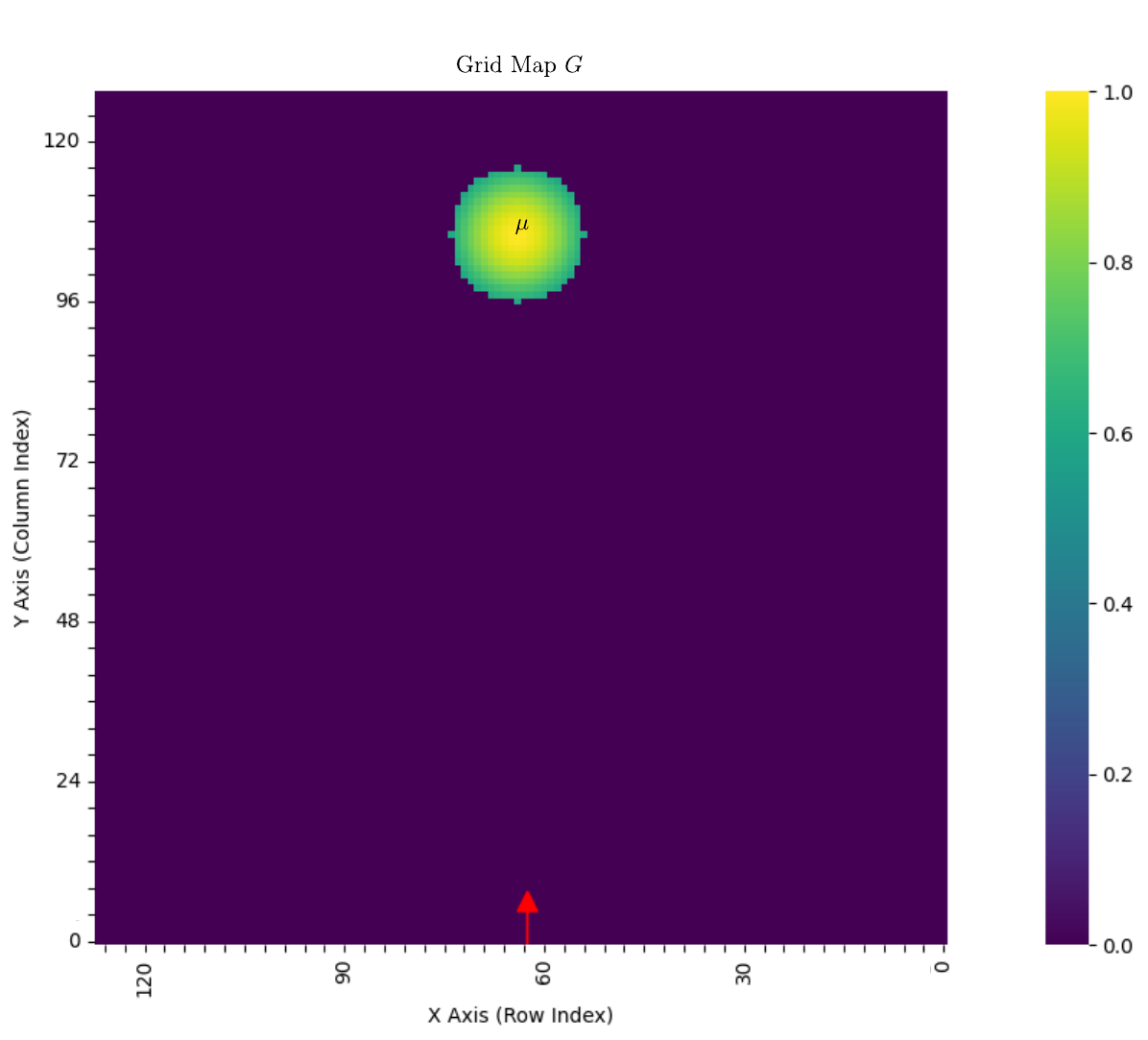}
    \caption{The red arrow indicates the robot’s direction. A normal distribution with mean \( \mu = \begin{bmatrix}
            \mu_i \\
            \mu_j
            \end{bmatrix}^{(f)}_t \) at the location of the obstacle and variance \( \sigma^2\) is formed, representing its presence.}
    \label{fig:mark-obstacle}
\end{figure}

\textcolor{black}{
The function $\mathbb{F}(\cdot,\cdot)$ is a cell-wise aggregation operator that maps the current grid state and newly available sensory information to an updated grid representation. Its first argument is the existing grid map $G_t$, and its second argument is a Gaussian distribution $\mathcal{N}\!\left(\begin{bmatrix}\mu_i \\ \mu_j\end{bmatrix}, \sigma^2\right)$ centered at the projected location of a detected obstacle or target. The operator $\mathbb{F}$ acts element-wise over the grid and can be instantiated, for example, as a maximum operator, i.e., $\mathbb{F}(a,b)=\max(a,b)$, which preserves the highest activation at each cell and reinforces obstacle regions in the map.
}

To distinguish between obstacles and the target, negative values are assigned to cells representing the target, while positive values are assigned to cells representing obstacles. If we consider \( G \) as a landscape, obstacles appear as 'hills' and the target as a 'valley,' guiding the robot to avoid the hills and move toward the valley.

\subsection*{Updating the Grid Map}
To account for the robot's positional changes, the values in the grid map \( G \) must be updated accordingly. The detailed steps are outlined in Algorithm~\ref{algorithm:shift_grid_map}.

\begin{algorithm}
\small
\caption{Grid Map Updating}
\label{algorithm:shift_grid_map}
\begin{algorithmic}[1]
\State \textbf{Inputs:} Grid map \( G_t \in \mathbb{R}^{n \times n} \) at time step $t$.
\State Create a temporary grid map \( \mathbb{G} \in \mathbb{R}^{n \times n} \), initializing its all cell values to \( 0 \).
\For{each cell \( G^{(i,j)}_{t}\) in \( G_t \)}
    \State Compute \( G^{(i,j)}_{t+1} \) using \( G^{(i,j)}_{t} \) in Equation~\eqref{equ:shifting_c}.
    \State
    $
    \mathbb{G}^{(i,j)}_t \gets  \rho G^{(i,j)}_{t+1}
    $ \Comment{Update the corresponding cell value of \( \mathbb{G} \), denoted by \( \mathbb{G}^{(i,j)}_t \), with forgetting factor \( \rho \)}
\EndFor
\State \( G \gets \mathbb{G} \)
\State \Return \( G \)
\end{algorithmic}
\end{algorithm}

Each cell in \( G \), represented by \( G^{(i,j)}_t \) at time step \( t \), is first transformed into a standard perspective at time step \( t \) using the rotation matrix  
$
R(\psi_{t})~=~\begin{bmatrix} 
\cos{(\psi_{t})} & -\sin{(\psi_{t})} \\ 
\sin{(\psi_{t})} & \cos{(\psi_{t})} 
\end{bmatrix}
$ where $\psi_t$ is the robot’s heading angle at time step $t$ and \( i \) and \( j \) are the corresponding column and row numbers of \( G \). The cells after transformation are denoted as \( \bar{G}^{(i,j)}_{t} \) that is given by
\[
\bar{G}^{(i,j)}_t = R(\psi_t) G^{(i,j)}_t, \quad \forall \{i, j\} \in \mathbb{Z}, \quad 0 \leq i, j \leq n.
\]

Let \( \Delta x_t \) and \( \Delta y_t \) represent the changes in the robot's \( x \)- and \( y \)-positions at time step \( t \), respectively:
\[
\Delta x_t = x_t - x_{t-1}, \quad \Delta y_t = y_t - y_{t-1},
\]
where \( x_t, y_t \) are the robot's current positions at time step \( t \), and \( x_{t-1}, y_{t-1} \) are its positions at the previous time step.  

By multiplying the scaling factor \( f \) by these changes, the changes in the grid map \( G \)'s column and row at time step \( t \), denoted as \( \begin{bmatrix} \Delta i \\ \Delta j \end{bmatrix}_t \), are obtained:
\[
\begin{bmatrix} \Delta i \\ \Delta j \end{bmatrix}_t = \left\lfloor f \begin{bmatrix} \Delta x_t \\ \Delta y_t \end{bmatrix} \right\rfloor.
\]

The transformed cell values of \( G \) in the standard perspective at time step \( t+1 \), denoted as \( \bar{G}^{(i,j)}_{t+1} \), is:
\[
\bar{G}^{(i,j)}_{t+1} = \bar{G}^{(i,j)}_{t} - \begin{bmatrix} \Delta i \\ \Delta j \end{bmatrix}_t, \quad \forall \{i, j\} \in \mathbb{Z} \quad 0 \leq i, j \leq n,
\]
where $\bar{G}^{(i,j)}_{t}$ is each cell of the $\bar{G}_t$ at time step $t$.
Next, \( \bar{G}_{t+1} \) must be transferred back to the driving car perspective. To achieve this, the coordinates are rotated using the rotation matrix  
$
R(-\psi_t) = \begin{bmatrix} 
\cos{(-\psi_t)} & -\sin{(-\psi_t)} \\ 
\sin{(-\psi_t)} & \cos{(-\psi_t)} 
\end{bmatrix}
$.

Each cell in \( G \) is associated with a forgetting factor \( \rho \in (0, 1) \), which determines how the probability in each cell decreases over time as \( G \) updates. $G^{(i,j)}_{t + 1}$ is represented the cell values in the grid map $G$ for the next time step $t + 1$. 


\textcolor{black}{
\begin{proposition}
\label{prop:grid_update}
Let \( G_t \in \mathbb{R}^{n \times n} \) denote the robot-centered grid map at time \( t \), and let \( \Delta x_t, \Delta y_t \) and \( \psi_t \) represent the translational and rotational motion of the UAV between time steps \( t \) and \( t+1 \). Then, the grid update rule
\begin{equation}
\label{equ:shifting_c}
G_{t+1}^{(i,j)} = \rho \left[ \, R(-\psi_t)\, \bar{G}_{t+1} \,\right]^{(i,j)}, 
\quad \forall (i,j) \in \{0,\dots,n\}^2,
\end{equation}
correctly represents the evolution of the environment in the robot-centered frame, where \( \bar{G}_{t+1} \) is obtained by expressing \( G_t \) in the world-aligned frame and applying the translational shift induced by \( (\Delta x_t, \Delta y_t) \), and \( \rho \in (0,1) \) is an element-wise forgetting factor.
\end{proposition}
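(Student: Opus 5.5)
The plan is to read the proposition as a statement about coordinates: the grid stores, in robot-centered heading-aligned coordinates, the values of a scalar field that is fixed in the world frame, scaled by $\rho$ per step. Write $p$ for the world position of a static feature (a Gaussian peak marked by Algorithm~\ref{alg:grid_construction}) and $\xi_t=(x_t,y_t)$ for the robot position. By construction of Algorithm~\ref{alg:grid_construction}, the index of that feature at time $t$ is $\lfloor f\,R(-\psi_t)(p-\xi_t)\rfloor$, since the sensor returns $d^{(k)}_t$ along $\gamma^{(k)}_t$. The claim to prove is therefore: if cell $(i,j)$ of $G_t$ carries the value associated with world point $p$, then after the update (\ref{equ:shifting_c}) the value $\rho\,G_t^{(i,j)}$ sits at index $\lfloor f\,R(-\psi_{t+1})(p-\xi_{t+1})\rfloor$, up to rounding. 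Here the heading used to rotate back is the one appropriate for the new robot-centered frame.

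The steps are as follows. First, apply $R(\psi_t)$ to a cell coordinate $c=fR(-\psi_t)(p-\xi_t)$. This gives $\bar c_t=f(p-\xi_t)$, which is world-aligned and robot-centered. Second, subtract $f(\Delta x_t,\Delta y_t)^T$ to get $\bar c_{t+1}=f(p-\xi_{t}-(\xi_{t+1}-\xi_t))=f(p-\xi_{t+1})$. This is exactly the world-aligned offset of $p$ from the new robot position, which shows that the translation step reproduces rigid motion of the environment relative to the robot. Third, rotate back with $R(-\psi)$ to obtain $fR(-\psi)(p-\xi_{t+1})$, which matches the marking rule of Algorithm~\ref{alg:grid_construction} at time $t+1$. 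Finally, note that the map is linear in values and that $\mathbb{F}=\max$ commutes with a positive scaling. Multiplying by $\rho$ then just applies the stated forgetting to every cell uniformly. This gives geometric decay $\rho^k$ of information that is $k$ steps old, which is the intended evolution for a robot-centered memory.

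The two expected obstacles are rounding and heading bookkeeping. For rounding, the floor in $\lfloor f\Delta\xi_t\rfloor$ and the discrete rotation of indices introduce an error. I would bound it per step by $O(1)$ cells ($\sqrt2$ for the floor, plus $\sqrt2$ from rounding the rotated index). The statement would then be read as "correct up to grid resolution $1/f$". I would either accumulate this as a drift bound over $k$ steps, or note that features decay like $\rho^k$, so errors older than a few steps are negligible. For heading bookkeeping, the proposition writes $R(-\psi_t)$ while the new frame has heading $\psi_{t+1}$. I would handle this by interpreting $\psi_t$ in (\ref{equ:shifting_c}) as the heading at which the next marking occurs, or equivalently by assuming the heading update is absorbed before the back-rotation. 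The exact identity in the second step holds only under that reading, and I would state it explicitly as the convention under which "correctly represents" is meant.
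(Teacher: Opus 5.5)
Your proposal is correct and follows the same decomposition as the paper's proof. In both, $G_t$ is expressed in a world-aligned frame, shifted by the discretized displacement $\lfloor f(\Delta x_t,\Delta y_t)\rfloor$, rotated back into the robot frame, and scaled by $\rho$. The paper argues this only qualitatively, whereas you track a world point $p$ explicitly to the marking index $\lfloor fR(-\psi_{t+1})(p-\xi_{t+1})\rfloor$, which makes the claim checkable. Doing so also brings out two points the paper's proof passes over: the $O(1)$-cell rounding error per step, and the fact that the forward rotation must use the heading in which $G_t$ was expressed while the back-rotation must use the new heading. If the same angle is used in both, as the paper's notation $R(\psi_t)$, $R(-\psi_t)$ suggests, the update is exact only when the heading does not change.
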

}

\begin{proof}
We show that the update rule preserves a consistent robot-centered representation of the environment under the UAV motion.

Consider a grid map \( G_t \) expressed in the robot-centered frame at time \( t \). By construction, the robot is located at the center of the grid, and all obstacle information is encoded relative to this reference.

The update process consists of three transformations. First, the grid map \( G_t \) is rotated by \( R(\psi_t) \) to align it with a world-aligned frame. This removes the dependence on the robot’s heading and allows subsequent positional changes to be applied in a common reference frame.

Second, the translational motion of the UAV between time steps \( t-1 \) and \( t \), given by \( (\Delta x_t, \Delta y_t) \), is mapped to discrete grid shifts \( \begin{bmatrix} \Delta i \\ \Delta j \end{bmatrix}_t \). Applying this shift to the rotated grid translates all cells in the opposite direction of the robot’s motion. This operation is equivalent to keeping the robot fixed at the grid origin while updating the relative positions of environmental features.

Third, the shifted grid is rotated back to the robot-centered frame using \( R(-\psi_t) \), restoring the representation relative to the current UAV heading. Finally, the forgetting factor \( \rho \in (0,1) \) is applied element-wise to model temporal decay of outdated information.

These transformations collectively ensure that the relative positions of obstacles with respect to the UAV are updated consistently across time steps. Therefore, the update rule in Equation~\eqref{equ:shifting_c} correctly computes \( G_{t+1} \) from \( G_t \), preserving a coherent robot-centered representation of the environment.
\end{proof}


\begin{proposition}
The grid map \( G \) is resilient to noise in the sense that if any cell of \( G \) deviates from zero due to noise, its value will converge to zero over time, provided that the robot detects no obstacles or targets in that cell and the forgetting factor \( \rho \) lies within the interval \((0, 1)\).
\end{proposition}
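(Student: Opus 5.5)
The plan is to follow the value carried by a noisy cell through repeated applications of the update rule in Proposition~\ref{prop:grid_update} and show it is multiplied by $\rho$ at every step. Fix a time $t_0$ at which some cell (or, more precisely, the environmental location it represents) holds a spurious value $\epsilon := G_{t_0}^{(i,j)} \neq 0$. I track this value along its transported trajectory. By Proposition~\ref{prop:grid_update}, the rotation $R(\psi_t)$, the shift by $(\Delta i,\Delta j)_t$ and the back-rotation $R(-\psi_t)$ only relocate cell contents without altering them. Denote by $(i_t,j_t)$ the cell occupied at time $t$ by the content that started in $(i,j)$ at $t_0$. The hypothesis that the robot detects no obstacle or target there means that Algorithm~\ref{alg:grid_construction} adds no new Gaussian mass at $(i_t,j_t)$ for $t \ge t_0$.

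With $\mathbb{F}$ taken as the maximum operator, the marking step then leaves that cell unchanged. For $\epsilon>0$ this holds up to the negligible tail of Gaussians centered elsewhere; I would assume the kernel is truncated, or that the hypothesis means the cell receives no contribution at all. The update step multiplies the value by $\rho$, so by induction
\[
G_{t_0+k}^{(i_k,j_k)} = \rho^{k}\,\epsilon, \qquad k \ge 0 .
\]
Since $\rho\in(0,1)$, it follows that $|G_{t_0+k}^{(i_k,j_k)}| = \rho^k|\epsilon| \to 0$ geometrically. This gives convergence to zero, and also the explicit rate: the value falls below any tolerance $\delta$ after $k \ge \log(\delta/|\epsilon|)/\log\rho$ steps. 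If instead the transported content leaves the grid window, it is discarded, because the temporary grid $\mathbb{G}$ is initialized to zero. In that case the value becomes exactly zero, which is consistent with the claim.

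The main obstacle is bookkeeping under discretization rather than the geometric decay itself. The floor in $\lfloor f[\Delta x_t,\Delta y_t]^T\rfloor$ and the rotations on integer indices can map two source cells to one target cell, or leave a target cell unfilled. I would handle this by noting that every target cell receives either zero or $\rho$ times some source value. Hence $\max_{(i,j)\in S_t}|G_t^{(i,j)}|$, taken over the set $S_t$ of cells not receiving new detections, is nonincreasing by a factor $\rho$ per step. That gives $\|G_{t_0+k}|_{S}\|_\infty \le \rho^k \|G_{t_0}\|_\infty \to 0$, which is the cleanest form of the statement and avoids tracking individual cell identities.

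A second subtlety is negative noise when $\mathbb{F}=\max$, since $\max(a,0)$ would erase such a value immediately. I would remark that this only strengthens the conclusion.
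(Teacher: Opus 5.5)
Your argument is correct and is the same as the paper's: in the absence of detections every cell value is multiplied by $\rho\in(0,1)$ at each update (Equation~\eqref{equ:shifting_c}) and so decays geometrically to zero. Tracking the transported content, noting that content leaving the window is discarded, and handling $\mathbb{F}=\max$ are extra care the paper omits, but one fix is needed in your uniform bound: $S$ must be the set of cells whose entire transport history since $t_0$ avoided detections (closed under preimages), because a cell that merely receives no detection at time $t+1$ can inherit $\rho$ times a freshly marked value, so $\max_{S_t}|G_t|$ as you literally define it need not contract.
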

\begin{proof}
    If the robot detects an obstacle or a target, some cells in $G$ will have values different from zero. When no obstacles or targets are detected, at each time step, the value of $G$ is multiplied by the forgetting factor $\rho \in (0,1)$, ensuring that each cell value gradually converges to zero (see Equation~\ref{equ:shifting_c}). Convergence to zero implies that any detection noise is progressively eliminated over time.
\end{proof}

Another advantage of the forgetting factor \( \rho \in (0, 1) \) is that even when two obstacles are closely spaced and their Gaussian responses partially overlap, this does not lead to a permanently impassable region. Since the grid map encodes probabilistic occupancy rather than binary obstacles, overlapping Gaussian distributions represent increased uncertainty instead of absolute blockage. Moreover, in the absence of persistent detections, the forgetting factor causes these elevated cell values to decay over time, preventing false gap closure and enabling the policy to explore and traverse feasible passages.

\subsection*{Extracting States}
The information from $G$ will be used as states in the DRL. Following the methodologies in~\cite{van2014scikit}, to reduce memory usage, $G$ is down-scaled to a smaller map $g$ using block reduction with the mean function and then flattened. An example of grid map $G$ and down-scaled grid map $g$ is depicted in Figure~\ref{fig:extracted_states}.

\begin{figure}
    \centering
    \includegraphics[width=1\linewidth]{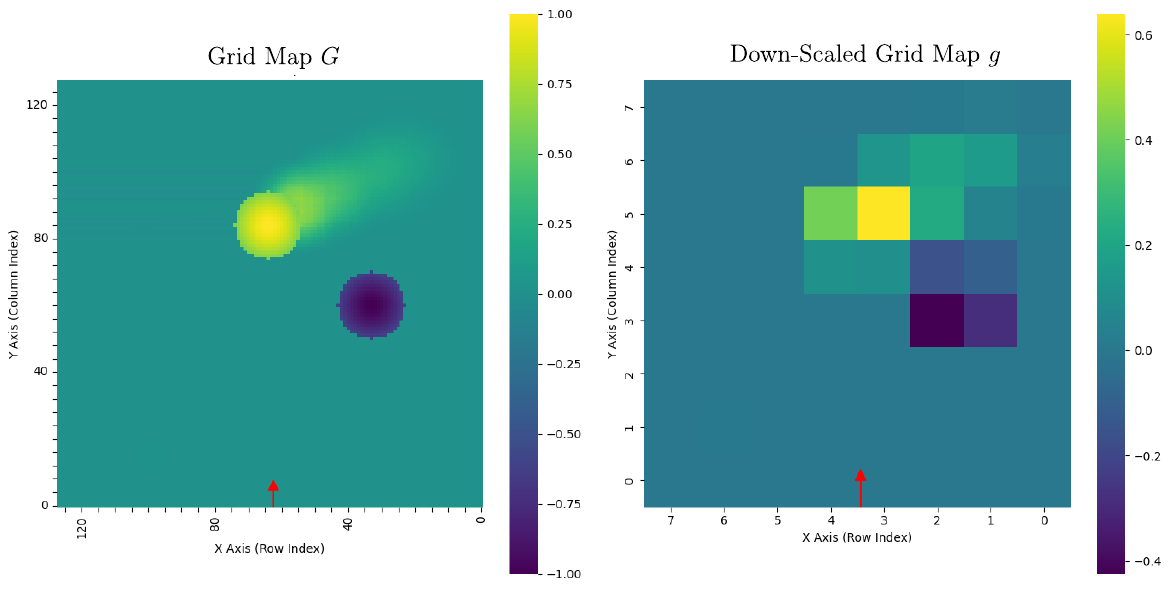}
    \caption{An example of grid map $G$ (left) and down-scaled grid map $g$ (right).}
    \label{fig:extracted_states}
\end{figure}

\textcolor{black}{Figure~\ref{fig:path_steps} illustrates several steps along an example path taken by the drone, with their corresponding behavior grid map. Each subfigure shows the environment on the left; the behavior grid map, which represents the drone's internal state, in the middle; and the down-scaled behavior grid map on the right. The steps shown are selected from the entire trajectory but are not necessarily consecutive time steps; for example, the first figure corresponds to time step zero, while the next may correspond to later steps. In each environment image (left side), the red dot represents the start point, gray circles indicate dynamic obstacles, gray lines indicate static obstacles, the blue line shows the path traversed from the start to the current position, and the drone is marked by a black cross with a colored arrow indicating its orientation. The red arrow denotes the drone's direction in the behavior grid map and the down-scaled behavior grid map. The circular obstacles are moving, directly influencing the shape of the generated path.}

\begin{figure}
  \centering
  \begin{subfigure}{0.6\linewidth}
    \centering
    \includegraphics[width=1\linewidth]{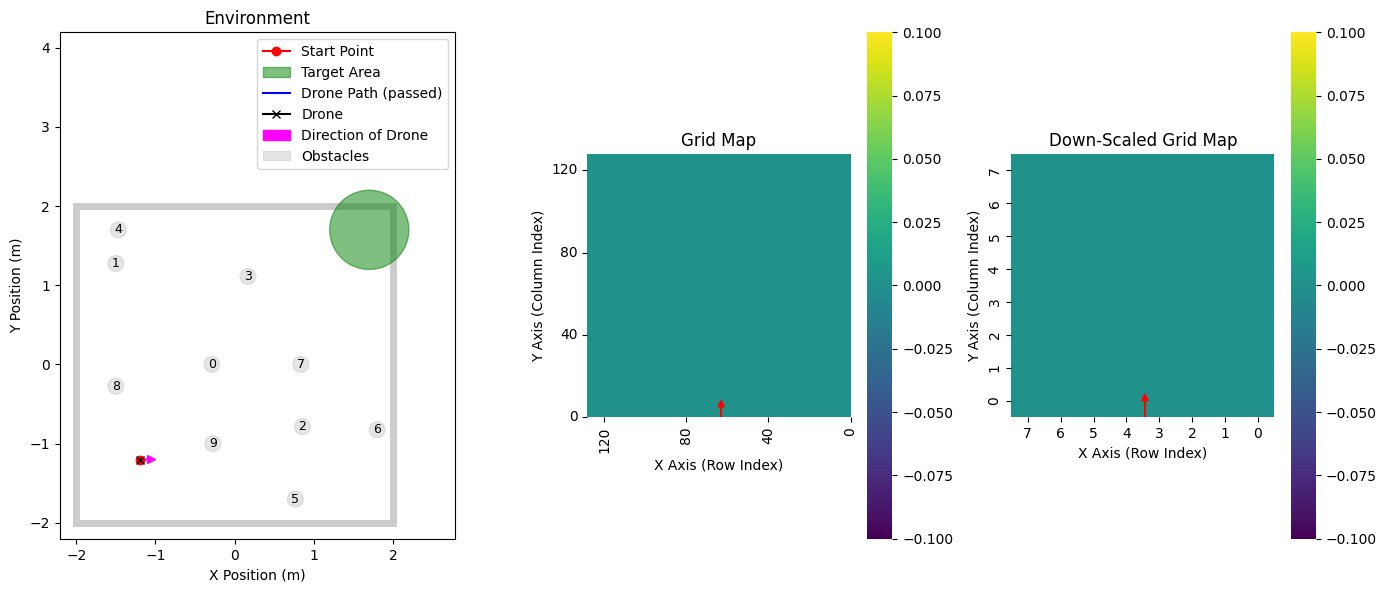} 
  \end{subfigure}
  \begin{subfigure}{0.6\linewidth}
    \centering
    \includegraphics[width=1\linewidth]{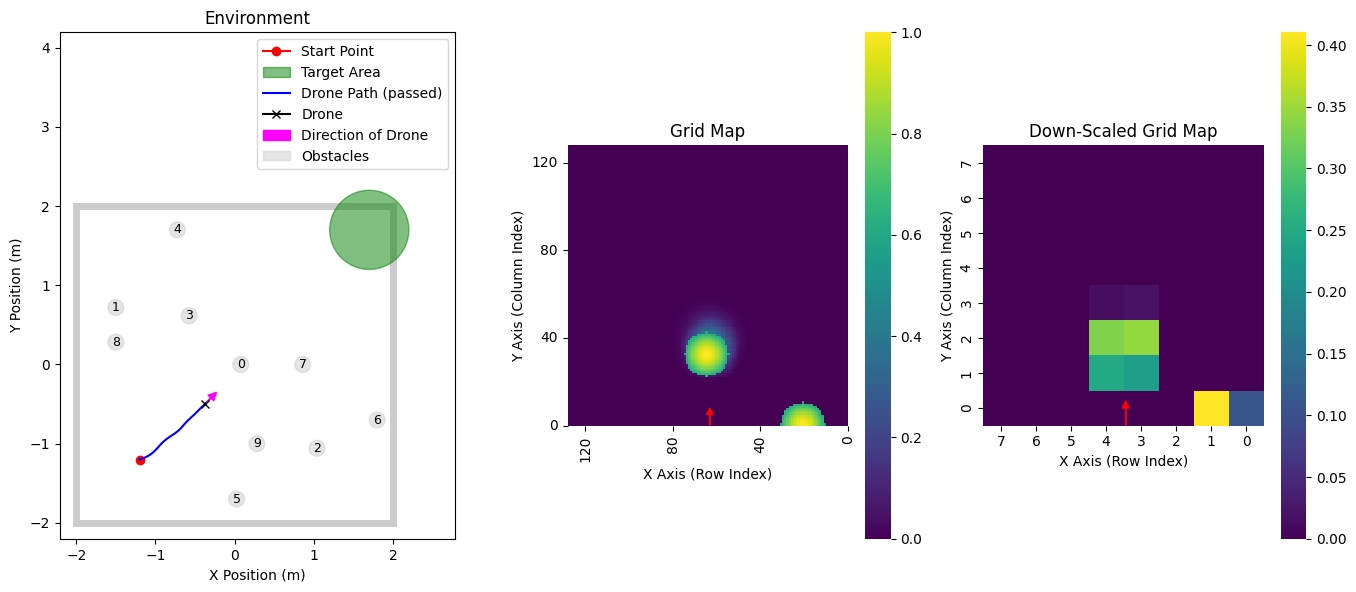} 
  \end{subfigure} 
  \begin{subfigure}{0.6\linewidth}
    \centering
    \includegraphics[width=1\linewidth]{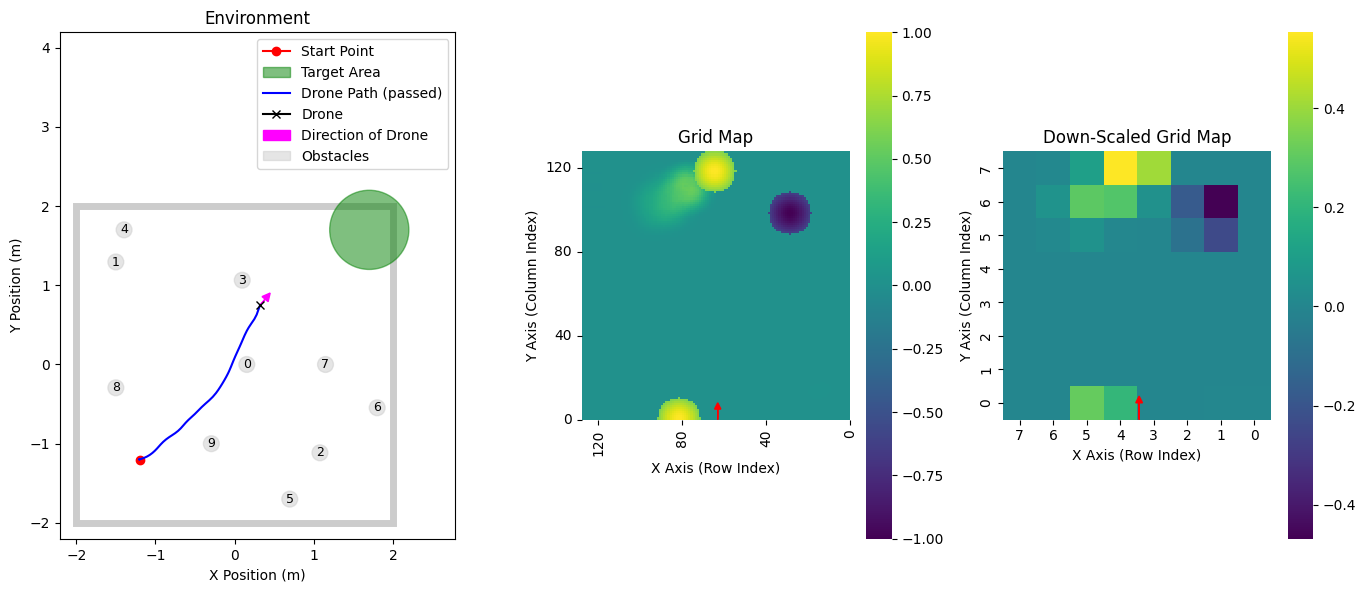} 
  \end{subfigure}
  \begin{subfigure}{0.6\linewidth}
    \centering
    \includegraphics[width=1\linewidth]{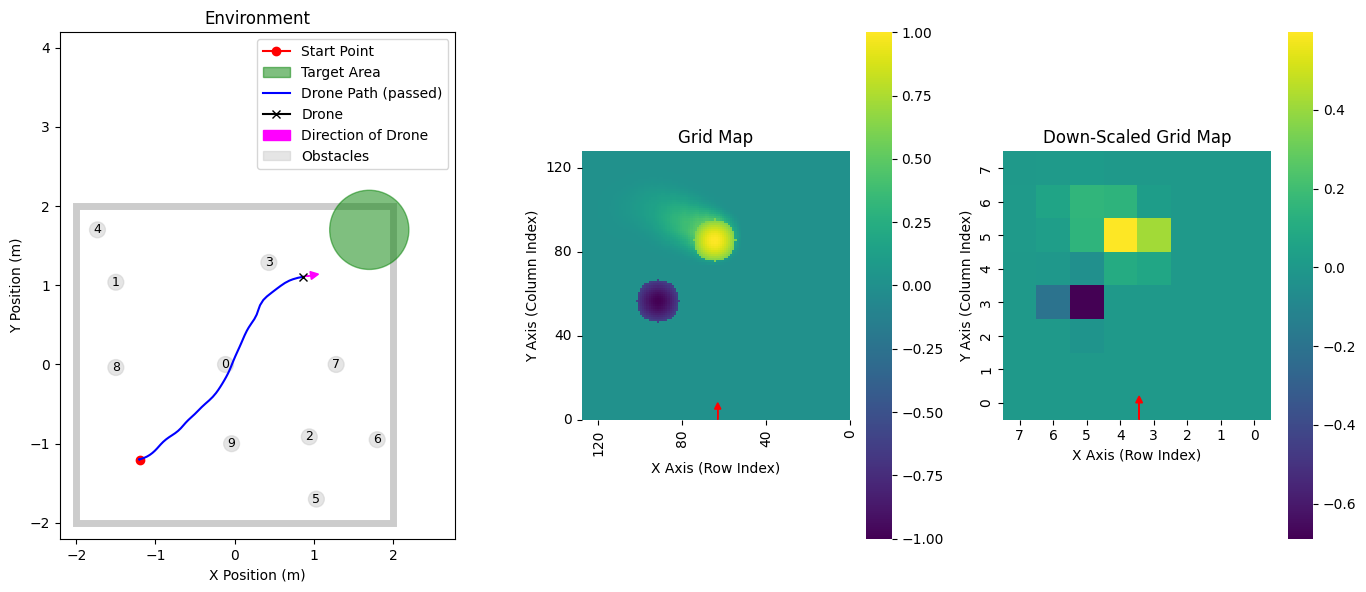} 
  \end{subfigure}
  \begin{subfigure}{0.6\linewidth}
    \centering
    \includegraphics[width=1\linewidth]{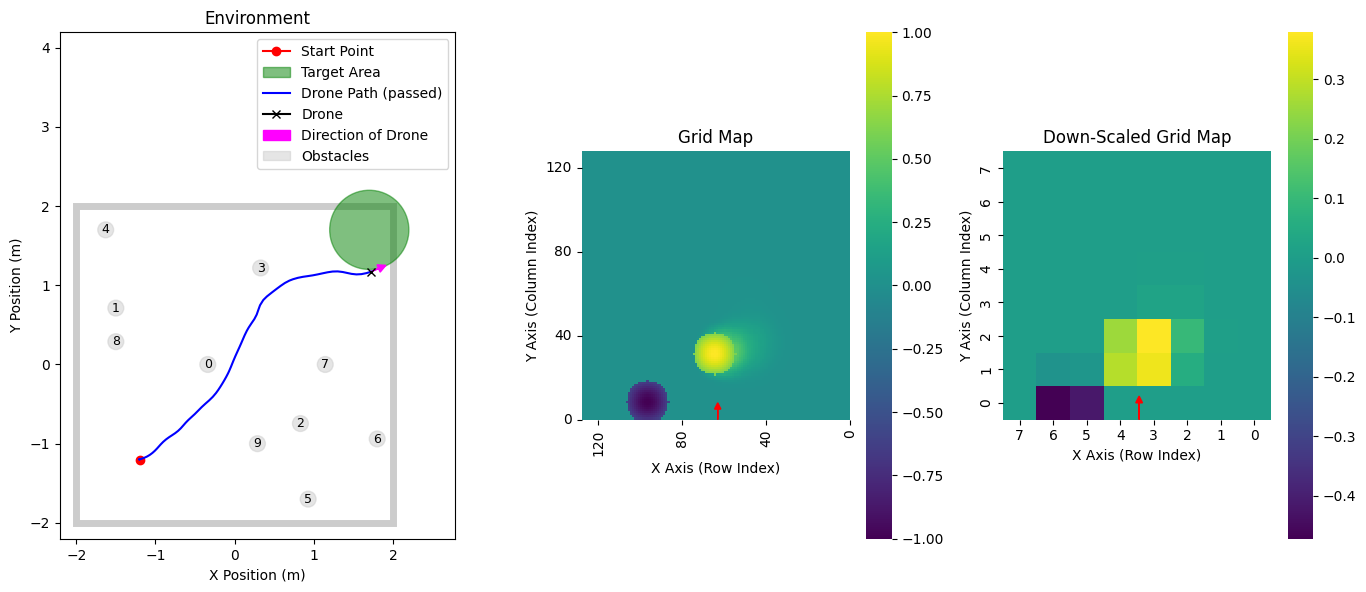} 
  \end{subfigure}  
  \caption{\textcolor{black}{Selected path steps showing an example of a UAV’s environment (left) and behavior grid map (right) during navigation. The circular obstacles are moving, directly influencing the shape of the generated path. }}
  \label{fig:path_steps}
\end{figure}

\subsection{Proximal Policy Optimization (PPO)}
The PPO algorithm with fixed-length trajectory segments is presented in Algorithm~\ref{algorithm:ppo}. In each iteration, each of the $N$ parallel actors collects $T$ timesteps of data. Using $N$ parallel actors serves two purposes: first, it accelerates experience collection, reducing overall training time by a factor proportional to $N$. Second, it gathers diverse trajectories from different environmental states, breaking data correlation and stabilizing policy training. The collected transitions $<s_t, a_t, r_t, s_{t+1}>$, where $t$  is the timestep index, $s_t$ is the state, $a_t$ is the action taken, $r_t$ is the received reward, and $s_{t+1}$ is the next state, are stored in trajectory memory and used to compute advantage estimates $\hat{A}_t$ using Generalized Advantage Estimation (GAE) with discount factor $\Gamma$ and GAE parameter $\lambda$, which controls the trade-off between bias and variance in the advantage estimates.

The surrogate loss is then formulated based on these $NT$ timesteps and optimized over $K$ epochs using the ADAM optimizer. In each epoch, the data is randomly shuffled and divided into minibatches of size $M$, where each minibatch is used to perform a policy update step by minimizing the clipped surrogate loss function. The value function is updated simultaneously by minimizing the squared error between predicted and target values.

In this algorithm, the surrogate loss $L_{t}(\theta)$ at timestep $t$ as a function of the parameter $\theta$ is defined as
\[
\begin{aligned}
  L_{t}(\theta) &= \hat{\mathbb{E}}_t \big[ 
    L_{t}^{\text{CLIP}}(\theta) - \zeta_1 L_{t}^{\text{VF}}(\theta) + \zeta_2 S[\pi_\theta](s_t) 
  \big],
\end{aligned}
\]
where $\zeta_1$ and $\zeta_2$ are coefficients, $\hat{\mathbb{E}}_t \left[ \dots \right]$ denotes the empirical average over a finite batch of samples, $S$ represents an entropy bonus, $\pi_{\theta}$ is a stochastic policy with parameter $\theta$, and $L_{t}^{\text{VF}}(\theta)$ is a squared-error loss calculated according to
\[
    L_{t}^{\text{VF}}(\theta) = \left(V_{\theta}(s_{t}) - V_{t}^{\text{target}}\right)^2
\]
where $V_{\theta}(s_{t})$ is predicted value of state $s_t$ and $V_{t}^{\text{target}}$ is a target value for the state-value function. Clipped surrogate loss, $L_{t}^{\text{CLIP}}$, is given by
\[
\begin{aligned}
L_{t}^{\text{CLIP}}(\theta) &= \hat{\mathbb{E}}_t \Big[ 
  \min\big(
    r_t(\theta) \hat{A}_t, \text{clip}(r_t(\theta), 1 - \epsilon, 1 + \epsilon) \hat{A}_t
  \big)
\Big],
\end{aligned}
\]
where $\epsilon = 0.2$ is a hyperparameter, $r_t(\theta) = \frac{\pi_{\theta}(a_t| s_t)}{\pi_{\theta_{\text{old}}}(a_t | s_t)}$ is the probability ratio,  $\pi_{\theta}(a_t| s_t)$ is the probability of taking action $a_t$ in state $s_t$ according to the current policy with parameters $\theta$, $\pi_{\theta_{\text{old}}}(a_t | s_t)$ is the probability of taking action $a_t$ in state $s_t$ according to the old policy with parameters $\theta_{\text{old}}$, $a_t$ is the action at timestep $t$, $\text{clip}(\cdot)$\footnote{The function $\text{clip}(x,a,b)$ limits $x$ to the interval $[a,b]$, i.e., $\text{clip}(x,a,b)=\min(\max(x,a),b)$.} denotes the clipping operator, and $\hat{A}_t$ is an estimator of the advantage function at timestep $t$, computed as
\[
\hat{A}_t = \delta_t + (\Gamma \lambda) \delta_{t+1} + \cdots + (\Gamma \lambda)^{T - t + 1} \delta_{T-1},
\]
where $t$ specifies the time index in $[0, T]$ within a given length-$T$ trajectory segment, $\Gamma$ is the discount factor, $\lambda$ is the Generalized Advantage Estimation (GAE) parameter, and $\delta_t$ is computed according to
\[
\delta_t = r_t + \Gamma V(s_{t+1}) - V(s_t),
\]
where $r_t$ is the reward received at timestep $t$, $V(s_{t+1})$ is the learned state-value function's prediction for the value of the next state $s_{t+1}$, and $V(s_t)$ is the learned state-value function's prediction for the value of the current state $s_t$~\cite{schulman2017proximal}.

\begin{algorithm}
\small
\caption{Proximal Policy Optimization with Clipped Surrogate Objective and GAE~\cite{schulman2017proximal}}
\label{algorithm:ppo}
\begin{algorithmic}
\For{iteration = 1, 2, ...}
    \For{actor = 1, 2, ..., N}
        \State Run policy $\pi_{\theta_{\text{old}}}$ in environment for $T$ timesteps
        \State Compute advantage estimates $\hat{A}_1, ..., \hat{A}_T$
    \EndFor
    \State Optimize surrogate $L$ w.r.t. $\theta$, with $K$ epochs and minibatch size $M \leq NT$
    \State $\theta_{\text{old}} \leftarrow \theta$
\EndFor
\end{algorithmic}
\end{algorithm}

\begin{proposition}
\label{prop:computational_complexity}
The per-iteration computational complexity of the proposed framework, consisting of behavior grid map construction, grid map updating, and policy optimization using PPO, is given by
\[
\mathcal{O}\!\left(
NT \left( C_{\text{env}} + C_{\text{grid}} \right)
+ K \frac{NT}{M} C_{\text{net}}
\right),
\]
where \( N \) is the number of parallel actors, \( T \) is the rollout horizon, \( K \) is the number of PPO optimization epochs, and \( M \) is the minibatch size. Here, \( C_{\text{env}} \) denotes the cost of one environment step, \( C_{\text{grid}} \) denotes the cost of grid map construction and updating per time step, and \( C_{\text{net}} \) denotes the cost of forward and backward passes through the policy and value networks.
\end{proposition}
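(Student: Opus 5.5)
The plan is to decompose one PPO iteration (Algorithm~\ref{algorithm:ppo}) into its two sequential phases, a rollout phase and an optimization phase, bound each separately, and add the two bounds. Everything is counted in the unit costs $C_{\text{env}}$, $C_{\text{grid}}$ and $C_{\text{net}}$ defined in the statement. Costs that are at most constant multiples of these are absorbed into the $\mathcal{O}$, for example forward passes or the computation of $\hat{A}_t$.

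\textbf{Rollout phase.} Each of the $N$ actors runs $T$ timesteps. At every timestep three things happen:
\begin{itemize}
\item one environment step is executed, at cost $C_{\text{env}}$;
\item the grid is marked by Algorithm~\ref{alg:grid_construction}, which loops over a constant number (four) of sources and writes at most $n^2$ cells each;
\item the grid is shifted, rotated and decayed by Algorithm~\ref{algorithm:shift_grid_map}, i.e.\ Equation~\eqref{equ:shifting_c}, which touches each of the $n^2$ cells a constant number of times.
\end{itemize}
Both grid algorithms, together with the block-mean down-scaling to $g$, are therefore $\mathcal{O}(n^2)$, and I will define $C_{\text{grid}}$ to cover all of them. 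The action selection needs one forward pass through the policy network. I will argue this costs at most $C_{\text{net}}$, and in practice is dominated by the other per-step costs or can be folded into $C_{\text{env}}$. Summing over $NT$ timesteps gives $\mathcal{O}(NT(C_{\text{env}}+C_{\text{grid}}))$.

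\textbf{Advantage estimation.} The advantages $\hat{A}_t$ are computed by the backward recursion $\hat{A}_t=\delta_t+\Gamma\lambda\hat{A}_{t+1}$. Each $\delta_t$ needs only value predictions, which can be cached from the rollout. This makes the cost $\mathcal{O}(T)$ per actor, so $\mathcal{O}(NT)$ in total, which is dominated by the rollout term.

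\textbf{Optimization phase.} In each of the $K$ epochs, the $NT$ samples are shuffled in $\mathcal{O}(NT)$ time and split into $\lceil NT/M\rceil$ minibatches. Each minibatch step requires forward and backward passes of the policy and value networks, at cost $C_{\text{net}}$ per the statement's convention (per minibatch). The ADAM update is linear in the number of parameters and therefore bounded by the backward pass. The phase thus costs $\mathcal{O}(K\frac{NT}{M}C_{\text{net}} + KNT)$. The shuffling term $KNT$ is absorbed because $C_{\text{net}}$ for a minibatch of size $M$ is at least $\Omega(M)$.

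\textbf{Main obstacle.} The hardest part is making the unit-cost conventions consistent. Specifically, I must justify that:
\begin{itemize}
\item $C_{\text{net}}$ is a per-minibatch cost, so that the factor $M$ cancels correctly;
\item the rollout-time forward passes do not introduce an extra $NT\,C_{\text{net}}$ term.
\end{itemize}
My first approach is to state these conventions explicitly: per-sample inference is included in $C_{\text{env}}$, or is noted to be at most a single-sample forward pass, which is cheaper than a minibatch update. If that is not acceptable, I would add the term and remark that it is subsumed whenever $K/M\ge 1/\text{(batch factor)}$. With these conventions fixed, summing the two phases yields the claimed bound.
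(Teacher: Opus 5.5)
Your proposal is correct and follows the paper's proof. Both count $NT$ rollout steps, each costing $C_{\text{env}}+C_{\text{grid}}$ with both grid algorithms absorbed into $C_{\text{grid}}$, and add $K\cdot NT/M$ minibatch updates at $C_{\text{net}}$ each. Your extra bookkeeping (GAE recursion, shuffling, rollout-time action inference) goes beyond the paper, which omits these terms; for inference, note that a single-sample forward pass costs $\mathcal{O}(C_{\text{net}}/M)$, so the $NT$ passes total $\mathcal{O}(NT\,C_{\text{net}}/M)$, dominated by the optimization term since $K\ge 1$ — being merely cheaper than one minibatch update does not give this.
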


\begin{proof}
During each PPO iteration, data collection is performed by running \( N \) actors for \( T \) time steps, resulting in \( NT \) environment interactions. For each time step, the behavior grid map is constructed using Algorithm~\ref{alg:grid_construction}, which involves a constant number of obstacle and target updates (\( k \in \{l,f,r,\iota\} \)). Each update consists of fixed-size matrix operations and Gaussian evaluations over the grid, yielding a per-step cost denoted by \( C_{\text{grid}} \).

Subsequently, the grid map is updated using Algorithm~\ref{algorithm:shift_grid_map}, which processes each cell of the \( n \times n \) grid once. Since the grid size is fixed throughout training, this operation contributes a constant cost per time step and is absorbed into \( C_{\text{grid}} \).

Therefore, the total cost of data collection and grid processing per iteration scales as
\(
\mathcal{O}\big(NT (C_{\text{env}} + C_{\text{grid}})\big).
\)

Following data collection, policy optimization is carried out using Algorithm~\ref{algorithm:ppo}. The surrogate objective is optimized for \( K \) epochs over minibatches of size \( M \), resulting in \( K \cdot \frac{NT}{M} \) gradient updates. Each update involves forward and backward passes through the policy and value networks, incurring a cost of \( C_{\text{net}} \) per update. This yields an optimization cost of
\(
\mathcal{O}\!\left(K \cdot \frac{NT}{M} \cdot C_{\text{net}}\right).
\)

Combining the costs of data collection, grid map processing, and policy optimization yields the stated per-iteration computational complexity.
\end{proof}

\subsection{Application PPO for Drone Online Motion Planning}
This section details the use of the PPO for drone online motion planning. The three key components of the PPO algorithm are actions, states, and rewards, which will be discussed in the following sections.

\subsection*{Actions}
Based on Section~\ref{sec:problem-definition}, the robot follows the defined UAV dynamics with a constant velocity in the \( x \)-direction. Consequently, the control input is defined in terms of the angular velocity \( r \). \textcolor{black}{The action space is defined as a continuous interval of admissible angular velocities, $\mathcal{A} = [r_{\min}, r_{\max}],$ where $r_{\min}$ and $r_{\max}$ represent the lower and upper bounds of the angular velocity commands (in radians per second).}

\subsection*{States}
At each time step \( t \), the grid \( G \) shifts according to Algorithm~\ref{algorithm:shift_grid_map}. Then, the positions of the obstacles and the target are marked following Algorithm~\ref{alg:grid_construction}. Finally, the states are extracted from the behavior grid map. \textcolor{black}{Additionally, the state vector incorporates four additional parameters to know the agent's global awareness. These include the Euclidean distance between the robot and the target, denoted as $\ell(t)$, and the absolute angular difference between the robot's heading and the target, denoted as $\Psi(t)$, both of which are scaled to the range $[0, 1]$. To accelerate the training process and provide the model with temporal context, the values from the previous time step, $\ell(t-1)$ and $\Psi(t-1)$, are also included.} The PPO states are thus composed of the extracted states that define the grid map along with these \textcolor{black}{four} parameters.

\textcolor{black}{Two PPO versions are implemented in this work. The primary framework is a PPO-based navigation system that incorporates the proposed behavior grid map representation as part of the state space. To evaluate the contribution of the grid map, a second PPO version is also implemented for comparison, where the grid map is excluded from the state representation. In this comparison version, the state vector consists of the four global parameters together with six parameters derived from range sensor measurements. These measurements provide obstacle distances in the left, right, and front directions at both the current time step ($t$) and the previous time step ($t-1$), thereby preserving temporal consistency in the state representation.}

\subsection*{Rewards}

\textcolor{black}{
The reward function at time step \( t \) is defined as
\[
R(t) =
\begin{cases}
+\bar{R}, & \text{if the robot reaches the target,} \\
-\bar{R}, & \text{if the robot exits the workspace or collides,} \\
c_1 \nu_{\text{dist}} + c_2 \nu_{\text{angle}} + c_3 F(t), & \text{otherwise,}
\end{cases}
\]
where \( R(t) \in [-\bar{R}, \bar{R}] \), and the coefficients satisfy \( c_1, c_2, c_3 \ge 0 \) with \( c_1 + c_2 + c_3 = 1 \). 
The reward function is designed to promote progress toward the target while maintaining safe navigation. The distance component penalizes lack of progress:
\[
\nu_{\text{dist}} =
\begin{cases}
0, & \text{if } \ell(t-1) - \ell(t) > 0, \\
-1, & \text{otherwise,}
\end{cases}
\]
where \( \ell(t) \) denotes the Euclidean distance to the target. 
The heading component penalizes angular misalignment:
\[
\nu_{\text{angle}} = -|\Psi(t)|,
\]
where \( \Psi(t) \) is the angular deviation between the UAV heading and the target direction. 
The grid-based component \( F(t) \) encodes proximity to the target and obstacles through the behavior grid map \( G \). If the target lies within the local grid representation, i.e., \( \min(G) < 0 \), then
\[
F(t) = 1 - \ell(t),
\]
which encourages convergence to the target. Otherwise,
\[
F(t) = -\max(G)\bigl(1 - d_{\text{obs}}\bigr),
\]
which penalizes proximity to obstacles. Here, \( d_{\text{obs}} \in [0,1] \) denotes the normalized distance to the nearest obstacle inferred from \( G \).
}

\textcolor{black}{
In the PPO variant without the grid representation, the reward function is simplified by removing the grid-based term \( c_3 F(t) \). In this case, the agent relies solely on \( \nu_{\text{dist}} \) and \( \nu_{\text{angle}} \) for navigation, while the terminal rewards for goal reaching and collisions remain unchanged.
}

\subsection{Proposed Architecture for UAV Online Motion Planning}
\label{sec:proposed_arch}
The proposed architecture for UAV online motion planning combines the behavior grid map algorithm, PPO with actor-critic neural networks, and a PID controller. The input to the PPO model consists of the extracted states, which include outputs from the behavior grid map algorithm, as well as $\ell$ and $\Psi$. The actor neural network then outputs the desired angular velocity $r$. The desired velocities are $u = \text{constant}$, $v = 0$, $w = 0$, and $r$. These desired velocities are subsequently passed as inputs to the PID controller. The complete architecture is illustrated in Figure~\ref{fig:actor_critic}.

\begin{figure}
    \centering
    \includegraphics[width=1\linewidth]{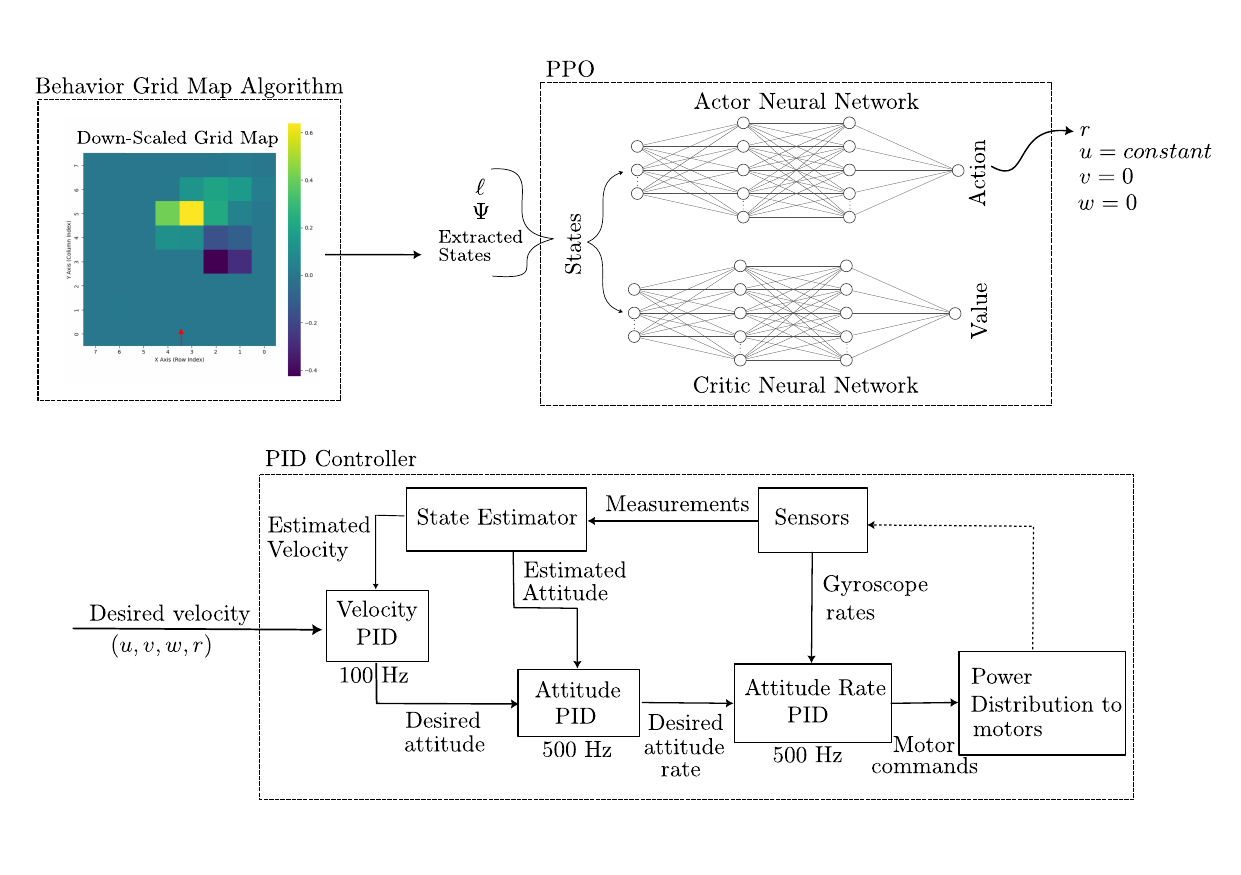}
        \caption{The proposed UAV online motion planning architecture integrates a behavior grid map algorithm, PPO with actor-critic neural networks, and a PID controller. Extracted states, including outputs from the behavior grid map and parameters $\ell$ and $\Psi$, serve as inputs to the PPO model. The actor-network outputs the desired angular velocity $r$, and the desired linear velocities are set as $u = \text{constant}$, $v = 0$, and $w = 0$. These desired velocities are then fed into the PID controller.}
    \label{fig:actor_critic}
\end{figure}

\textcolor{black}{
\textbf{Remark.} The proposed framework naturally extends to configurations with additional sensors without requiring architectural modifications. Each additional sensor contributes supplementary obstacle information to the behavior grid map, improving spatial coverage, reducing perception blind spots, and enhancing obstacle avoidance capability under sparse sensing conditions.
}

\subsection*{Enhanced Perception Method} \label{subsec:extended_method}
This section explains the enhanced perception method. The robot is equipped with three range finder sensors positioned in different directions, which limit its perception of the environment. \textcolor{black}{To compensate for this limitation, the robot is encouraged to perform an S-shaped motion that is generated by periodically changing the robot's heading, allowing the sensors to scan a wider portion of the surrounding environment than would be possible during straight-line motion. When the action value produced by the policy is close to zero, a sinusoidal yaw motion is imposed on the robot. The yaw angle \( \psi \) is set to $\psi = A\sin(\boldsymbol{\omega} t)$, where \( A \) represents the amplitude of the yaw oscillation, \( \boldsymbol{\omega} \) is the angular frequency, and \( t \) denotes the current time. The angular velocity $r = \dot{\psi} = A\boldsymbol{\omega}\cos(\boldsymbol{\omega} t)$ and the maximum magnitude of the angular velocity is bounded by $A\boldsymbol{\omega} \leq r_{\max}$.}

With this strategy, energy consumption increases, but the resulting path is safer. The user can choose between the proposed standard method, which conserves energy, and the enhanced method, which prioritizes safety. \textcolor{black}{The parameters \(A\) and \(\boldsymbol{\omega}\) control the intensity of the S-shape movement and therefore affect the trade-off between safety and energy efficiency. Small values produce limited exploration benefits, while excessively large values increase path oscillation.} 

\begin{color}{black} 
\captionsetup{labelfont={color=black},textfont={color=black}} 

\makeatletter \g@addto@macro\@floatboxreset{\color{black}} \makeatletter

\section{Results and Discussion}\label{sec:result}
This section evaluates the proposed DRL-based online motion-planning framework through extensive simulation experiments and preliminary real-world validation with a Crazyflie~2.1 UAV. This section includes: (1) detailed ablation studies on the proposed grid-map representation and reward formulation, (2) evaluation under multiple environment sizes and congestion levels, (3) comparison against multiple baselines, including PPO-only, PPO with S-shape movement, MPC, and two variants of the proposed framework, (4) battery consumption comparison between standard and enhanced methods, (5) computational efficiency and deployment feasibility, and (6) real-world deployment results.

\subsection{Simulation Setup}
The training is performed on a PC running Ubuntu 24.04 LTS with 16 GB RAM and an Intel Core i7 CPU. We conducted the simulations on a CPU, while a GPU was used for the real-world experiments. The main simulation and PPO training parameters are summarized in Table~\ref{tab:sim_params}.

\begin{table}
\small
    \centering
    \caption{Simulation and PPO Training Parameters}
    \begin{tabular}{ll}
        \toprule
        \textbf{Parameter} & \textbf{Value} \\
        \midrule
        \multicolumn{2}{c}{\textit{Simulation Setup}} \\
        \midrule
        Simulation platform & Webots 2024a \\
        Simulation area & $20 \times 20$ meters \\
        Number of obstacles & 16 cylinders \\
        Radius of cylinder & 0.2 meters \\
        Radius of target area & 0.1 meters \\
        UAV model & Crazyflie 2.1 \\
        UAV flight altitude & Constant (0.3 meters) \\
        Max episode length & 512 steps \\
        Drone's Velocity in $x$ direction (Robot Frame) & 0.5 m/s \\
        PC specifications & 16 GB RAM, Intel Core i7 CPU \\
        \midrule
        \multicolumn{2}{c}{\textit{PPO Training Parameters}} \\
        \midrule
        Total training steps & 55,296 \\
        Evaluation frequency & every 10,000 steps \\
        Batch size & 2048 \\
        Mini-batch size & 64 \\
        Actor learning rate ($lr_a$) & 0.0003 \\
        Critic learning rate ($lr_c$) & 0.0003 \\
        Discount factor ($\Gamma$) & 0.99 \\
        GAE lambda ($\lambda$) & 0.95 \\
        Clipping parameter ($\epsilon$) & 0.2 \\
        PPO epochs per update ($K_{epochs}$) & 10 \\
        Hidden layer width & 64 \\
        Number of Hidden layers & 2 \\
        Activation Function & Hyperbolic tangent ($\tanh$) \\
        Entropy coefficient & 0.01 \\
        State dimension & 68 \\
        Advantage normalization & Enabled \\
        State normalization & Enabled \\
        Reward normalization & Enabled \\
        Learning rate decay & Enabled \\
        Gradient clipping & Enabled \\
        Orthogonal initialization & Enabled \\
        $r_{\min}$ & -1 rad/s \\
        $r_{\max}$ & 1~~rad/s \\
        \bottomrule
    \end{tabular}
    \label{tab:sim_params}
\end{table}
To improve training stability and convergence, several optimization strategies were employed, including reward normalization, state normalization, orthogonal initialization, learning-rate decay, and gradient clipping.

The UAV used in the simulation is a Crazyflie 2.1 drone, which implements the dynamics in Equation~\ref{equ:dynamics}, and the simulation was conducted using Webots 2024a\footnote{\url{https://cyberbotics.com}}, a professional open-source robot simulator. The Webots simulation setup, featuring static and dynamic obstacles, the Crazyflie drone, and a rubber duck as the target area, is shown in Figure~\ref{fig:webots}.

\begin{figure}
  \centering
  \begin{subfigure}{0.3\linewidth}
    \centering
    \includegraphics[width=1\linewidth]{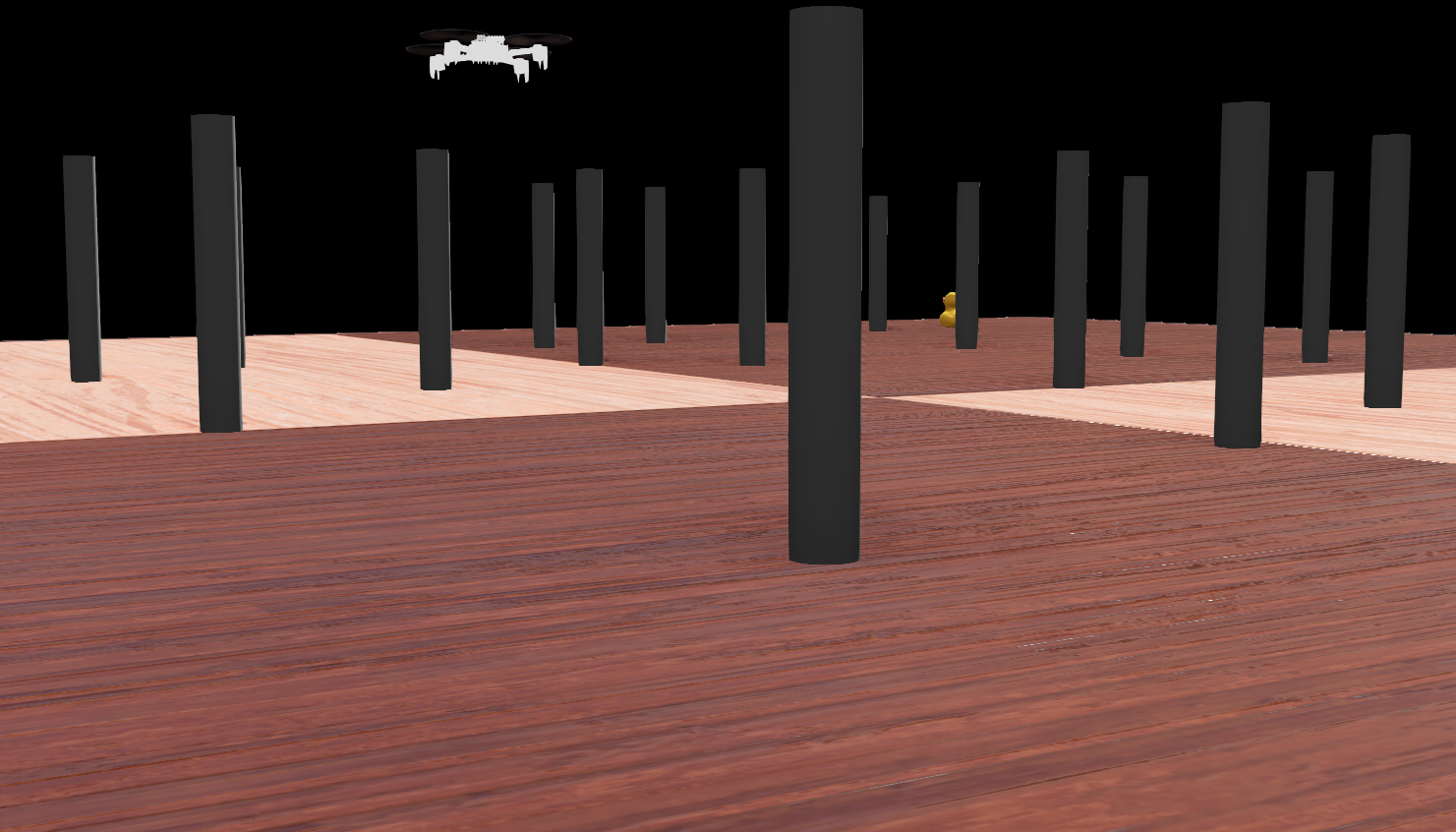} 
  \end{subfigure} 
  \begin{subfigure}{0.3\linewidth}
    \centering
    \includegraphics[width=1\linewidth]{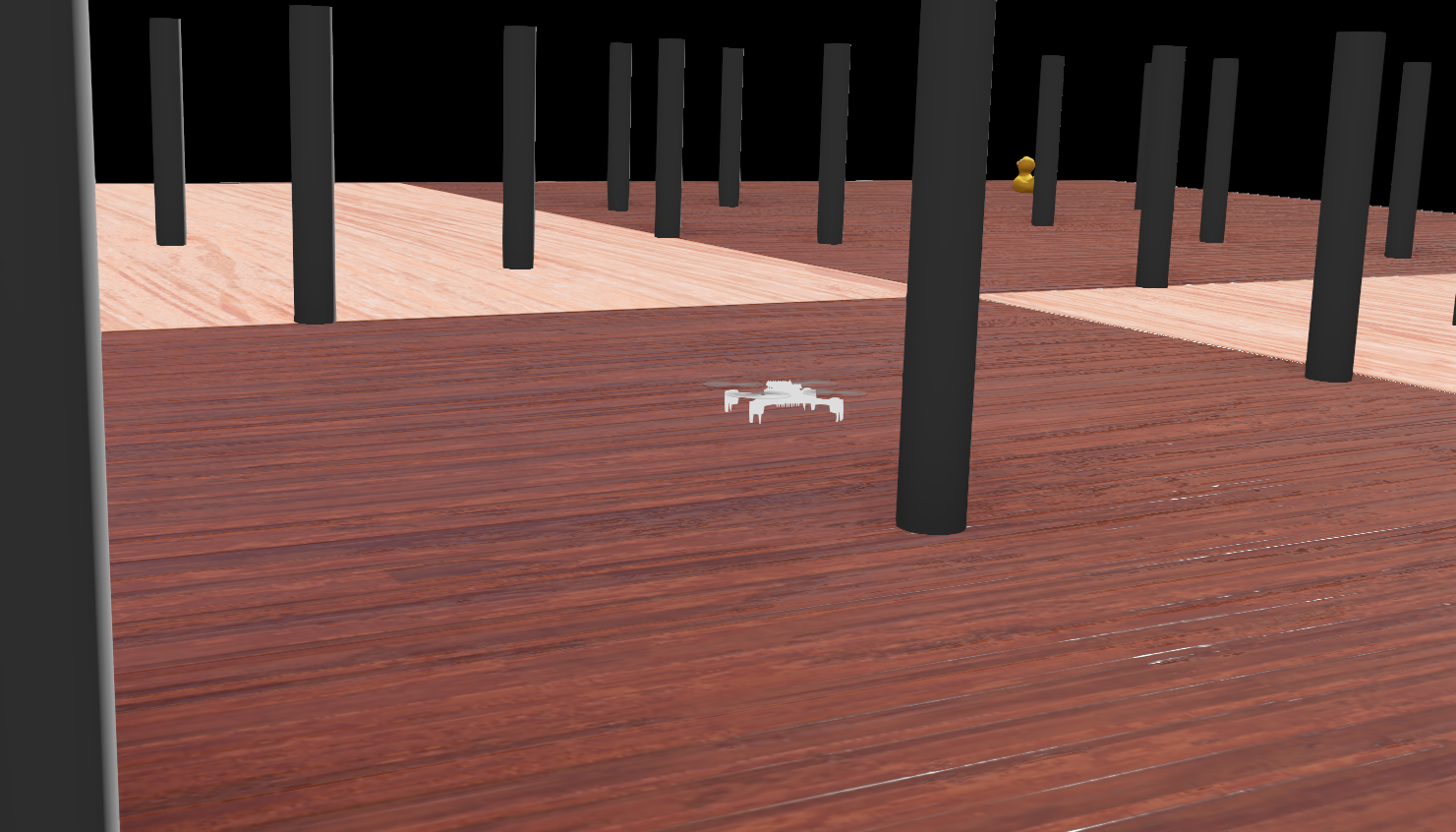} 
  \end{subfigure}
  \begin{subfigure}{0.3\linewidth}
    \centering
    \includegraphics[width=1\linewidth]{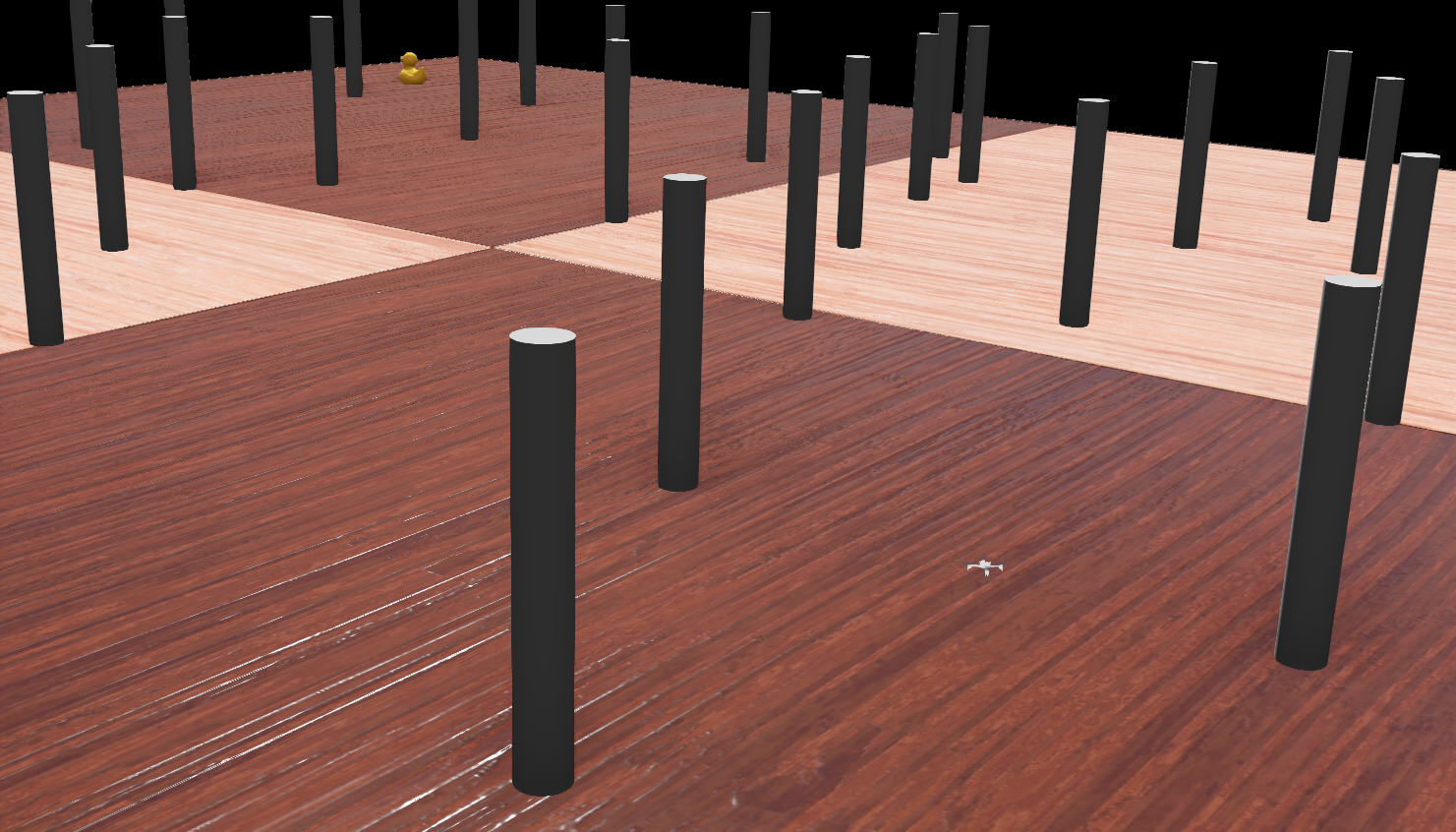} 
  \end{subfigure}
  \begin{subfigure}{0.9\linewidth}
    \centering
    \includegraphics[width=1\linewidth]{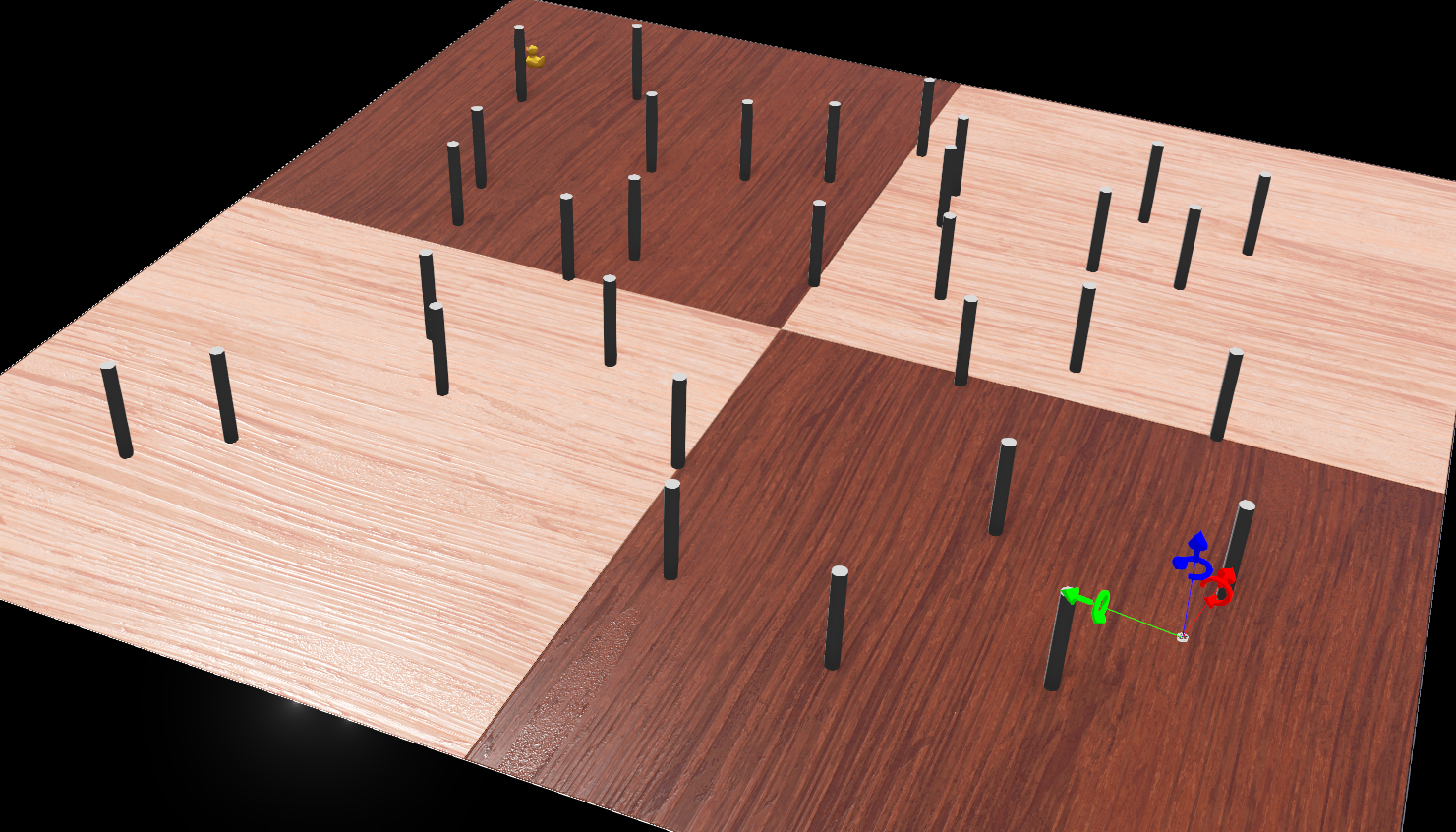} 
  \end{subfigure}  
  \caption{A different perspective of one scene of the Webots simulation setup, featuring static and dynamic obstacles, the Crazyflie drone, and a rubber duck as the target area.}
  \label{fig:webots}
\end{figure}

Figure~\ref{fig:angular-velocity} illustrates the comparison between the actual and desired angular velocities ($r$) for an example. The results demonstrate that the actual angular velocity closely follows the desired one, indicating effective control of the drone.
\begin{figure}
    \centering
    \includegraphics[width=0.6\linewidth]{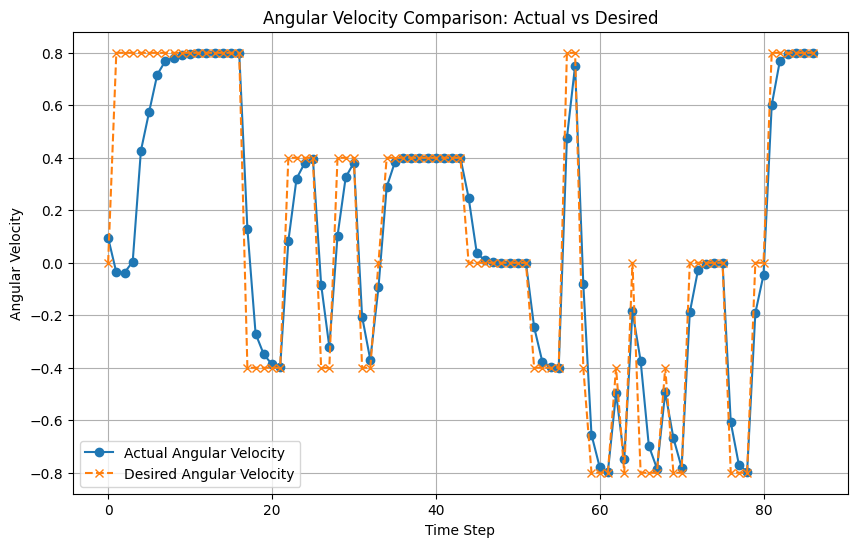}
    \caption{Comparison between the actual and desired angular velocities ($r$).}
    \label{fig:angular-velocity}
\end{figure}

\subsection{PPO Training Analysis} \label{ppo_training_analysis}
Figure \ref{fig:ppo_graphs} tracks the evolution of the PPO agent over 100,000 steps. The training dynamics are defined by four key metrics:

Value Loss: Measures the error in the critic's reward predictions. The graph shows a sharp initial drop followed by a steady decline, indicating the agent is learning to accurately value its environment.

Policy Gradient Loss: Represents the direction and magnitude of policy updates. It trends downward into negative territory, signifying the reinforcement of advantageous actions.

Entropy: Quantifies the randomness of actions. The upward-sloping curve indicates a transition from high exploration (randomness) to exploitation (certainty).

Total Loss: The weighted summation of all objectives. Its V-shaped trajectory highlights the overall optimization progress.

The training reaches its near peak efficiency at the red dashed line (step 55,296). At this intersection, the Value Loss and Total Loss reach their near-lowest local points, while the Policy Gradient Loss achieves its near maximum stability.

The shaded red region marks post-selection degradation. In this phase, the value loss begins to climb, and the policy gradient loss becomes volatile, suggesting the model is either overfitting to recent data or experiencing catastrophic forgetting. By selecting the checkpoint at the point of lowest loss, we ensure the best balance between learned behavior and generalizability.

\begin{figure}
  \centering
  \begin{subfigure}{0.45\linewidth}
    \centering
    \includegraphics[width=\linewidth]{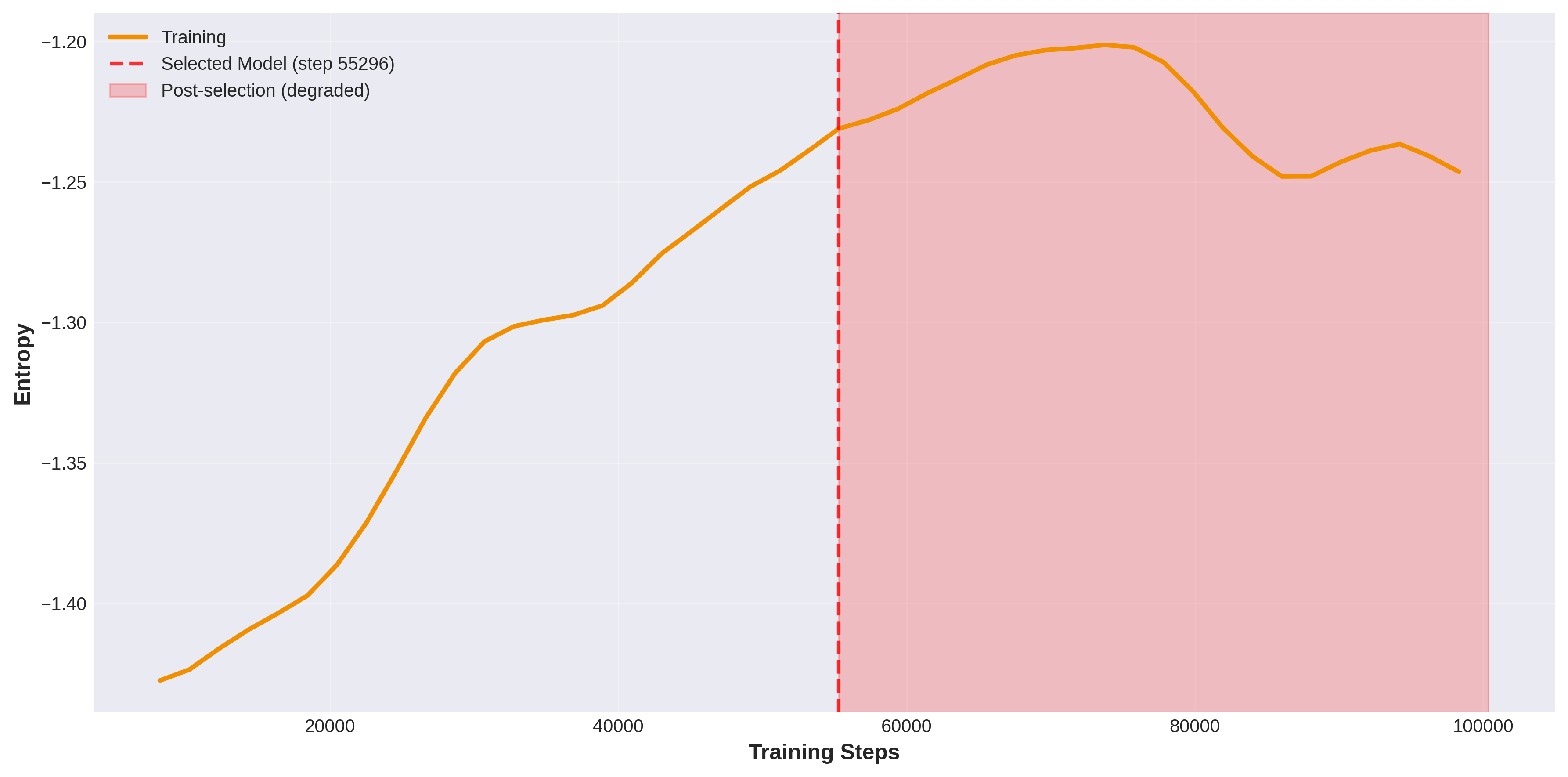} 
    \caption{Entropy loss}
    \label{fig:entropy_loss}
  \end{subfigure}
  \hfill
  \begin{subfigure}{0.45\linewidth}
    \centering
    \includegraphics[width=\linewidth]{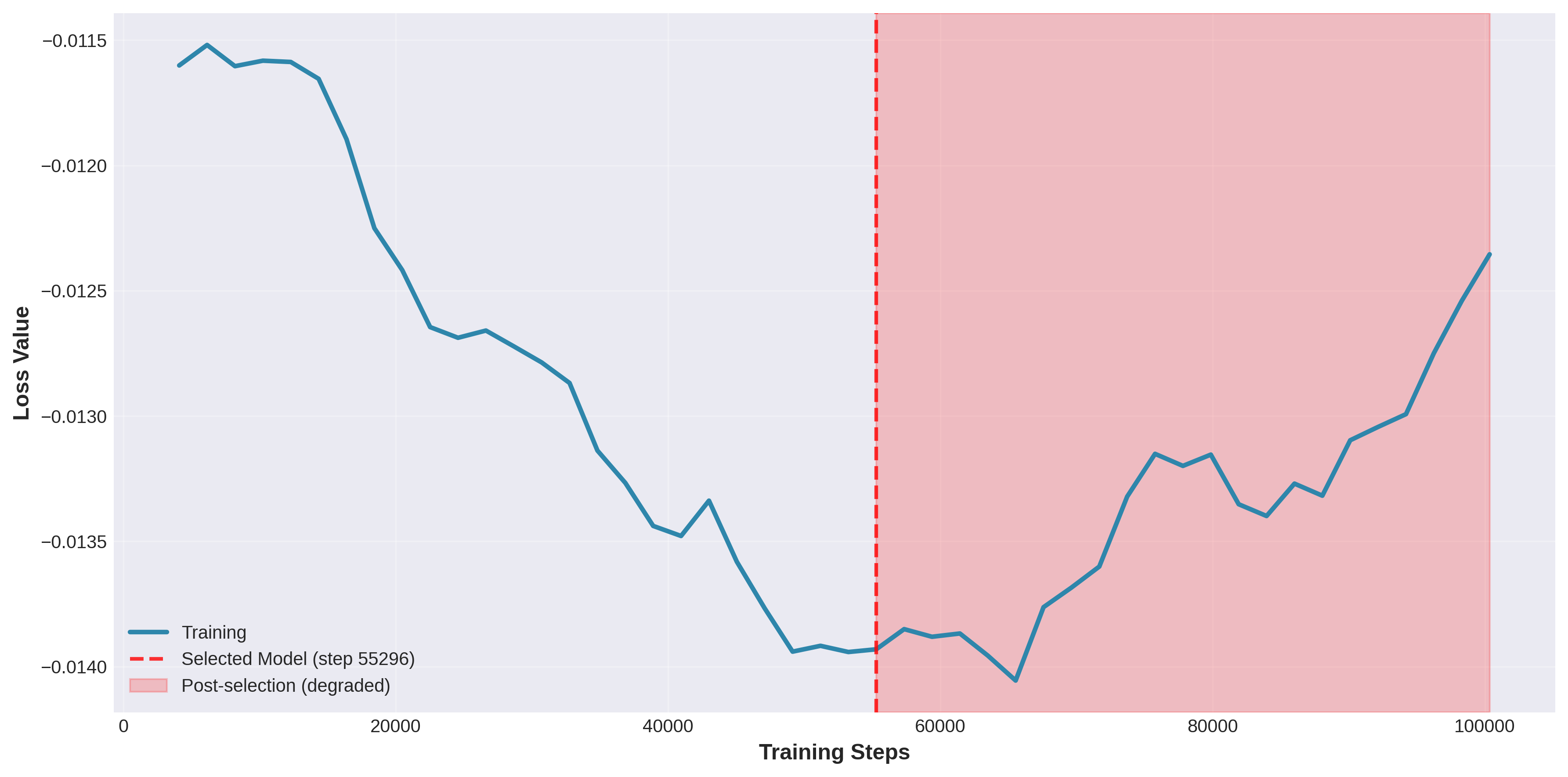} 
    \caption{Policy gradient loss}
    \label{fig:policy_loss}
  \end{subfigure}
  
  \vspace{0.3cm}
  
  \begin{subfigure}{0.45\linewidth}
    \centering
    \includegraphics[width=\linewidth]{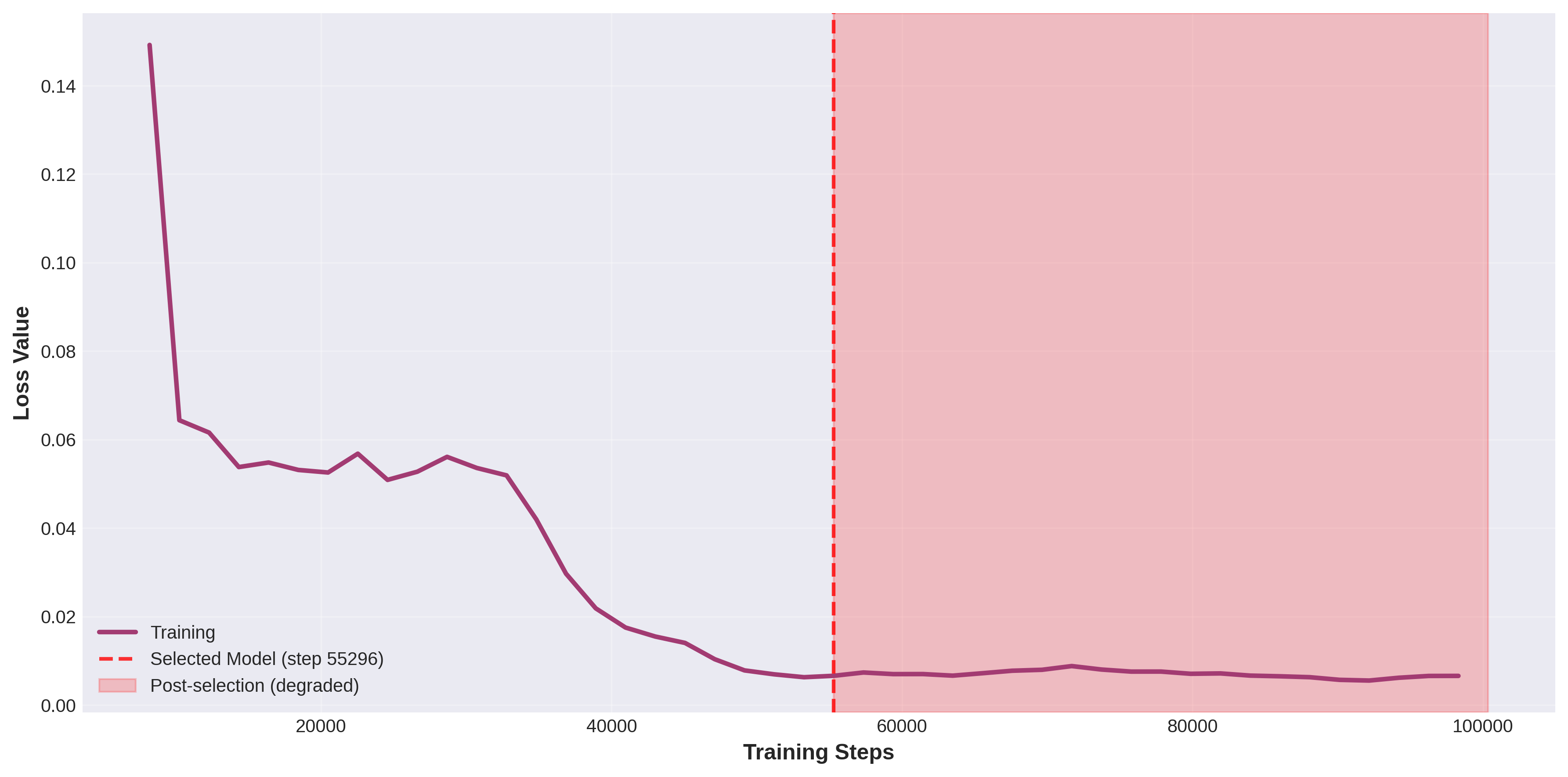} 
    \caption{Value function loss}
    \label{fig:value_loss}
  \end{subfigure}
  \hfill
  \begin{subfigure}{0.45\linewidth}
    \centering
    \includegraphics[width=\linewidth]{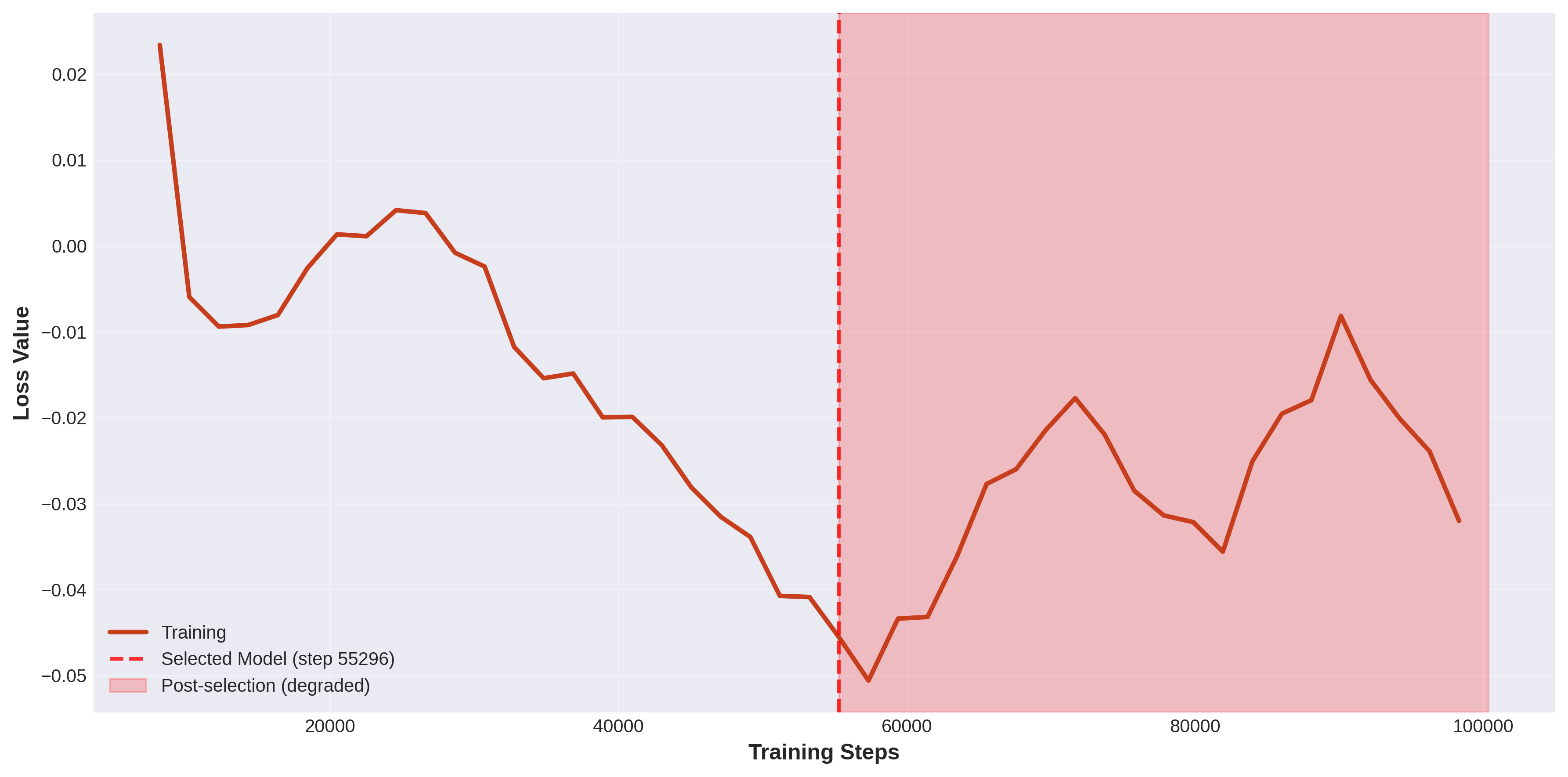} 
    \caption{Total combined loss}
    \label{fig:total_loss}
  \end{subfigure}
  \caption{PPO training metrics over 100,000 steps. The red dashed line at step 55,296 indicates the selected model checkpoint, before which all metrics demonstrate stable convergence. The shaded region highlights post-selection degradation in training dynamics.}
  \label{fig:ppo_graphs} 
\end{figure}

\subsection{Evaluation Metrics}

To provide a comprehensive evaluation of navigation quality, computational efficiency, and safety, the following metrics are used.

\textbf{Success Rate} represents the percentage of episodes in which the UAV reaches the target region without collision.

\textbf{Path Length} the total traveled distance from the initial position to the target. It is computed by accumulating Euclidean distances between consecutive trajectory points.

\textbf{Trajectory Smoothness} evaluates directional consistency along the generated path. It is defined as the cumulative heading-angle variation between consecutive trajectory segments:
\[
\text{Smoothness} =
\sum_{i=2}^{n-1}
\left|
\arccos
\left(
\frac{
\vec{v}_{i,1}\cdot\vec{v}_{i,2}
}{
\|\vec{v}_{i,1}\|
\|\vec{v}_{i,2}\|
}
\right)
\right|,
\]
where $\vec{v}_{i,1}$ and $\vec{v}_{i,2}$ denote consecutive motion vectors.

Lower smoothness values correspond to smoother and more dynamically feasible trajectories.

\textbf{Average Clearance} measures the mean minimum distance between the UAV and nearby obstacles throughout the trajectory:
\begin{align*}
\text{AC} = \frac{\sum_{i=1}^{n} \min \left( \|\vec{p}_i - \vec{o}_{j}\| \; \text{for all} \; \vec{o}_{j} \in \mathcal{O} \right)}{n},
\end{align*}
where $\vec{p}_i$ is the position of the robot at the $i$-th step, $\vec{o}_{j}$ is the position of the $j$-th obstacle, $\mathcal{O}$ denotes the set of all obstacles, $n$ is the total number of trajectory points, and $\text{AC}$ is the Average Clearance.

Higher clearance values indicate safer navigation behavior.

\textbf{Average Planning Time Per Step} measures the average computational time required to generate one planning action during online navigation. It reflects the real-time capability of the proposed framework.

\subsection{Ablation Study for Grid Map} \label{subsec:Ablation_Study_for_Grid_Map}
One of the main contributions of the proposed framework is the dynamic grid-map representation used to encode obstacle information for the DRL policy. To analyze the influence of individual design choices, multiple ablation studies were conducted on the Gaussian spread parameter, forgetting factor, coverage area, grid resolution, reduction size, and reduction operator. The tests were conducted in a \(20 \times 20~m^2\) environment under moderate obstacle density conditions. Details of the environment setup and testing methodology are provided in a subsequent subsection.

\textbf{Effect of Gaussian Spread Parameter:}
Table~\ref{tab:sigma_ablation} evaluates different Gaussian spread values ($\sigma$). Small values produce highly localized obstacle representations, while large values overly diffuse obstacle information across the map.

The results show that $\sigma=16$ achieves the best overall performance with a success rate of $90\%$. When $\sigma=2$, obstacle influence becomes excessively sparse, reducing environmental awareness and leading to lower success rates. Conversely, $\sigma=64$ produces less informative occupancy distributions, degrading navigation performance values.

These results indicate that moderate obstacle diffusion provides the best balance between obstacle awareness and motion flexibility.

\begin{table}[h]
\centering
\caption{Ablation Study on Gaussian Spread ($\sigma$) in Grid Map}
\label{tab:sigma_ablation}
\begin{tabularx}{\linewidth}{
>{\hsize=0.03\hsize}X
|>{\hsize=0.14\hsize}X
|>{\hsize=0.18\hsize}X
|>{\hsize=0.2\hsize}X
|>{\hsize=0.2\hsize}X
|>{\hsize=0.25\hsize}X
}
\hline
\textbf{$\sigma$} & \textbf{Success Rate (\%)} & \textbf{Avg. Plan Time per Step (ms)} & \textbf{Avg. Path Length (m)} & \textbf{Avg. Clearance (m)} & \textbf{Avg. Smoothness} \\
\hline
2 & 77 & 80.77 & 9.30 ± 1.06 & 1.78 ± 0.36 & 3.00 ± 1.06\\
16 & 90  & 77.77 & 9.37 ± 1.20 & 1.70 ± 0.36 & 4.51 ± 2.44 \\
64 & 81 & 84.24 & 9.70 ± 1.38 & 1.79 ± 0.35 & 4.91 ± 2.48\\
\hline
\end{tabularx}
\end{table}

\textbf{Effect of Forgetting Factor:}
Table~\ref{tab:lambda_ablation} analyzes the temporal forgetting factor $\rho$, which determines how quickly previous obstacle observations decay.

A small forgetting factor ($\rho=0.1$) rapidly removes obstacle history, reducing temporal consistency and lowering success rates. Increasing $\rho$ improves navigation robustness by preserving obstacle motion information across time steps. The highest success rate is obtained using $\rho=0.95$, demonstrating the importance of temporal memory in dynamic environments.

\begin{table}[h]
\centering
\caption{Ablation Study on Forgetting Factor ($\rho$)}
\label{tab:lambda_ablation}
\begin{tabularx}{\linewidth}{
>{\hsize=0.04\hsize}X
|>{\hsize=0.14\hsize}X
|>{\hsize=0.18\hsize}X
|>{\hsize=0.2\hsize}X
|>{\hsize=0.2\hsize}X
|>{\hsize=0.25\hsize}X
}
\hline
\textbf{$\rho$} & \textbf{Success Rate (\%)} & \textbf{Avg. Plan Time per Step (ms)} & \textbf{Avg. Path Length (m)} & \textbf{Avg. Clearance (m)}& \textbf{Avg. Smoothness} \\
\hline
0.1 & 77 & 80.92 & 9.39 ± 1.07 & 1.78 ± 0.35 & 3.65 ± 1.59\\
0.5 & 82 & 81.04 & 9.51 ± 1.16 & 1.76 ± 0.36 & 4.07 ± 2.17\\
0.95 & 90  & 77.77 & 9.37 ± 1.20 & 1.70 ± 0.36 & 4.51 ± 2.44 \\
\hline
\end{tabularx}
\end{table}

\textbf{Effect of Grid Coverage Area:}
Table~\ref{tab:grid_coverage_area_ablation} evaluates different spatial coverage areas centered around the UAV.

A small coverage area ($1\times1$~m$^2$) limits environmental awareness and reduces obstacle anticipation capability. In contrast, a large area ($4\times4$~m$^2$) introduces excessive irrelevant information, increasing trajectory oscillations and reducing success rate. The intermediate coverage size ($2\times2$~m$^2$) provides the best trade-off between local detail and global awareness.

\begin{table}[h]
\centering
\caption{Ablation Study on Grid Coverage Area Around Robot}
\label{tab:grid_coverage_area_ablation}
\begin{tabularx}{\linewidth}{
>{\hsize=0.13\hsize}X
|>{\hsize=0.1\hsize}X
|>{\hsize=0.11\hsize}X
|>{\hsize=0.2\hsize}X
|>{\hsize=0.2\hsize}X
|>{\hsize=0.25\hsize}X
}
\hline
\textbf{Grid Coverage Area ($m^2$)} & \textbf{Success Rate (\%)} &\textbf{ Avg. Plan Time per Step (ms)} & \textbf{Avg. Path Length (m)} & \textbf{Avg. Clearance (m)} & \textbf{Avg. Smoothness} \\
\hline
1 $\times$ 1 & 82 & 80.45 & 9.37 ± 1.09 & 1.75 ± 0.37 & 3.42 ± 1.44\\
2 $\times$ 2 & 90  & 77.77 & 9.37 ± 1.20 & 1.70 ± 0.36 & 4.51 ± 2.44 \\
4 $\times$ 4 & 76 & 81.16 & 9.47 ± 1.19 & 1.81 ± 0.31 & 5.70 ± 2.22\\
\hline
\end{tabularx}
\end{table}

\textbf{Effect of Grid Resolution:}
The influence of grid resolution is analyzed in Table~\ref{tab:grid_ablation}. Increasing the grid size from $32\times32$ to $128\times128$ significantly improves success rate due to richer spatial encoding. However, further increasing the resolution to $256\times256$ provides only marginal performance improvement while dramatically increasing computational cost from $77.77$~ms to $193.03$~ms per planning step.

Therefore, the $128\times128$ resolution was selected as the best compromise between accuracy and computational efficiency.

\begin{table}[h]
\centering
\caption{Ablation Study on Grid Resolution}
\label{tab:grid_ablation}
\begin{tabularx}{\linewidth}{
>{\hsize=0.15\hsize}X
|>{\hsize=0.02\hsize}X
|>{\hsize=0.1\hsize}X
|>{\hsize=0.11\hsize}X
|>{\hsize=0.2\hsize}X
|>{\hsize=0.2\hsize}X
|>{\hsize=0.25\hsize}X
}
\hline
\textbf{Grid Size} & \textbf{$\sigma$} & \textbf{Success Rate (\%)} & \textbf{Avg. Plan Time per Step (ms)} & \textbf{Avg. Path Length (m)} & \textbf{Avg. Clearance (m)} & \textbf{Avg. Smoothness} \\
\hline
32 $\times$ 32 & 4 & 81 & 42.53 & 9.47 ± 1.19 & 1.75 ± 0.37 & 3.83 ± 2.19\\
128 $\times$ 128 & 16 & 90  & 77.77 & 9.37 ± 1.20 & 1.70 ± 0.36 & 4.51 ± 2.44 \\
256 $\times$ 256 & 32 & 91 & 193.03 & 9.58 ± 1.11 & 1.75 ± 0.35 & 5.87 ± 2.25\\
\hline
\end{tabularx}
\end{table}

\textbf{Effect of Down-Scaled Grid Map Size:}
Table~\ref{tab:grid_reduction_ablation} evaluates different down-scaled grid map sizes resulting from feature compression.

A small down-scaled grid map size ($4\times4$) oversimplifies obstacle information and results in unstable trajectories with high smoothness values. Conversely, a large down-scaled grid map size ($16\times16$) preserves excessive local variations and significantly decreases the success rate to 67\%. This performance drop occurs because the neural network structure is kept constant (consisting of 2 hidden layers with 64 nodes each) to strictly isolate the effects of the input size. For the $16\times16$ configuration, the input dimension is set to 260, which overwhelms the fixed capacity of the hidden layers and leads to severe underfitting or representation bottlenecks. The intermediate down-scaled grid map size ($8\times8$) achieves the best overall navigation performance.

\begin{table}[h]
\centering
\caption{Ablation Study on Down-Scaled Grid Map Size}
\begin{tabularx}{\linewidth}{
>{\hsize=0.13\hsize}X
|>{\hsize=0.1\hsize}X
|>{\hsize=0.11\hsize}X
|>{\hsize=0.2\hsize}X
|>{\hsize=0.2\hsize}X
|>{\hsize=0.25\hsize}X
}
\hline
\textbf{Reduction Size} & \textbf{Success Rate (\%)} & \textbf{Avg. Plan Time per Step (ms)} & \textbf{Avg. Path Length (m)} & \textbf{Avg. Clearance (m)} & \textbf{Avg. Smoothness} \\
\hline
4 $\times$ 4 & 84 & 81.46 & 9.60 ± 1.08 & 1.75 ± 0.35 & 10.87 ± 2.12\\
8 $\times$ 8 & 90  & 77.77 & 9.37 ± 1.20 & 1.70 ± 0.36 & 4.51 ± 2.44 \\
16 $\times$ 16 & 67 & 81.76 & 9.16 ± 0.96 & 1.81 ± 0.34 & 3.21 ± 1.58\\
\hline
\label{tab:grid_reduction_ablation}
\end{tabularx}
\end{table}

\textbf{Effect of Reduction Operator:}
Table~\ref{tab:reduction_method_ablation} compares mean-pooling and max-pooling reduction operators. Mean-pooling aggregates the feature representations by calculating the average value across a specified region, capturing the smooth, global context of the data. In contrast, max-pooling extracts only the maximum value within that region, effectively emphasizing the most prominent or dominant feature while discarding less intense signals.

Mean pooling achieves a significantly higher success rate than max pooling because it preserves spatial occupancy distributions more effectively and produces smoother obstacle representations for the policy network.

Overall, the ablation studies confirm that the proposed grid-map formulation is highly sensitive to representation design choices. The selected configuration achieves the best balance among success rate, safety, trajectory quality, and computational efficiency.

\begin{table}[h]
\centering
\caption{Ablation Study on Grid Reduction Operator}
\label{tab:reduction_method_ablation}
\begin{tabularx}{\linewidth}{
>{\hsize=0.13\hsize}X
|>{\hsize=0.1\hsize}X
|>{\hsize=0.11\hsize}X
|>{\hsize=0.2\hsize}X
|>{\hsize=0.2\hsize}X
|>{\hsize=0.25\hsize}X
}
\hline
\textbf{Operator ($\mathbb{F}$)} & \textbf{Success Rate (\%)} & \textbf{Avg. Plan Time per Step (ms)} & \textbf{Avg. Path Length (m)} & \textbf{Avg. Clearance (m)} & \textbf{Avg. Smoothness} \\
\hline
Max & 81 & 80.49 & 9.57 ± 1.17 & 1.77 ± 0.36 & 4.63 ± 2.65\\
Mean & 90  & 77.77 & 9.37 ± 1.20 & 1.70 ± 0.36 & 4.51 ± 2.44 \\
\hline
\end{tabularx}
\end{table}

\subsection{Ablation Study for Reward Function Coefficients} \label{subsec:AblationStudyforRewardFunctionCoefficients}
The reward function combines three objectives: target reaching, heading alignment, and obstacle-awareness through the grid-map representation. Table~\ref{tab:reward_coeff} evaluates the impact of different coefficient combinations.

When the grid-map coefficient dominates ($c_3=0.8$), the UAV behaves conservatively, maintaining larger obstacle distances but producing longer trajectories. When the heading term dominates ($c_2=0.8$), the UAV prioritizes directional consistency but becomes less reactive to obstacles, reducing success rate. Similarly, excessive emphasis on goal reaching ($c_1=0.8$) encourages aggressive motion toward the target and increases collision probability.

The balanced configuration $(c_1,c_2,c_3)=(0.33,0.33,0.33)$ achieves the highest success rate, demonstrating that stable navigation requires simultaneous consideration of target progress, trajectory consistency, and obstacle avoidance.

\begin{table}[h]
\centering
\caption{Ablation Study on Reward Function Coefficients (Goal Weight ($c_1$), Heading Weight ($c_2$), and Grid Map Weight ($c_3$))}
\label{tab:reward_coeff}
\begin{tabularx}{\linewidth}{
>{\hsize=0.05\hsize}X
|>{\hsize=0.04\hsize}X
|>{\hsize=0.04\hsize}X
|>{\hsize=0.1\hsize}X
|>{\hsize=0.11\hsize}X
|>{\hsize=0.2\hsize}X
|>{\hsize=0.2\hsize}X
|>{\hsize=0.25\hsize}X
}
\hline
\textbf{$c_1$} & \textbf{$c_2$} & \textbf{$c_3$} & \textbf{Success Rate (\%)} & \textbf{Avg. Plan Time per Step (ms)} & \textbf{Avg. Path Length (m)} & \textbf{Avg. Clearance (m)} & \textbf{Avg. Smoothness} \\
\hline
0.1 & 0.1 & 0.8 & 90 & 76.39 & 10.02 ± 1.03 & 1.82 ± 0.43 & 5.68 ± 2.39\\
0.1 & 0.8 & 0.1 & 66 & 79.30 & 9.27 ± 0.99 & 1.79 ± 0.35 & 6.56 ± 2.41\\
0.8 & 0.1 & 0.1 & 84 & 78.40 & 9.50 ± 1.11 & 1.76 ± 0.36 & 3.49 ± 1.69\\
0.33 & 0.33 & 0.33 & 90  & 77.77 & 9.37 ± 1.20 & 1.70 ± 0.36 & 4.51 ± 2.44 \\
\hline
\end{tabularx}
\end{table}

\subsection{Comparison Across Different Environments}\label{subsec:Comparison_Across_Different_Environments}
To evaluate scalability and robustness, the proposed framework was tested in environments with different dimensions and obstacle densities. Figure~\ref{fig:env_different_tests} illustrates the evaluation scenarios, while Table~\ref{tab:env_types} summarizes their configurations. 

In these evaluation scenarios, the robot's initial position is fixed at $(0, 0)$ with a heading angle (yaw) of $0^\circ$. The target positions are distributed along a square perimeter that completely encloses the obstacle field. By placing the targets outside the obstacle region, the testing framework forces the robot to navigate through a diverse set of obstacle configurations. The obstacles are randomly dispersed. This setup ensures comprehensive generality, allowing us to evaluate the robot's decision-making behavior across a wide range of realistic scenarios.

For each individual test run, the drone takes off from the center start point and attempts to reach a single designated target location selected from the outer perimeter shown in Figure~\ref{fig:env_different_tests}. Although the figure displays the entire set of available target locations simultaneously for visualization purposes, only one unique target point is active during a given trial. The drone evaluates the environment by sequentially navigating toward each target location one by one. The final performance metrics are subsequently derived from the average results across all of these individual runs. 

Furthermore, while the initial placement and motion paths of both static and dynamic obstacles are generated randomly, their specific environmental layouts are kept strictly identical across all compared algorithms for a given test instance. This consistency ensures a fair evaluation, subjecting every method to the exact same spatial challenges and localized constraints. Additionally, to ensure a realistic kinematic challenge, the average velocity of the dynamic obstacles is set to approximately half of the drone's maximum velocity.

Two environment sizes were considered: small-scale environments ($5\times5$~m$^2$) and large-scale environments ($20\times20$~m$^2$).

Each environment includes four congestion levels: no obstacle, low congestion, moderate congestion, and high congestion. The following methods were compared: 
\begin{itemize}
    \item \textbf{PPO-NoMap-NoSshape:} PPO planner without the grid map and without S-shape movement.
    \item \textbf{PPO-NoMap-WithSshape:} PPO planner without the grid map but with S-shape movement.
    \item \textbf{PPO-WithMap-NoSshape:} proposed framework that is PPO planner with the grid map but without S-shape movement (standard).
    \item \textbf{PPO-WithMap-WithSshape:} full proposed method that is PPO planner with the grid map and with S-shape movement (enhanced).
    \item \textbf{MPC:} model predictive control baseline.
\end{itemize}

\begin{table}
  \centering
  \caption{Test environments with increasing levels of congestion used for evaluation.}
  \label{tab:env_types}
  \begin{tabular}{c|c|c|c}
      \hline
      \textbf{Environment Size ($m^2$)} &  \textbf{Scenario} & \textbf{\# Static Obstacles} & \textbf{\# Dynamic Obstacles} \\
      \hline
      \multirow{4}{*}{5 $\times$ 5} & No obstacle & 0 & 0 \\
      & Low & 1 & 1   \\
      & Moderate & 2 & 2   \\
      & High & 4 & 4  \\
      \hline
       \multirow{4}{*}{20 $\times$ 20} & No obstacle & 0 & 0 \\
      & Low & 4 & 4   \\
      & Moderate & 8 & 8  \\
      & High & 16 & 16  \\
      \hline
  \end{tabular}
\end{table}

\begin{figure}
  \centering
  \begin{subfigure}{0.24\linewidth}
    \centering
    \includegraphics[width=0.6\linewidth]{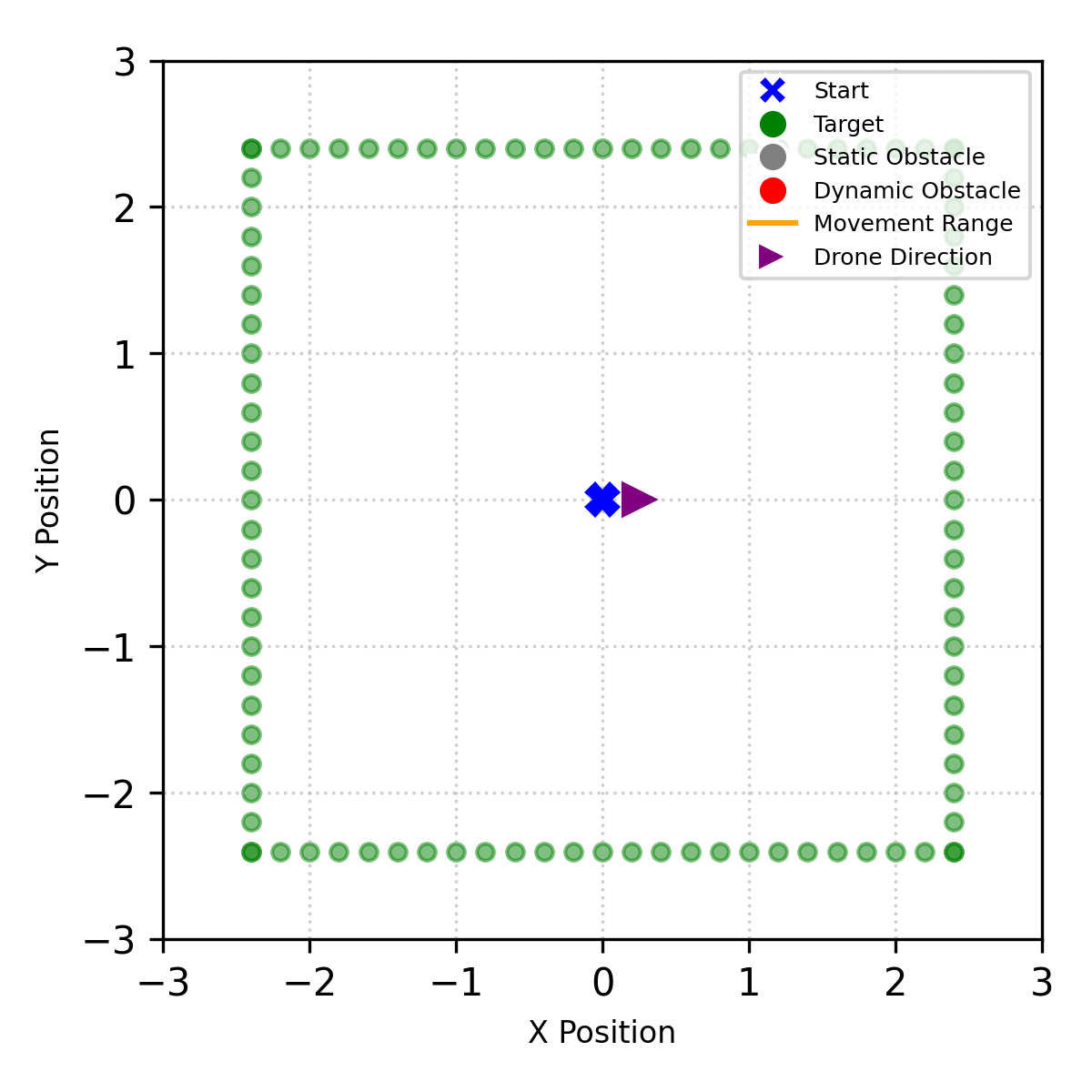} 
    \caption{5x5 No-Obstacle}
  \end{subfigure}
  \begin{subfigure}{0.24\linewidth}
    \centering
    \includegraphics[width=0.6\linewidth]{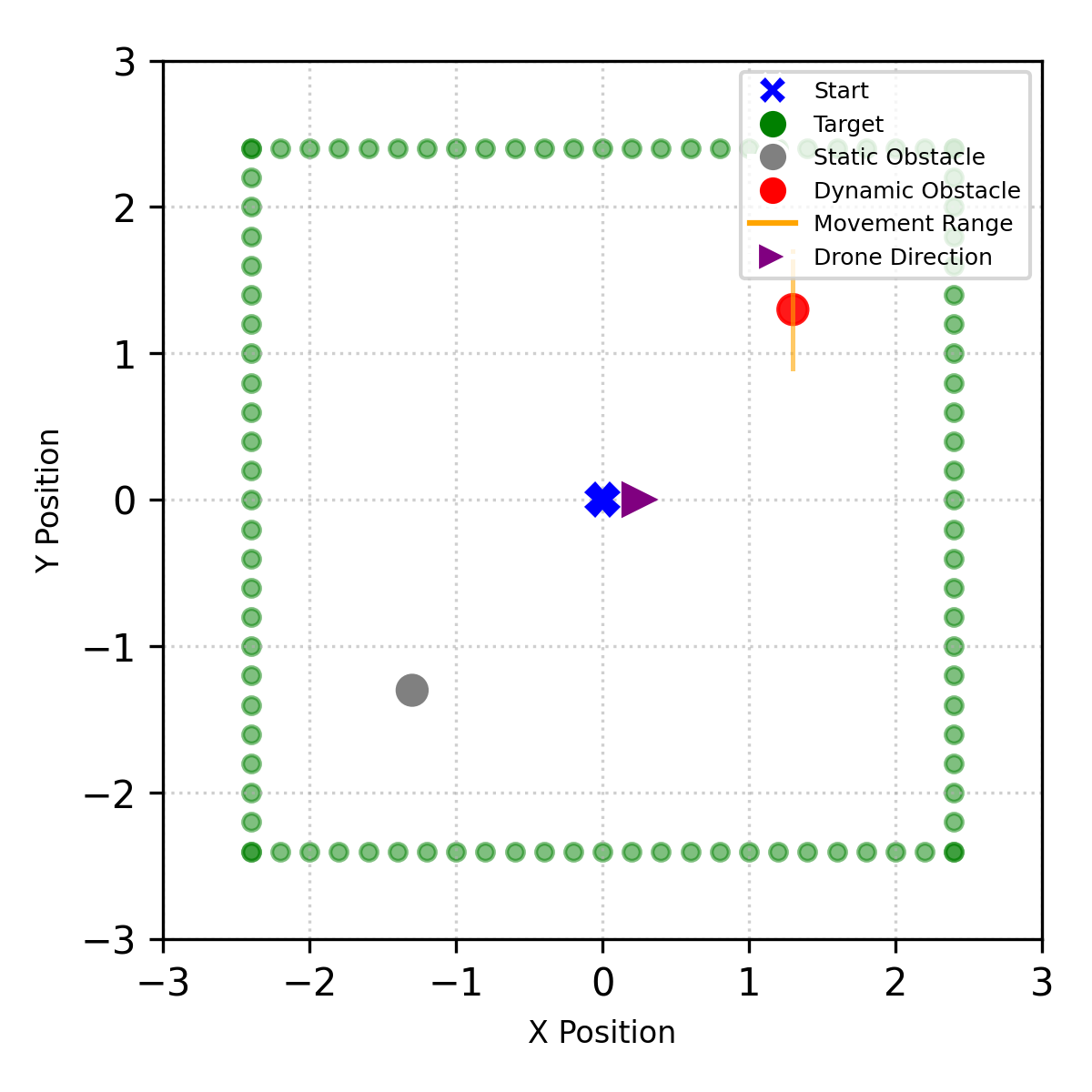} 
    \caption{5x5 Low}
  \end{subfigure}
  \begin{subfigure}{0.24\linewidth}
    \centering
    \includegraphics[width=0.6\linewidth]{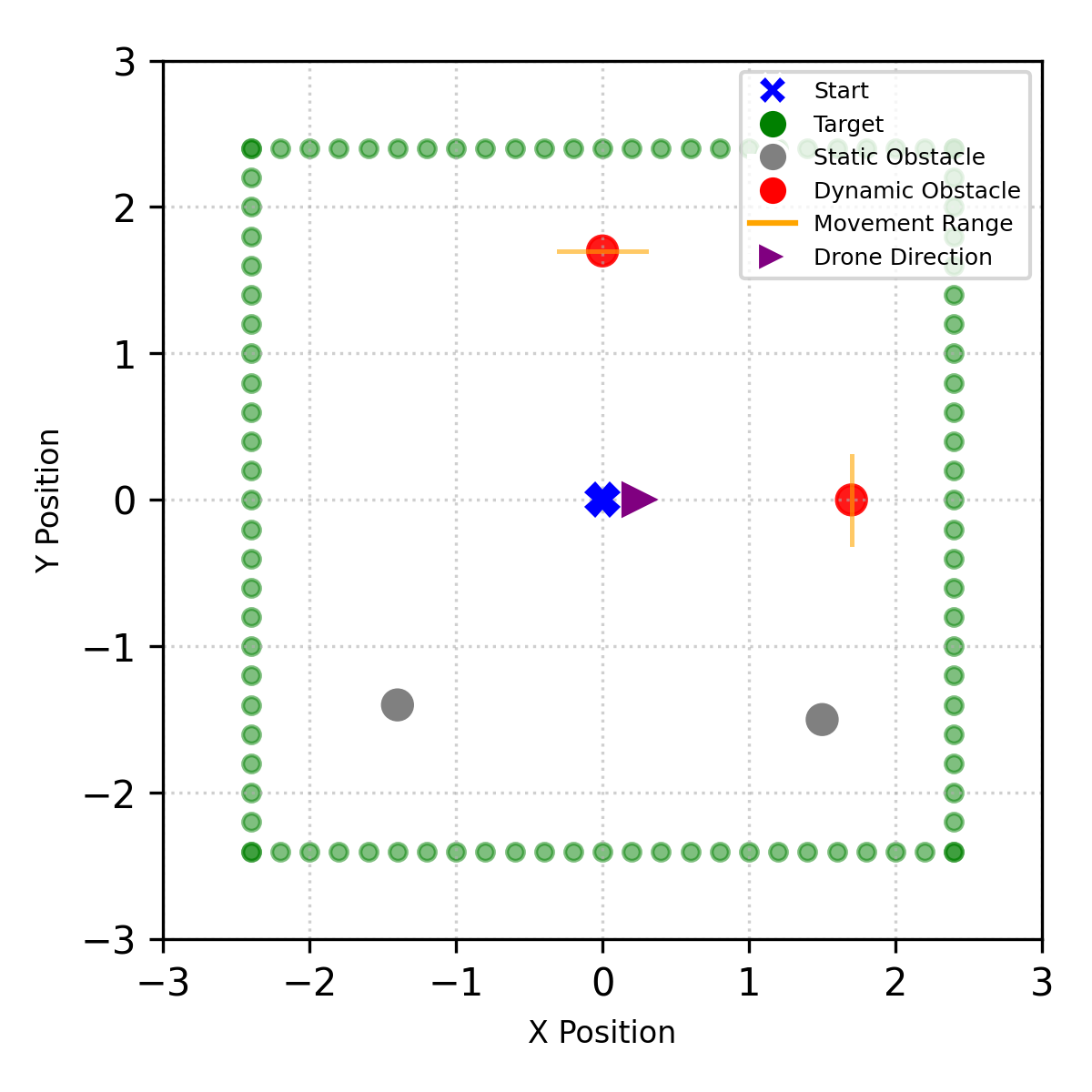} 
    \caption{5x5 Moderate}
  \end{subfigure}
  \begin{subfigure}{0.24\linewidth}
    \centering
    \includegraphics[width=0.6\linewidth]{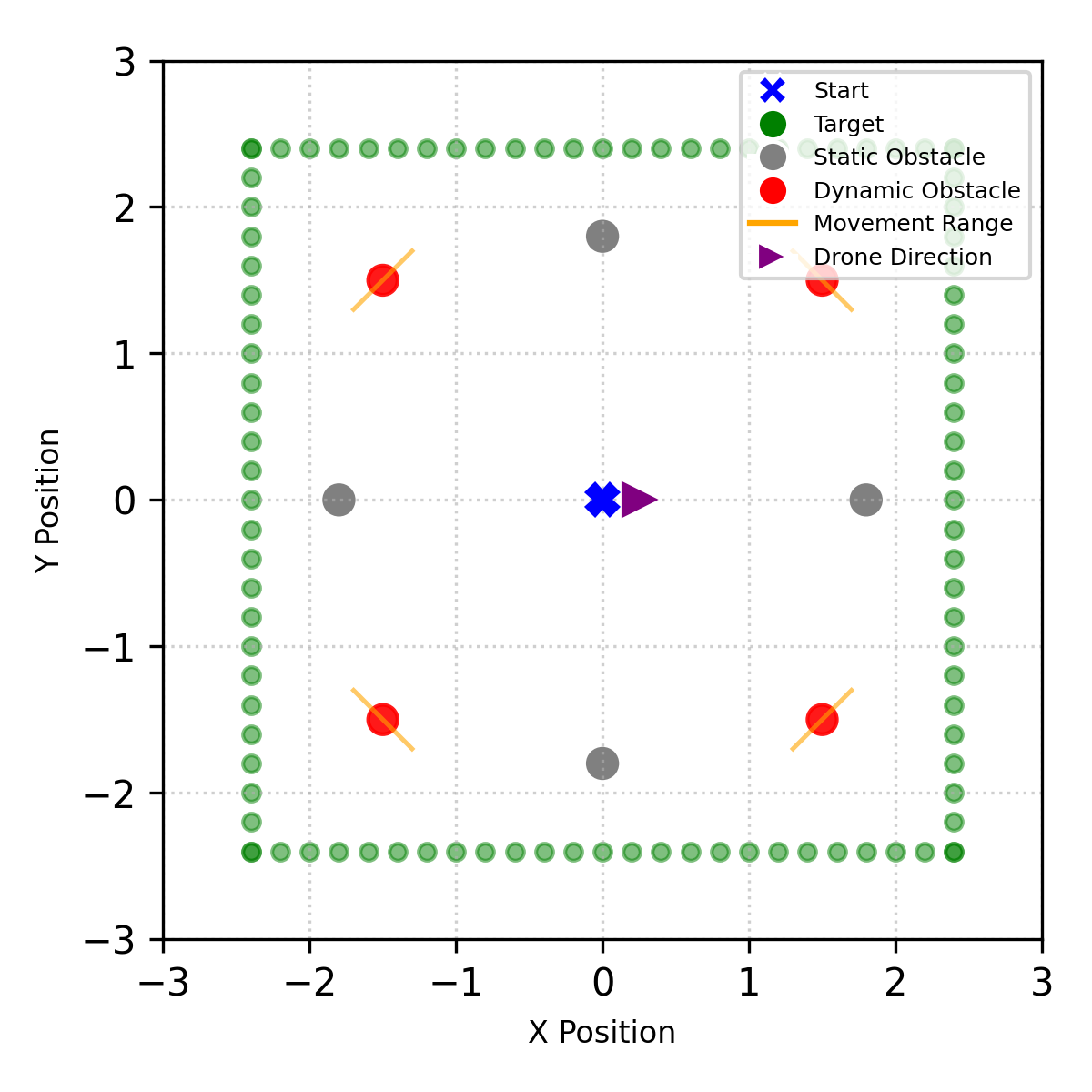} 
    \caption{5x5 High}
  \end{subfigure}

  \vspace{1em} 

  \begin{subfigure}{0.24\linewidth}
    \centering
    \includegraphics[width=1.0\linewidth]{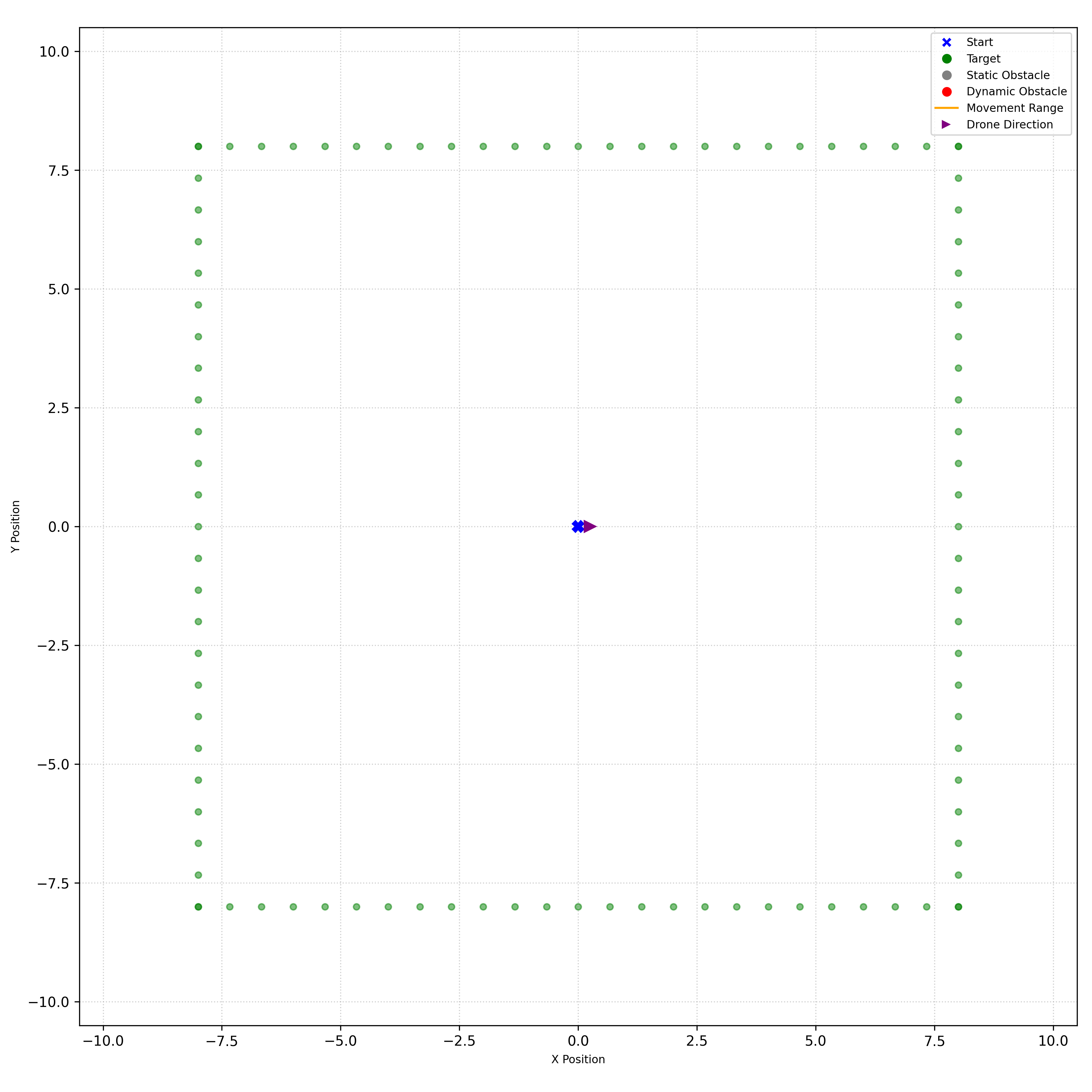} 
    \caption{20x20 No-Obstacle}
  \end{subfigure}
  \begin{subfigure}{0.24\linewidth}
    \centering
    \includegraphics[width=1.0\linewidth]{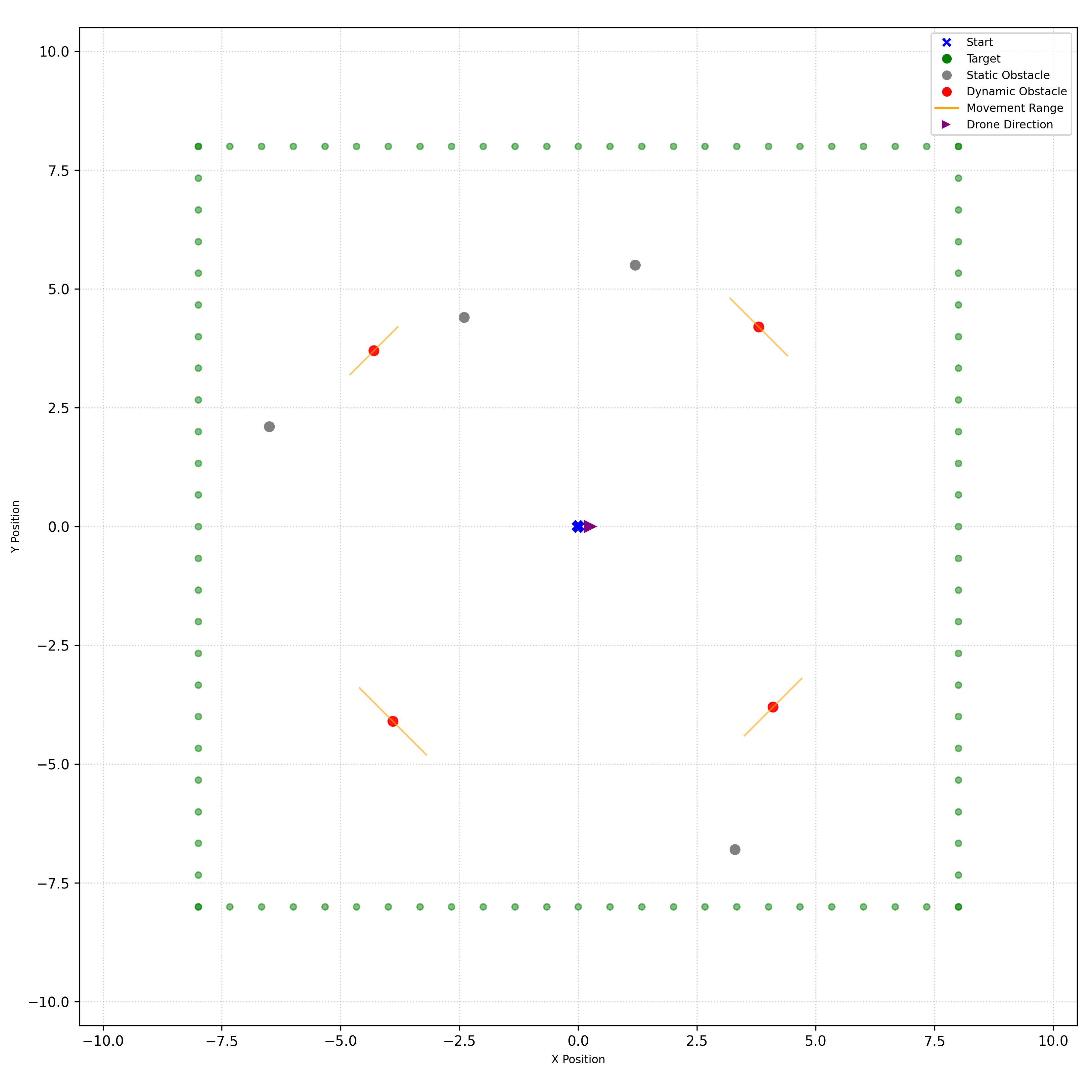} 
    \caption{20x20 Low}
  \end{subfigure}
  \begin{subfigure}{0.24\linewidth}
    \centering
    \includegraphics[width=1.0\linewidth]{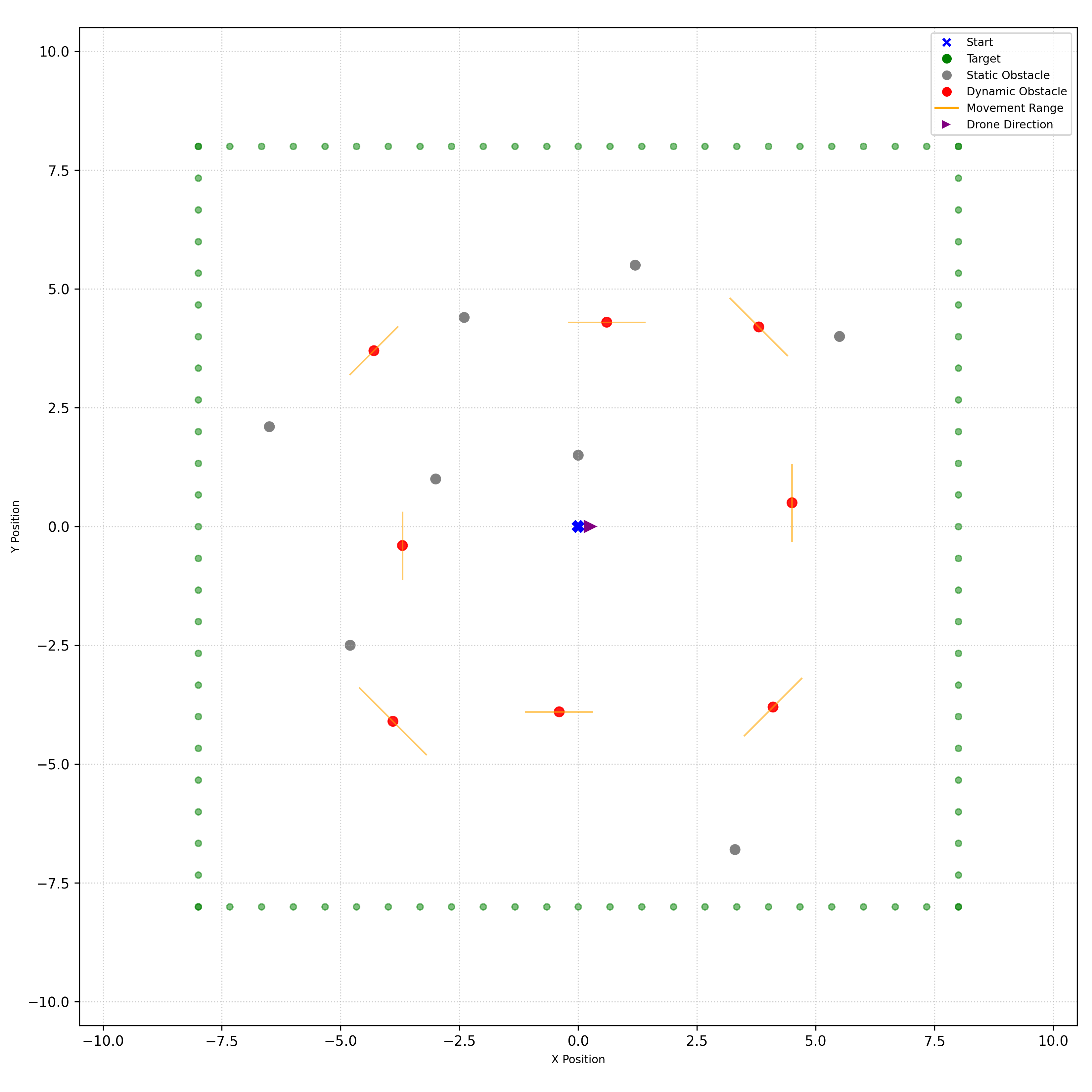} 
    \caption{20x20 Moderate}
  \end{subfigure}
  \begin{subfigure}{0.24\linewidth}
    \centering
    \includegraphics[width=1.0\linewidth]{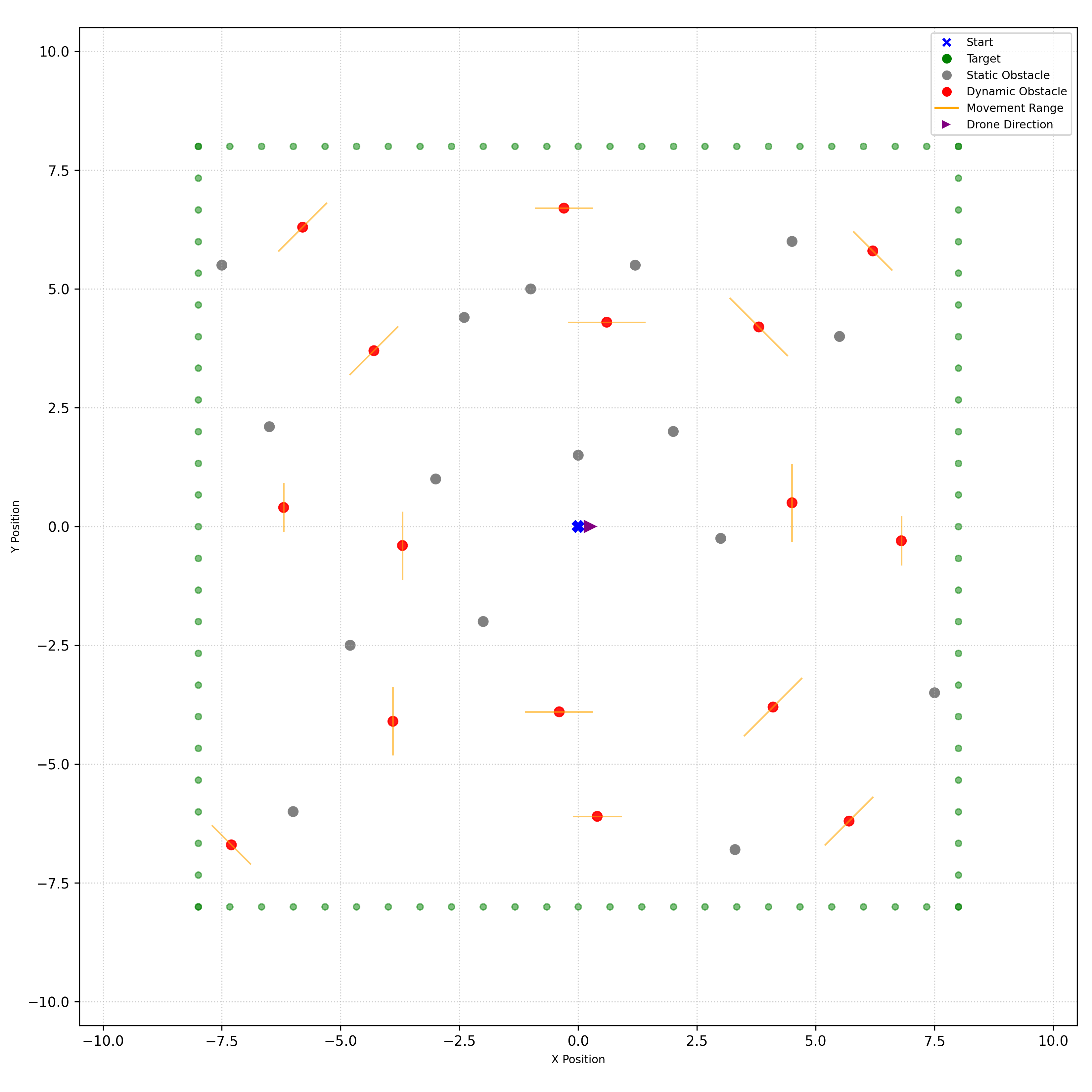} 
    \caption{20x20 High}
  \end{subfigure}
  \caption{Test environments with increasing size and congestion levels. Subfigures (a)-(d) correspond to $5\times5$ m$^2$ environments, while subfigures (e)-(h) correspond to $20\times20$ m$^2$ environments. Within each row, obstacle density increases from left to right. The blue cross denotes the drone's start position, and the purple triangle indicates its initial heading direction. The filled green circle represents the target position. Gray filled circles denote static obstacles, whereas red filled circles denote dynamic obstacles. The orange line associated with each dynamic obstacle indicates its movement range.}
  \label{fig:env_different_tests}
\end{figure}

\textbf{Performance in Small-Scale Environments:}
Table~\ref{tab:env_comparison_5} presents the results for the $5\times5$~m$^2$ environments.

In obstacle-free scenarios, all methods achieve $100\%$ success rate. MPC produces the shortest and smoothest trajectories due to its deterministic optimization strategy. However, as obstacle density increases, the advantages of the proposed DRL-based framework become more evident.

In highly congested environments, the PPO-WithMap-WithSshape method achieves the highest success rate (94\%) due to the combined benefits of the grid map and S-shape movement, outperforming PPO-NoMap-NoSshape (79\%), which lacks both enhancements; PPO-NoMap-WithSshape (81\%), which benefits only from S-shape movement; PPO-WithMap-NoSshape (88\%), which benefits only from the grid map; and MPC (90\%), whose performance is limited by its reactive planning capability in dense environments. This demonstrates the benefit of integrating dynamic grid-map information into the policy representation.

Although MPC produces shorter trajectories and lower planning times in several scenarios, its performance degrades in dense dynamic environments due to limited long-term adaptability and sensitivity to obstacle motion uncertainty.

\setlength{\tabcolsep}{1pt}
\begin{table}[h]
\centering
\caption{Performance Comparison Across Different Scenarios for 5 $\times$ 5 Environment}
\label{tab:env_comparison_5}
\begin{tabularx}{\linewidth}{
>{\hsize=0.2\hsize}X
|>{\hsize=0.48\hsize}X
|>{\hsize=0.14\hsize}X
|>{\hsize=0.11\hsize}X
|>{\hsize=0.22\hsize}X
|>{\hsize=0.22\hsize}X
|>{\hsize=0.22\hsize}X
}
\hline
\textbf{Env} & \textbf{Method} & \textbf{Success Rate (\%)} & \textbf{Avg. Plan Time per Step (ms)} & \textbf{Avg. Path Length (m)} & \textbf{Avg. Clearance (m)} & \textbf{Avg. Smoothness} \\
\hline
\multirow{5}{*}{No obstacle} 
& PPO-NoMap-NoSshape & \textbf{100}  & 101.84 & 2.59 ± 0.48 & N/A & 4.01 ± 1.14 \\
& PPO-NoMap-WithSshape & \textbf{100}  & 101.85 & 2.59 ± 0.48 & N/A & 4.02 ± 1.11 \\
& PPO-WithMap-NoSshape & \textbf{100}  & 166.69 & 2.57 ± 0.48 & N/A & \textbf{2.34 ± 1.06} \\
& PPO-WithMap-WithSshape & \textbf{100}  & 166.30 & 2.57 ± 0.48 & N/A & 2.61 ± 1.02 \\
& MPC & \textbf{100}  & \textbf{33.20} & \textbf{2.44 ± 0.32} & N/A & 2.35 ± 1.07 \\
\hline
\multirow{5}{*}{Low} 
& PPO-NoMap-NoSshape & 94  & 102.05 & 2.57 ± 0.48 & \textbf{1.66 ± 0.31} & 3.97 ± 1.15 \\
& PPO-NoGrid-WithSshape & 93  & 102.30 & 2.56 ± 0.48 & \textbf{1.66 ± 0.31} & 3.98 ± 1.08 \\
& PPO-WithMap-NoSshape & \textbf{99}  & 165.14 & 2.61 ± 0.54 & 1.64 ± 0.31 & 2.74 ± 1.60 \\
& PPO-WithMap-WithSshape & 98  & 164.36 & 2.60 ± 0.52 & 1.65 ± 0.31 & 2.90 ± 1.44 \\
& MPC & 98  & \textbf{35.57} & \textbf{2.43 ± 0.33} & \textbf{1.66 ± 0.31} & \textbf{2.45 ± 1.23} \\
\hline
\multirow{5}{*}{Moderate} 
& PPO-NoMap-NoSshape & 87  & 100.35 & 2.62 ± 0.47 & 1.20 ± 0.15 & 4.01 ± 1.14 \\
& PPO-NoGrid-WithSshape & 89  & 100.66 & 2.61 ± 0.48 & 1.19 ± 0.16 & 4.02 ± 1.10 \\
& PPO-WithMap-NoSshape & 94  & 165.75 & 2.63 ± 0.50 & 1.20 ± 0.14 & 3.00 ± 1.43 \\
& PPO-WithMap-WithSshape & \textbf{96}  & 165.04 & 2.69 ± 0.57 & 1.20 ± 0.14 & 3.33 ± 1.48 \\
& MPC & 91  & \textbf{31.78} & \textbf{2.45 ± 0.31} & \textbf{1.26 ± 0.16} & \textbf{2.69 ± 1.24} \\
\hline
\multirow{5}{*}{High} 
& PPO-NoMap-NoSshape & 79  & 99.98 & 2.64 ± 0.46 & 1.10 ± 0.05 & 4.39 ± 1.58 \\
& PPO-NoGrid-WithSshape & 81  & 100.77 & 2.62 ± 0.46 & 1.10 ± 0.05 & 4.25 ± 1.40 \\
& PPO-WithMap-NoSshape & 88  & 163.20 & 2.80 ± 0.78 & \textbf{1.11 ± 0.07} & \textbf{3.79 ± 2.34} \\
& PPO-WithMap-WithSshape & \textbf{94}  & 158.62 & 2.97 ± 0.98 & 1.10 ± 0.08 & 4.24 ± 2.55 \\
& MPC & 90  & \textbf{74.49} & \textbf{2.50 ± 0.36} & 1.19 ± 0.06 & 4.18 ± 2.59 \\
\hline
\end{tabularx}
\end{table}

\textbf{Performance in Large-Scale Environments:}
Table~\ref{tab:env_comparison_20} presents the results for the larger $20\times20$~m$^2$ environments.

The performance gap between methods becomes more pronounced as the environment scale and congestion increase. In highly congested large-scale environments, the PPO-WithMap-WithSshape method achieves an $83\%$ success rate, outperforming PPO-NoMap-NoSshape ($62\%$), PPO-NoGrid-WithSshape ($62\%$), PPO-WithMap-NoSshape ($79\%$), and MPC ($71\%$).

The results indicate that the proposed framework generalizes effectively to larger environments while maintaining robust obstacle avoidance capability. The inclusion of behavior grid map information significantly improves environmental awareness compared to PPO-only approaches.

\setlength{\tabcolsep}{1pt}
\begin{table}[h]
\centering
\caption{Performance Comparison Across Different Scenarios for 20 $\times$ 20 Environment}
\label{tab:env_comparison_20}
\begin{tabularx}{\linewidth}{
>{\hsize=0.2\hsize}X
|>{\hsize=0.48\hsize}X
|>{\hsize=0.14\hsize}X
|>{\hsize=0.11\hsize}X
|>{\hsize=0.22\hsize}X
|>{\hsize=0.22\hsize}X
|>{\hsize=0.22\hsize}X
}
\hline
\textbf{Env} & \textbf{Method} & \textbf{Success Rate (\%)} & \textbf{Avg. Plan Time per Step (ms)} & \textbf{Avg. Path Length (m)} & \textbf{Avg. Clearance (m)} & \textbf{Avg. Smoothness} \\
\hline
\multirow{5}{*}{No obstacle} 
& PPO-NoMap-NoSshape & \textbf{100}  & \textbf{33.07} & 9.23 ± 1.04 & N/A & 12.17 ± 2.22 \\
& PPO-NoMap-WithSshape & \textbf{100}  & 33.21 & 9.22 ± 1.05 & N/A & 12.24 ± 1.94 \\
& PPO-WithMap-NoSshape & \textbf{100}  & 80.58 & 9.15 ± 1.07 & N/A & 2.74 ± 0.92 \\
& PPO-WithMap-WithSshape & \textbf{100}  & 80.20 & 9.15 ± 1.06 & N/A & 5.67 ± 1.14 \\
& MPC & \textbf{100}  & 35.79 & \textbf{8.97 ± 1.00} & N/A & \textbf{2.66 ± 1.18} \\
\hline
\multirow{5}{*}{Low} 
& PPO-NoMap-NoSshape & 87  & \textbf{33.19} & 9.11 ± 1.02 & 3.07 ± 0.48 & 12.21 ± 2.15 \\
& PPO-NoMap-WithSshape & 88  & 33.21 & 9.13 ± 1.02 & \textbf{3.09 ± 0.52} & 12.13 ± 1.91 \\
& PPO-WithMap-NoSshape & \textbf{97}  & 79.27 & 9.22 ± 1.25 & 3.05 ± 0.49 & 3.54 ± 1.80 \\
& PPO-WithMap-WithSshape & \textbf{97}  & 79.77 & 9.25 ± 1.23 & 3.04 ± 0.48 & 6.17 ± 1.76 \\
& MPC & 92  & 34.55 & \textbf{8.91 ± 0.97} & 3.06 ± 0.49 & \textbf{2.97 ± 1.47} \\
\hline
\multirow{5}{*}{Moderate} 
& PPO-NoMap-NoSshape & 75  & 33.12 & 9.29 ± 1.05 & \textbf{1.76 ± 0.34} & 12.14 ± 2.02 \\
& PPO-NoMap-WithSshape & 75  & 32.68 & 9.28 ± 1.06 & \textbf{1.76 ± 0.34} & 12.25 ± 1.86 \\
& PPO-WithMap-NoSshape & 90  & 77.77 & 9.37 ± 1.20 & 1.70 ± 0.36 & 4.51 ± 2.44 \\
& PPO-WithMap-WithSshape & \textbf{92}  & 79.03 & 9.38 ± 1.12 & 1.71 ± 0.34 & 6.90 ± 2.08 \\
& MPC & 80  & \textbf{29.42} & \textbf{8.92 ± 0.95} & 1.71 ± 0.33 & \textbf{3.04 ± 1.40} \\
\hline
\multirow{5}{*}{High} 
& PPO-NoMap-NoSshape & 62& 33.42& 9.18 ± 0.98& \textbf{1.34 ± 0.27}& 12.07 ± 1.95 \\
& PPO-NoMap-WithSshape & 62  & \textbf{33.33} & 9.19 ± 0.99 & \textbf{1.34 ± 0.27} & 12.24 ± 1.69 \\
& PPO-WithMap-NoSshape & 79  & 79.06 & 9.41 ± 1.14 & 1.29 ± 0.26 & 5.78 ± 3.16 \\
& PPO-WithMap-WithSshape & \textbf{83}  & 79.69 & 9.55 ± 1.29 & 1.28 ± 0.26 & 8.06 ± 2.83 \\
& MPC & 71  & 35.91 & \textbf{8.95 ± 0.95} & 1.33 ± 0.25 & \textbf{3.67 ± 1.90} \\
\hline
\end{tabularx}
\end{table}

Figure~\ref{fig:all_paths_plots} qualitatively compares generated trajectories. In challenging scenarios, PPO-based baselines frequently collide with obstacles. In contrast, our proposed methods, PPO-WithMap-NoSshape (standard) and PPO-WithMap-WithSshape (enhanced), successfully navigate around moving obstacles and reach the target.

Overall, the experiments demonstrate that the proposed framework provides superior robustness and scalability in complex dynamic environments.

\begin{figure}
  \centering
  \begin{subfigure}{0.8\linewidth}
    \centering
    \includegraphics[width=1\linewidth]{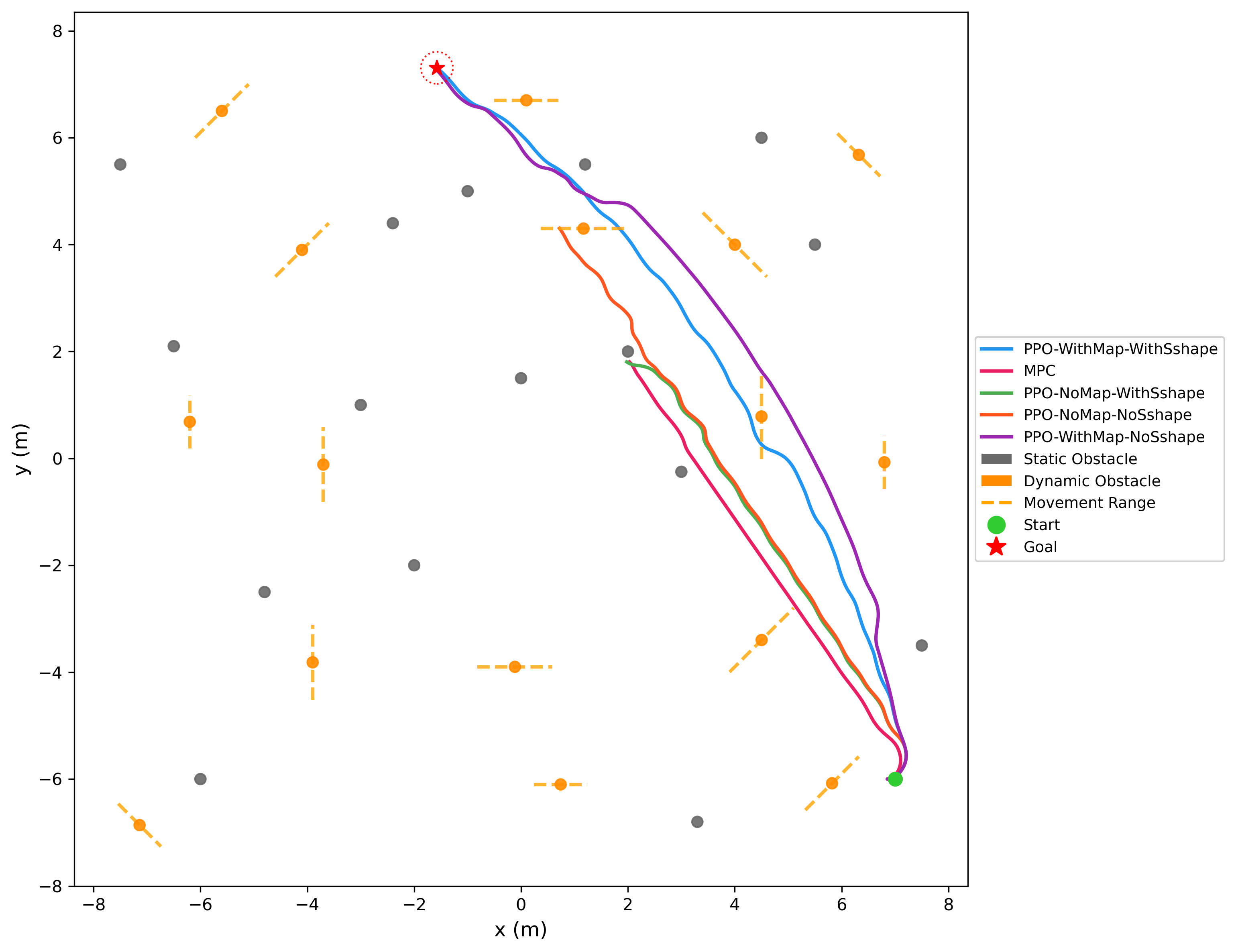} 
    \caption{}
  \end{subfigure}
  \begin{subfigure}{0.8\linewidth}
    \centering
    \includegraphics[width=1\linewidth]{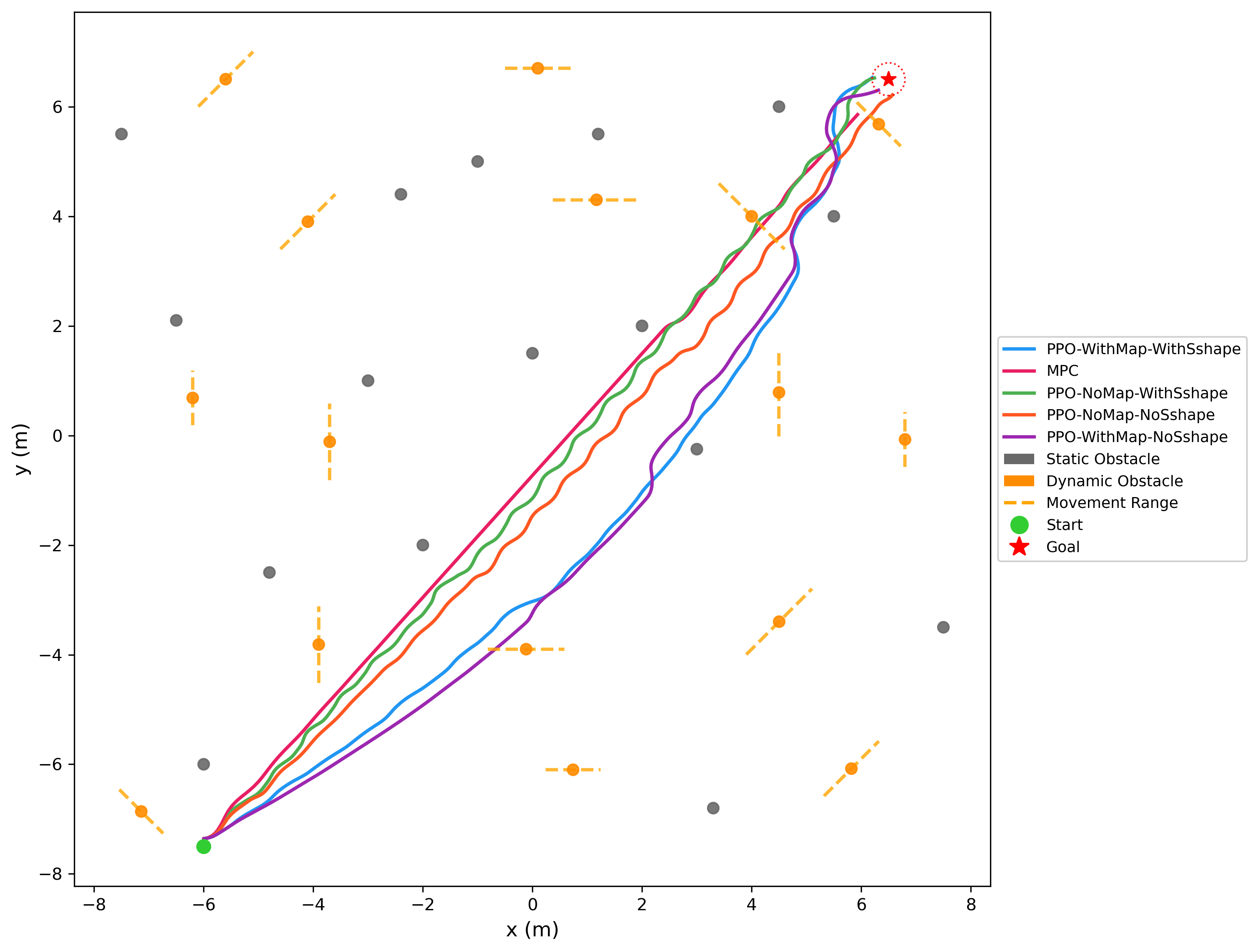} 
    \caption{}
  \end{subfigure} 
  \caption{Plot of the shape of the paths of different methods. a) PPO-WithMap-WithSshape (enhanced) and PPO-WithMap-NoSshape (standard) methods are achieved to the target, and other methods intersect with the obstacle. b) All methods could achieve the target except MPC.}
  \label{fig:all_paths_plots}
\end{figure}

\subsection{Effect of S-shape Movement on Battery Consumption and Safety}

To analyze the effect of the proposed S-shape exploration behavior on navigation safety and energy consumption, additional experiments were conducted in high-congestion $20 \times 20$~m$^2$ environments using different values of the S-shape amplitude parameter ($A$) and angular frequency parameter ($\boldsymbol{\omega}$). These experiments evaluate the trade-off between navigation robustness and battery consumption introduced by the enhanced exploration strategy.

During the study of the amplitude parameter $A$, the angular frequency was fixed at $\boldsymbol{\omega} = (0.8)2\pi$. Similarly, during the study of $\boldsymbol{\omega}$, the amplitude was fixed at $A = 0.5$. These values were selected because they achieved the best overall navigation performance in terms of success rate and trajectory quality. The same parameter configuration ($A=0.5$ and $\boldsymbol{\omega} = (0.8)2\pi$) was used in all previous experiments involving the enhanced method.

Table~\ref{tab:energy_comparison_A} presents the influence of different amplitude values on navigation performance and battery consumption. To quantify the energy metrics, battery consumption is measured directly within the Webots simulation environment by enabling the robot's built-in battery sensor API. The simulation tracks energy depletion in Joules by continuously aggregating the basal electrical power of the Crazyflie 2.1 electronics alongside the dynamic power drawn by the four main propellers as their rotational speeds fluctuate. The total consumption for each navigation trial is determined by calculating the absolute difference between the initial energy capacity at takeoff and the remaining energy recorded upon reaching the target destination. When $A=0$, corresponding to the standard method without S-shape movement, the success rate reaches $80\%$ with an average battery consumption of $397.53$~J. Introducing a moderate S-shape motion with $A=0.5$ improves the success rate to $86\%$, demonstrating that the additional lateral exploration enhances environmental awareness and obstacle avoidance capability in highly congested environments. However, this improvement comes with a slight increase in battery consumption to $401.46$~J and an increase in path smoothness values due to the oscillatory motion. Increasing the amplitude further to $A=1.0$ decreases the success rate to $82\%$ and produces less stable trajectories with excessive deviations from the direct path. These results indicate that overly aggressive lateral movements reduce navigation efficiency.

\begin{table}[h]
\centering
\caption{Average battery consumption and safety comparison for different \( A \) values in high-congestion $20 \times 20$~m$^2$ environments.}
\label{tab:energy_comparison_A}
\begin{tabularx}{\linewidth}{
>{\hsize=0.05\hsize}X
|>{\hsize=0.1\hsize}X
|>{\hsize=0.13\hsize}X
|>{\hsize=0.15\hsize}X
|>{\hsize=0.15\hsize}X
|>{\hsize=0.22\hsize}X
|>{\hsize=0.22\hsize}X
}
\hline
\textbf{\( A \)} & \textbf{Success Rate (\%)} & \textbf{Avg. Plan Time per Step (ms)} & \textbf{Avg. Path Length (m)} & \textbf{Avg. Clearance (m)} & \textbf{Avg. Smoothness} & \textbf{Avg. Battery Consumption (J)} \\
\hline
0 & 80 & 78.28 & 9.74 ± 1.28 & 1.30 ± 0.26 & 6.14 ± 3.35 & 397.53 ± 39.06 \\
0.5 & \textbf{86} & 78.08 & 9.86 ± 1.41 & 1.29 ± 0.26 & 8.12 ± 3.02 & 401.46 ± 42.96 \\
1.0 & 82 & 78.31 & 9.86 ± 1.38 & 1.30 ± 0.26 & 9.71 ± 2.75 & 401.77 ± 42.14 \\
\hline
\end{tabularx}
\end{table}

Table~\ref{tab:energy_comparison_w} shows the effect of different angular frequency values. A small angular frequency value, $\boldsymbol{\omega}=(0.1)2\pi$, produces weak lateral oscillations that have limited impact on environmental exploration, resulting in a lower success rate of $79\%$. In contrast, the moderate value $\boldsymbol{\omega}=(0.8)2\pi$ achieves the highest success rate of $86\%$, indicating an effective balance between exploration and trajectory oscillation. Increasing the angular frequency to $\boldsymbol{\omega}=(1.6)2\pi$ causes excessive oscillatory behavior, which increases navigation oscillation and slightly decreases the success rate to $80\%$. Similar to the amplitude study, the battery consumption differences remain relatively small across all configurations, showing that the improved safety and success rate of the enhanced method are achieved with only a minor increase in energy consumption.

Overall, the results demonstrate that the proposed S-shape movement improves navigation robustness and obstacle avoidance performance in highly congested environments by increasing environmental perception coverage. Although the enhanced exploration behavior introduces a small increase in battery consumption and trajectory oscillation, the additional energy cost remains limited compared to the achieved improvement in navigation safety and success rate.

\begin{table}[h]
\centering
\caption{Average battery consumption and safety comparison for different \( \boldsymbol{\omega} \) values in high-congestion $20 \times 20$~m$^2$ environments.}
\label{tab:energy_comparison_w}
\begin{tabularx}{\linewidth}{
>{\hsize=0.1\hsize}X
|>{\hsize=0.1\hsize}X
|>{\hsize=0.13\hsize}X
|>{\hsize=0.17\hsize}X
|>{\hsize=0.17\hsize}X
|>{\hsize=0.2\hsize}X
|>{\hsize=0.22\hsize}X
}
\hline
\textbf{\( \boldsymbol{\omega} \)} & \textbf{Success Rate (\%)} & \textbf{Avg. Plan Time per Step (ms)} & \textbf{Avg. Path Length (m)} & \textbf{Avg. Clearance (m)} & \textbf{Avg. Smoothness} & \textbf{Avg. Battery Consumption (J)} \\
\hline
$(0.1)2\pi$ & 79 & 78.45 & 9.74 ± 1.24 & 1.29 ± 0.26 & 6.73 ± 2.98 & 397.80 ± 37.82 \\
$(0.8)2\pi$ & \textbf{86} & 78.08 & 9.86 ± 1.41 & 1.29 ± 0.26 & 8.12 ± 3.02 & 401.46 ± 42.96 \\
$(1.6)2\pi$ & 80 & 78.10 & 9.71 ± 1.14 & 1.31 ± 0.26 & 7.97 ± 2.39 & 396.88 ± 34.73 \\
\hline
\end{tabularx}
\end{table}

\subsection{Computational Efficiency and Deployment Feasibility}

One of the main advantages of the proposed framework is its suitability for deployment on resource-limited UAV platforms using inexpensive hardware and low-cost sensors. Unlike many existing UAV navigation approaches that depend on computationally expensive perception systems such as LiDARs, RGB-D cameras, or high-resolution mapping modules, the proposed method relies only on sparse rangefinder measurements together with a lightweight PPO policy network.

Table~\ref{tab:model_efficiency} summarizes the computational characteristics of the trained PPO model. The policy network consists of a compact multilayer perceptron architecture with two hidden layers of 64 neurons each, resulting in only 17,283 trainable parameters and a memory footprint of approximately 0.066~MB. This compact architecture enables deployment on embedded processors, micro-SBCs, or lightweight onboard computing units commonly used in low-cost UAV systems.

In addition to the low memory requirement, the proposed model achieves an average inference latency of only 0.81~ms, corresponding to a theoretical maximum control frequency of approximately 1238~Hz. This execution speed is significantly higher than the typical control frequencies required for real-time UAV navigation, which commonly range between 50 and 100~Hz. Therefore, the proposed framework can generate navigation commands in real time with negligible computational overhead, even in dynamic and cluttered environments.

The results demonstrate that the proposed DRL framework provides an effective balance between navigation performance and computational efficiency. The lightweight architecture, combined with the use of inexpensive rangefinder sensors, makes the proposed approach particularly suitable for low-cost UAV platforms operating under hardware and energy constraints.

\begin{table}[h]
\centering
\caption{Computational Resource Analysis of the Trained PPO Policy}
\label{tab:model_efficiency}
\begin{tabularx}{\linewidth}{
>{\hsize=0.5\hsize}X
|>{\hsize=0.5\hsize}X
}
\hline
\textbf{Metric} & \textbf{Value} \\
\hline
Network Architecture & MLP [64, 64] \\
Total Parameters & 17,283 \\
Model Size (Memory Footprint) & 0.0659 MB \\
Observation Space Dimension & 68 \\
Action Space Dimension & 1 (Continuous) \\
Average Inference Latency & 0.81 ms \\
Maximum Control Frequency & 1238.3 Hz \\
\hline
\end{tabularx}
\end{table}

\subsection{Real-World Experimental Results} \label{subsec:Real_World_Experimental_Results}
The proposed algorithm was validated in a real-world setup using a Crazyflie~2.1 quadcopter\footnote{\url{https://www.bitcraze.io/products/crazyflie-2-1-plus/}}. The Crazyflie was equipped with a Flow Deck v2\footnote{\url{https://www.bitcraze.io/products/flow-deck-v2/}} for onboard position estimation, and a Multi-ranger Deck\footnote{\url{https://www.bitcraze.io/products/multi-ranger-deck/}} to measure the distance to the closest obstacle in three directions: front, left, and right (Figure~\ref{fig:env_setup}a). Communication between the Crazyflie and a host PC was established using the Crazyradio PA\footnote{\url{https://www.bitcraze.io/products/crazyradio-pa/}}(Figure~\ref{fig:env_setup}b). Figures~\ref{fig:env_setup}c--f show the testing environments: without obstacles (c), with one static obstacle (d), with four static obstacles (e), and with one dynamic obstacle (f). The environment size is $1.63\times2.0~m^2$.
\begin{figure}
  \centering
  \begin{subfigure}{0.32\linewidth}
    \centering
    \includegraphics[width=\linewidth, height=0.75\linewidth]{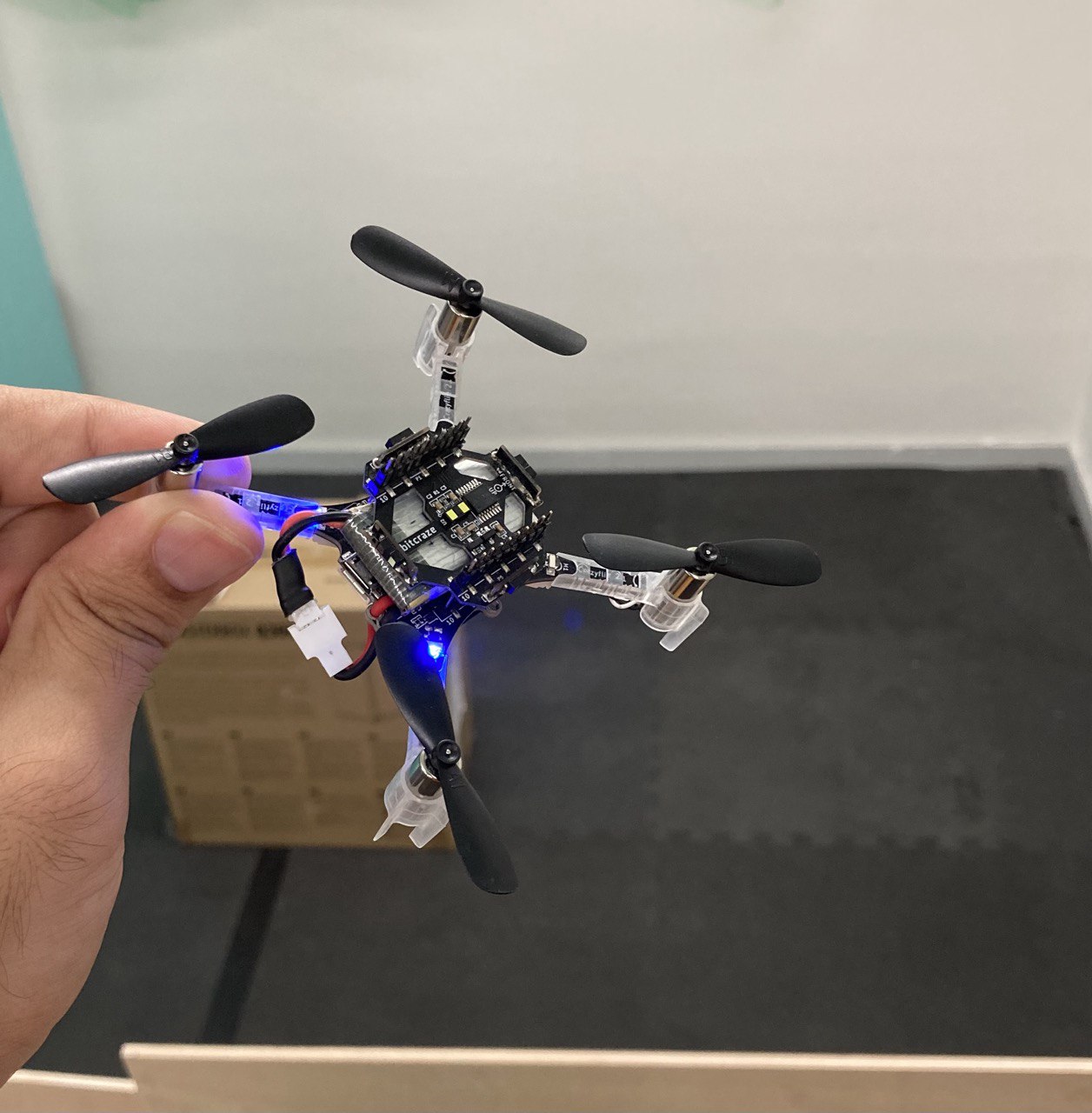} 
    \caption{}
  \end{subfigure}
  \begin{subfigure}{0.32\linewidth}
    \centering
    \includegraphics[width=\linewidth, height=0.75\linewidth]{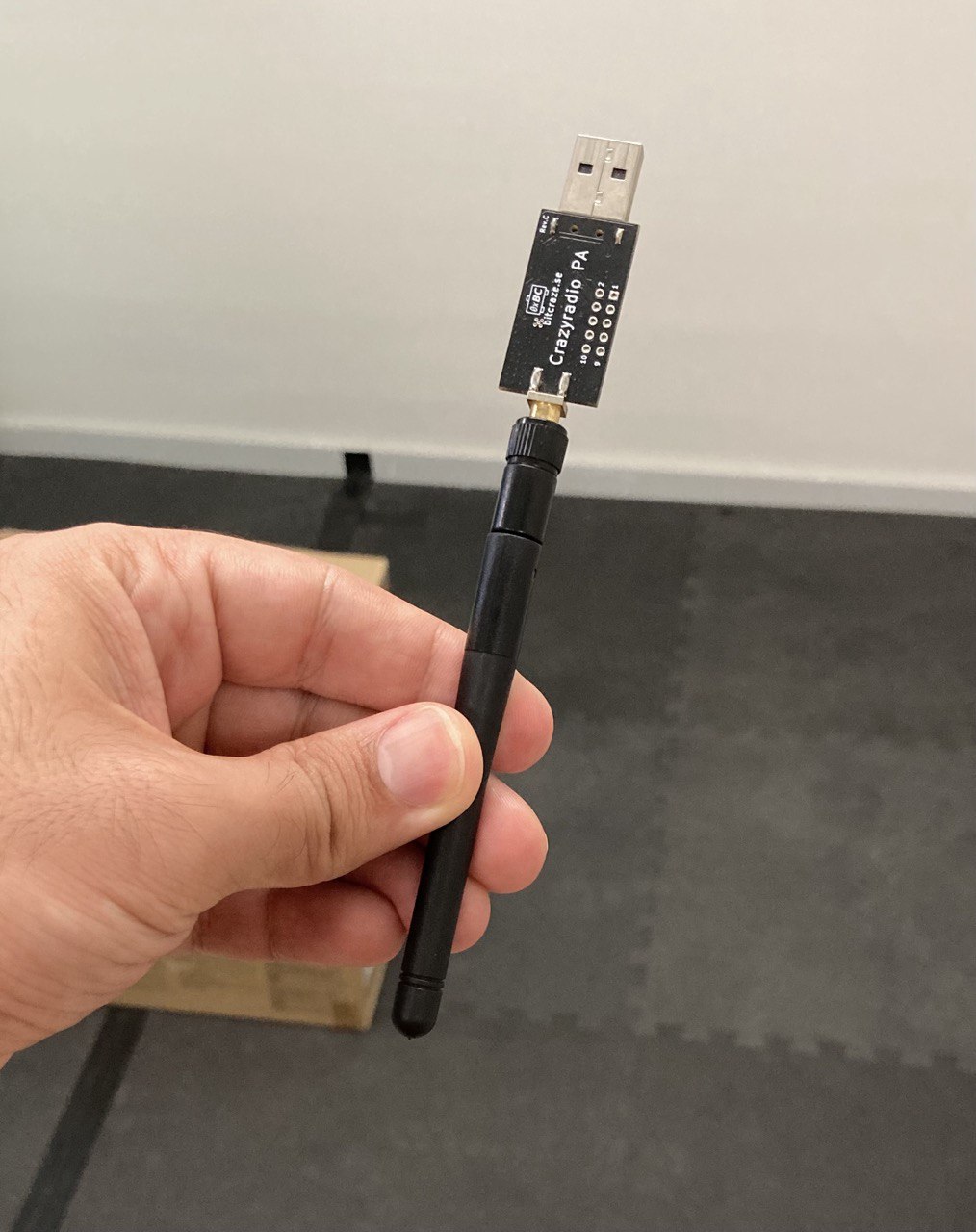} 
    \caption{}
  \end{subfigure}
  \begin{subfigure}{0.32\linewidth}
    \centering
    \includegraphics[width=\linewidth, height=0.75\linewidth]{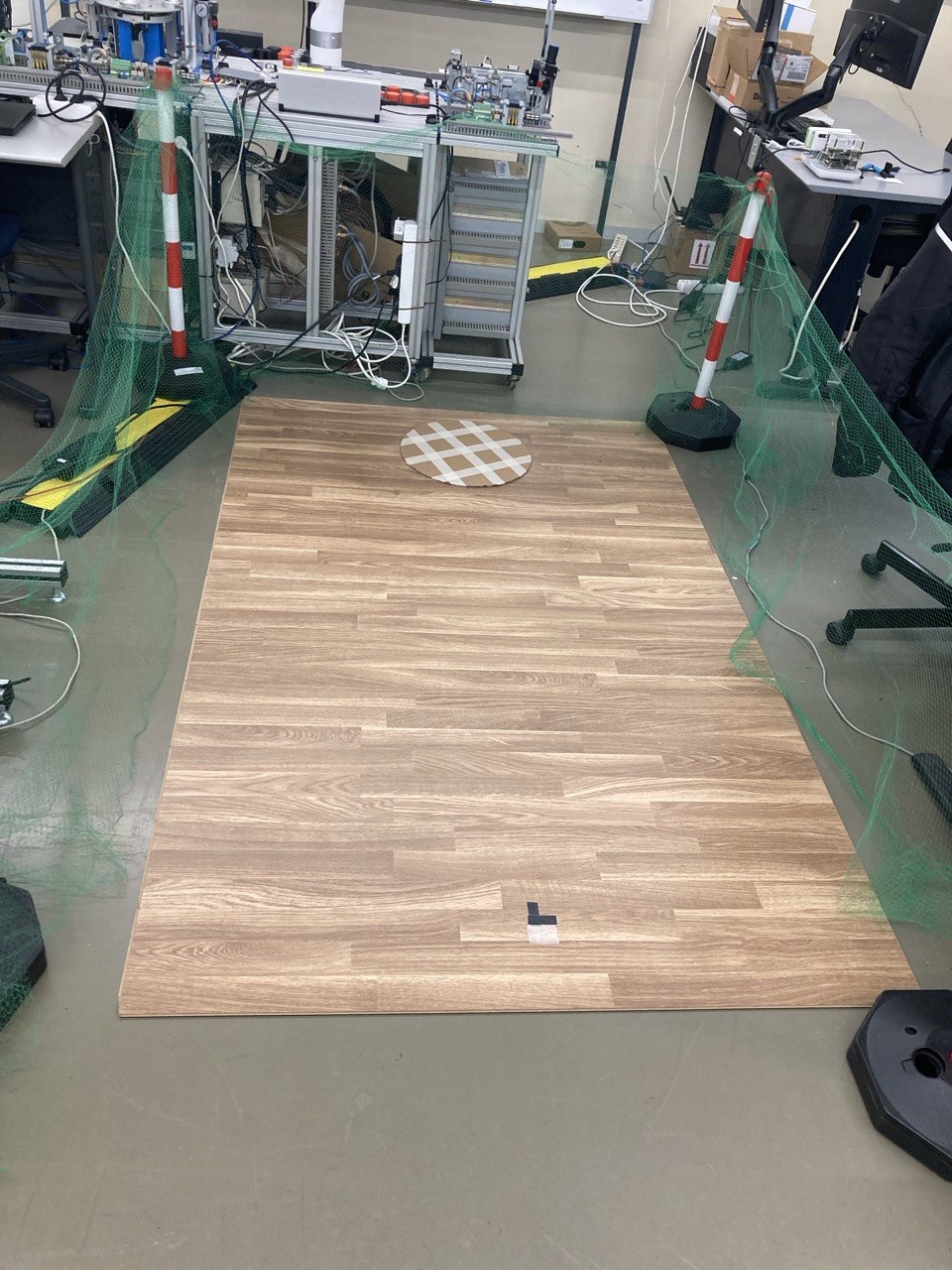} 
    \caption{}
  \end{subfigure}
  \begin{subfigure}{0.32\linewidth}
    \centering
    \includegraphics[width=\linewidth, height=0.75\linewidth]{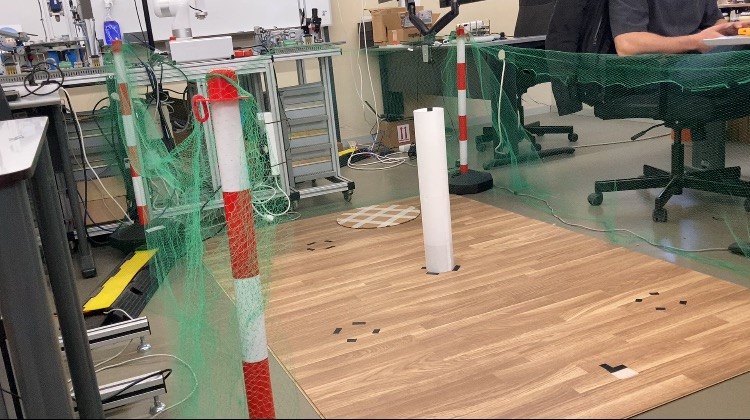} 
    \caption{}
  \end{subfigure}
  \begin{subfigure}{0.32\linewidth}
    \centering
    \includegraphics[width=\linewidth, height=0.75\linewidth]{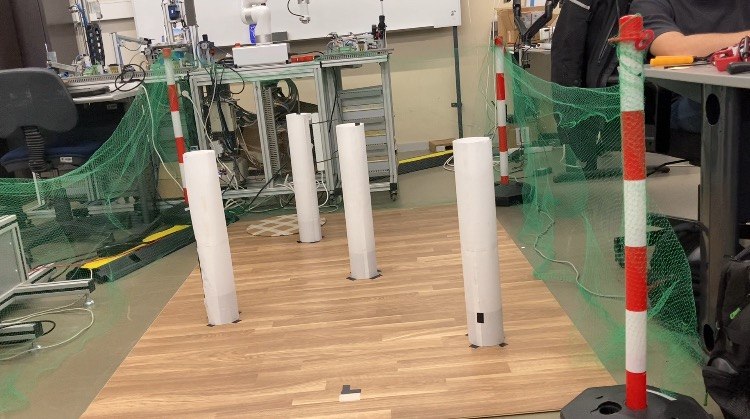} 
    \caption{}
  \end{subfigure}
  \begin{subfigure}{0.32\linewidth}
    \centering
    \includegraphics[width=\linewidth, height=0.75\linewidth]{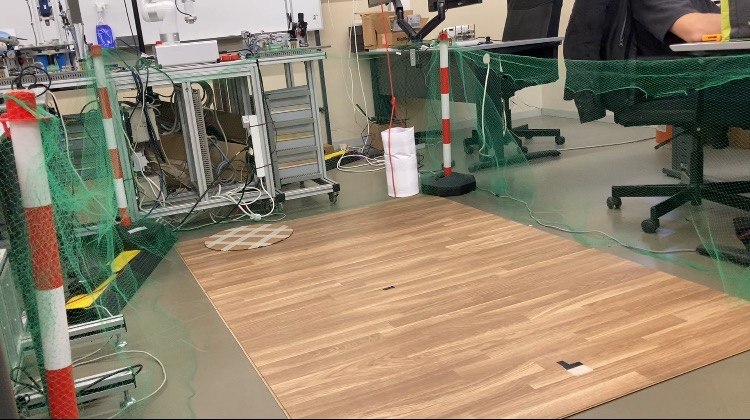} 
    \caption{}
  \end{subfigure}
\caption{Real-world experimental setup: (a) Crazyflie~2.1 nano quadcopter, (b) Crazyradio PA communication module, and experimental environments including (c) obstacle-free, (d) one static obstacle, (e) four static obstacles, and (f) one dynamic obstacle scenarios.}
  \label{fig:env_setup}
\end{figure}

The host PC ran Ubuntu~22.04.5 LTS with an Intel Core i7 CPU, 16~GB RAM, and an NVIDIA RTX 3060~Ti GPU. The onboard sensors provide distance measurements and localization feedback, while the proposed method runs externally on a host PC and sends velocity commands to the UAV in real time.

The Crazyflie receives velocity commands from the PC every 100~ms via the Crazyradio. These commands are generated by the proposed method running on the PC. The Crazyflie runs the official firmware\footnote{\url{https://github.com/bitcraze/crazyflie-firmware}}, which includes a PID controller that translates the velocity commands into motor outputs. The onboard STM32F405 microcontroller (Cortex-M4, 168 MHz, 192 KB SRAM, 1 MB flash) handles these operations. Feedback from the Flow Deck, including estimated position and yaw angle in the robot frame, is sent back to the PC for use in the proposed method. Before deployment, the scenario was simulated in Gazebo Garden using CrazySim~\cite{LlanesICRA2024}. The policy was trained from scratch in Gazebo to adapt to the real-world setting.

Four experimental scenarios were considered: (i) obstacle-free, (ii) one static obstacle, (iii) four static obstacles, and (iv) one dynamic obstacle. All obstacles had a cylindrical shape with a radius of 0.05~m, and the flight altitude was fixed at 0.4~m throughout all experiments.

\textbf{Obstacle-free:}
The first experiment evaluated the UAV performance in an obstacle-free environment with different start and goal positions. The quantitative results are summarized in Table~\ref{tab:real_test_info_no_obstacle}, while the corresponding trajectories are illustrated in Figure~\ref{fig:paths_episodes_no_obstacle}. In each subplot of this figure, the drone's starting position is designated by a small green-filled circle, and the goal destination is indicated by a red asterisk surrounded by a dotted red region. The drone's trajectory is depicted as a blue line, with arrows representing its heading at each timestep. The color of these arrows corresponds to the heading angle, mapped using the color bar on the right side of the plot, where $\pi$ is yellow, $-\pi$ is purple, and intermediate values scale continuously between them. All subsequent trajectory plots in this paper follow this identical visual convention. 

The UAV successfully reached the target region in all test episodes. The average planning time remained nearly constant at approximately 32--33~ms, demonstrating stable real-time performance. The generated trajectories remained smooth and confirmed the reliability of the proposed approach in basic navigation tasks.

\begin{table}[h]
\centering
\caption{Quantitative results of real-world UAV navigation experiments in an obstacle-free environment.}
\label{tab:real_test_info_no_obstacle}
\begin{tabularx}{\linewidth}{
>{\hsize=0.15\hsize}X
|>{\hsize=0.25\hsize}X
|>{\hsize=0.35\hsize}X
|>{\hsize=0.25\hsize}X
}
\hline
\textbf{Episode} & \textbf{Path Length (m)} & \textbf{Avg. Plan Time (ms)} & \textbf{Smoothness} \\
\hline
a & 0.686 & 33.29 & 1.337 \\
b & 1.115 & 33.00 & 2.186 \\
c & 1.297 & 32.02 & 1.716 \\
\hline
\end{tabularx}
\end{table}

\begin{figure}
  \centering
  \begin{subfigure}{0.32\linewidth}
    \centering
    \includegraphics[width=\linewidth, height=0.75\linewidth]{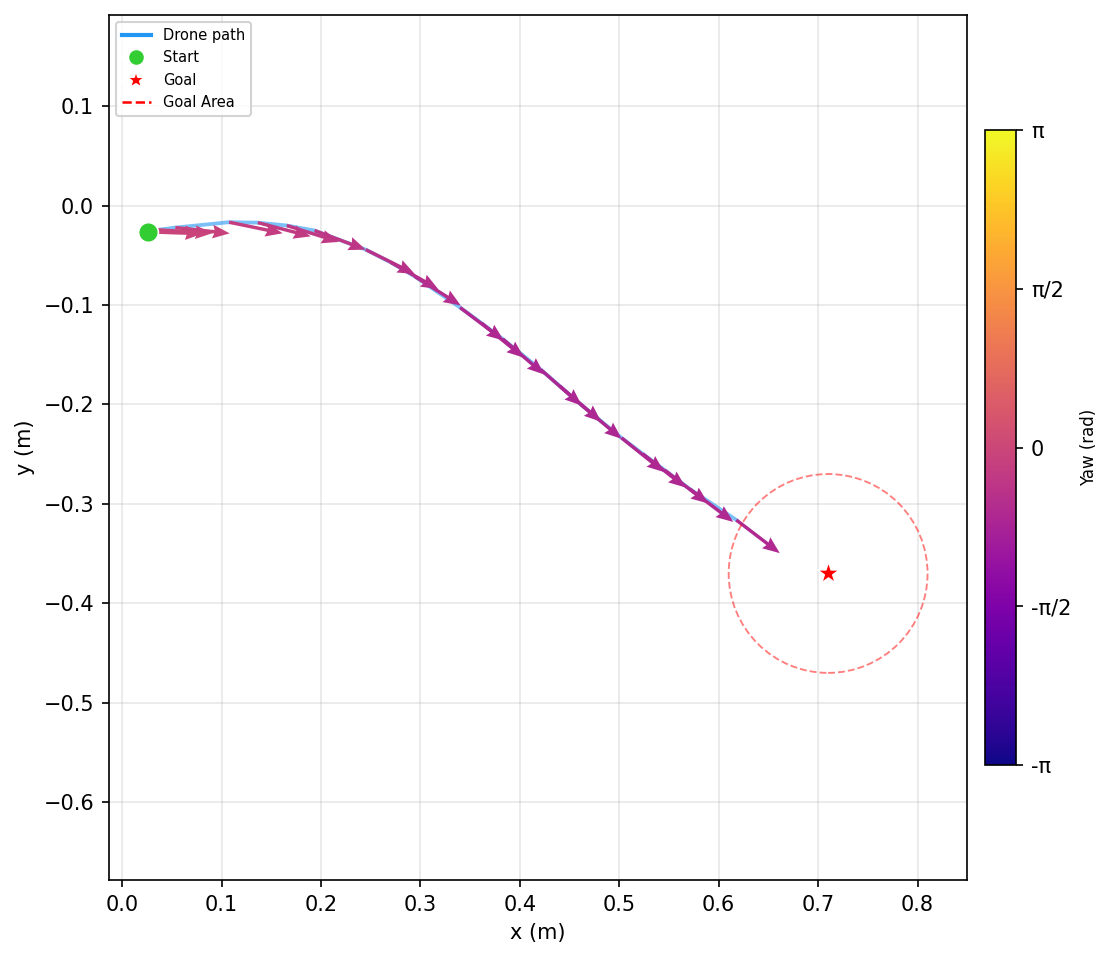} 
    \caption{}
  \end{subfigure}
  \begin{subfigure}{0.32\linewidth}
    \centering
    \includegraphics[width=\linewidth, height=0.75\linewidth]{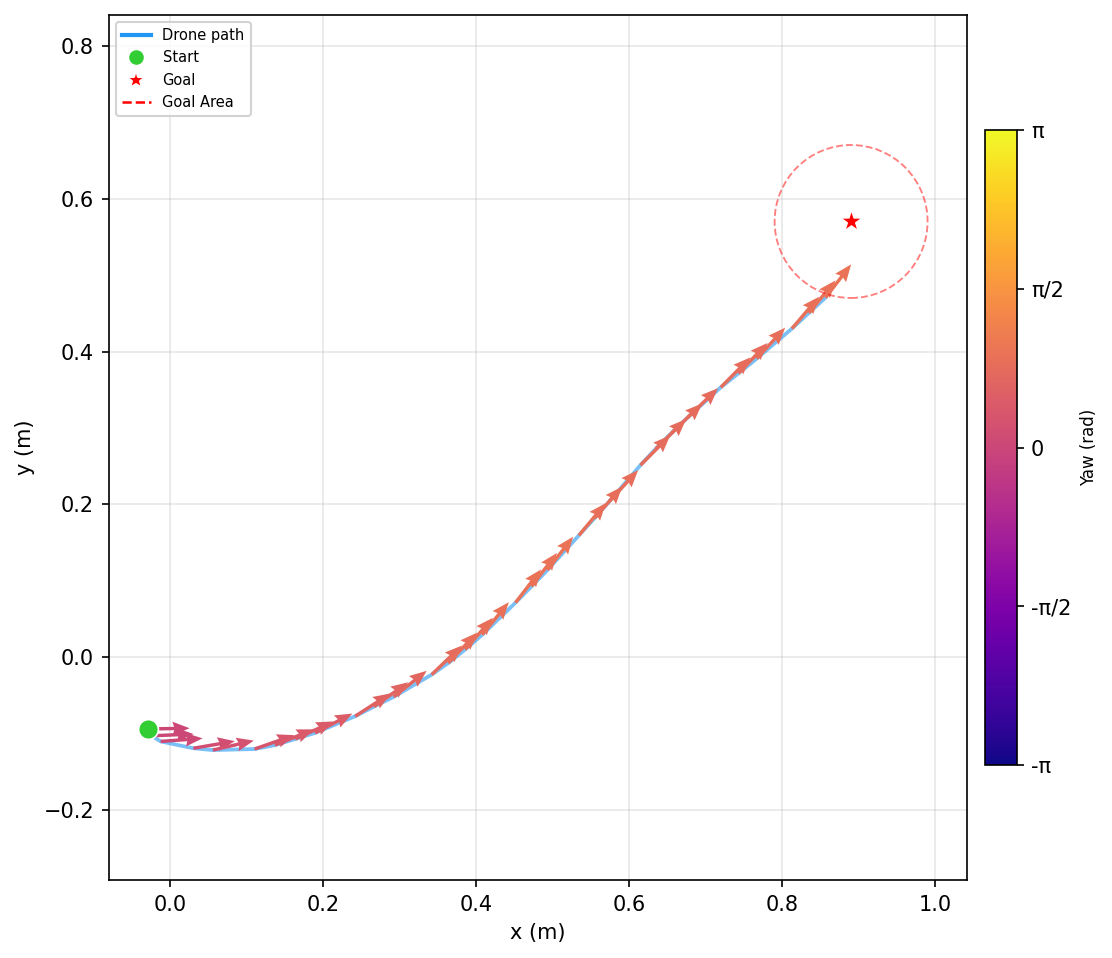} 
    \caption{}
  \end{subfigure}
  \begin{subfigure}{0.32\linewidth}
    \centering
    \includegraphics[width=\linewidth, height=0.75\linewidth]{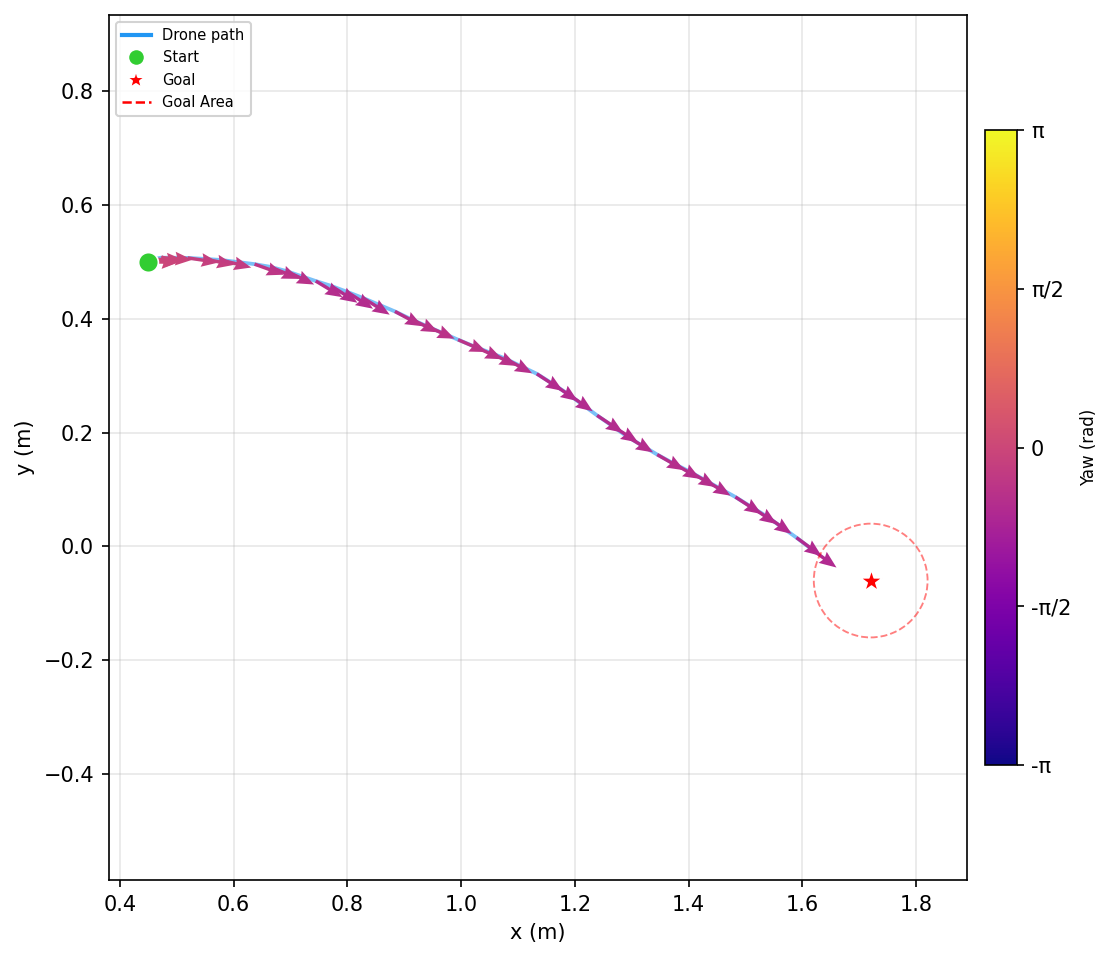} 
    \caption{}
  \end{subfigure}
  \caption{The plots of real-world UAV trajectories in the obstacle-free environment for different experimental episodes.}
  \label{fig:paths_episodes_no_obstacle}
\end{figure}

Figure~\ref{fig:video_shot_no_obstacle}a displays a composite snapshot of a successful experimental flight, constructed by overlaying sequential frames of the drone onto a single static background to visualize its complete path from takeoff to landing. The corresponding trajectory data for this entire episode is mapped out in Figure~\ref{fig:video_shot_no_obstacle}b.

\begin{figure}
  \centering
  \begin{subfigure}{0.48\linewidth}
    \centering
    \includegraphics[width=\linewidth, height=0.75\linewidth]{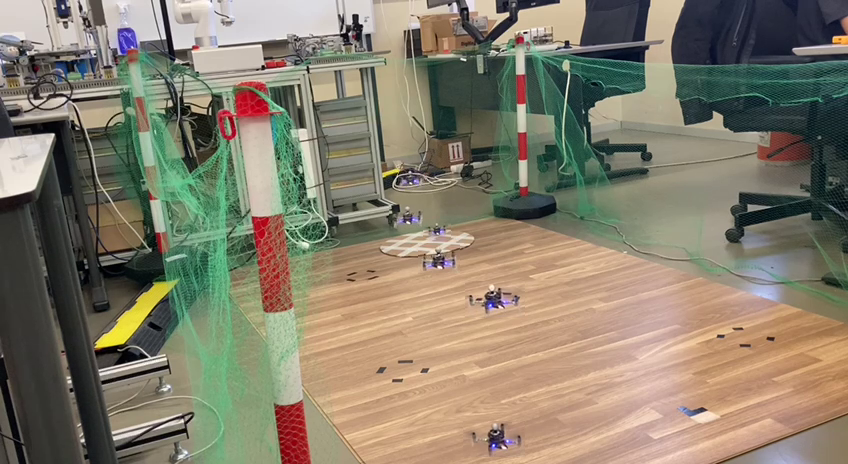} 
    \caption{}
  \end{subfigure}
  \begin{subfigure}{0.48\linewidth}
    \centering
    \includegraphics[width=\linewidth, height=0.75\linewidth]{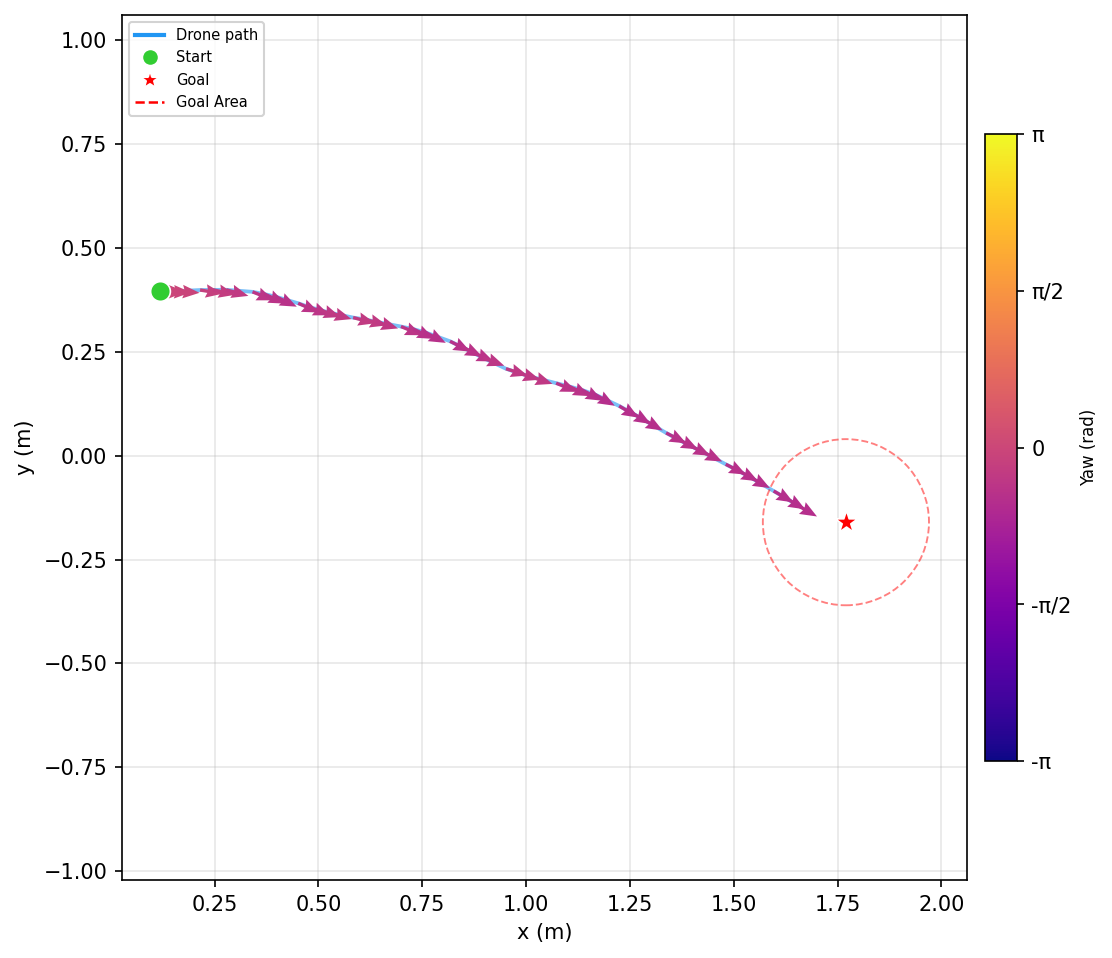} 
    \caption{}
  \end{subfigure}
 \caption{Experimental results for the obstacle-free scenario: (a) Composite snapshot tracking the drone's sequential positions from takeoff to landing against a fixed background, and (b) the corresponding 2D trajectory plot of the full episode.}
  \label{fig:video_shot_no_obstacle}
\end{figure}

\textbf{One Static Obstacle:}
The second experiment evaluated the UAV in the presence of a single static obstacle positioned between the start and target regions. Table~\ref{tab:real_test_info_one_static_obstacle} reports the experimental results, and the corresponding trajectories are shown in Figure~\ref{fig:paths_episodes_one_static}. In each subplot of this figure, the static obstacle is designated by a gray-filled circle.

In episode~a, the static obstacle is located directly between the starting position and the target. When the drone's front range sensor detects the obstacle, the UAV initiates a clockwise rotation. As the maneuver progresses, the left-side sensor detects the obstacle, which prompts the drone to turn further in the same direction. After successfully clearing the obstacle, the drone steers back toward the goal to compensate for its heading angle error, reaching the target successfully.

In episode~b, the target is located along the positive $y$-axis, and the obstacle is offset from the direct line of sight between the start and goal positions. Consequently, the obstacle exerts less influence on the drone's path, resulting in a shallower, less aggressive turn during the avoidance maneuver. A symmetric behavior is observed in episode~c, where a similar trajectory deviation occurs toward the negative $y$-axis to accommodate a target shifted in that direction.

The UAV successfully avoided the obstacle in all experiments while maintaining a stable trajectory. The average clearance values ranged from 0.413~m to 0.508~m, showing that the controller preserved a safe distance from the obstacle during navigation. The planning time remained close to 31--33~ms, which is comparable to the obstacle-free case and indicates that obstacle avoidance did not significantly increase computational cost. The results demonstrate that the proposed method can efficiently plan safe and feasible trajectories in the presence of a static obstacle.

\begin{table}[h]
\centering
\caption{Quantitative results of real-world UAV navigation experiments in the presence of one static obstacle.}
\label{tab:real_test_info_one_static_obstacle}
\begin{tabularx}{\linewidth}{
>{\hsize=0.1\hsize}X
|>{\hsize=0.2\hsize}X
|>{\hsize=0.2\hsize}X
|>{\hsize=0.2\hsize}X
|>{\hsize=0.2\hsize}X
}
\hline
\textbf{Episode} & \textbf{Path Length (m)} & \textbf{Avg. Clearance (m)} & \textbf{Avg. Plan Time (ms)} & \textbf{Smoothness} \\
\hline
a & 1.684 & 0.464 & 33.262 & 4.313 \\
b & 1.340 & 0.413 & 31.662 & 4.434 \\
c & 1.449 & 0.508 & 31.062 & 5.009 \\
\hline
\end{tabularx}
\end{table}

\begin{figure}
  \centering
  \begin{subfigure}{0.32\linewidth}
    \centering
    \includegraphics[width=\linewidth, height=0.75\linewidth]{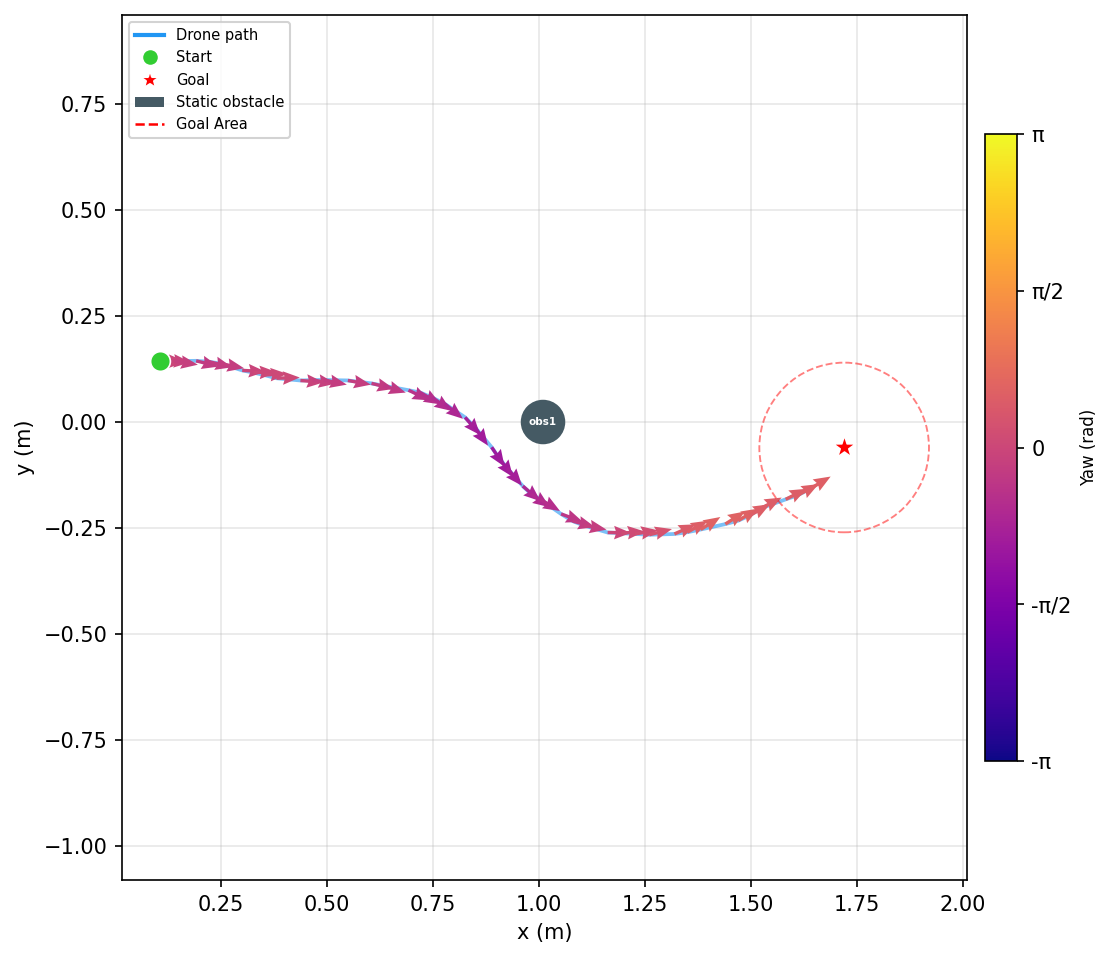} 
    \caption{}
  \end{subfigure}
  \begin{subfigure}{0.32\linewidth}
    \centering
    \includegraphics[width=\linewidth, height=0.75\linewidth]{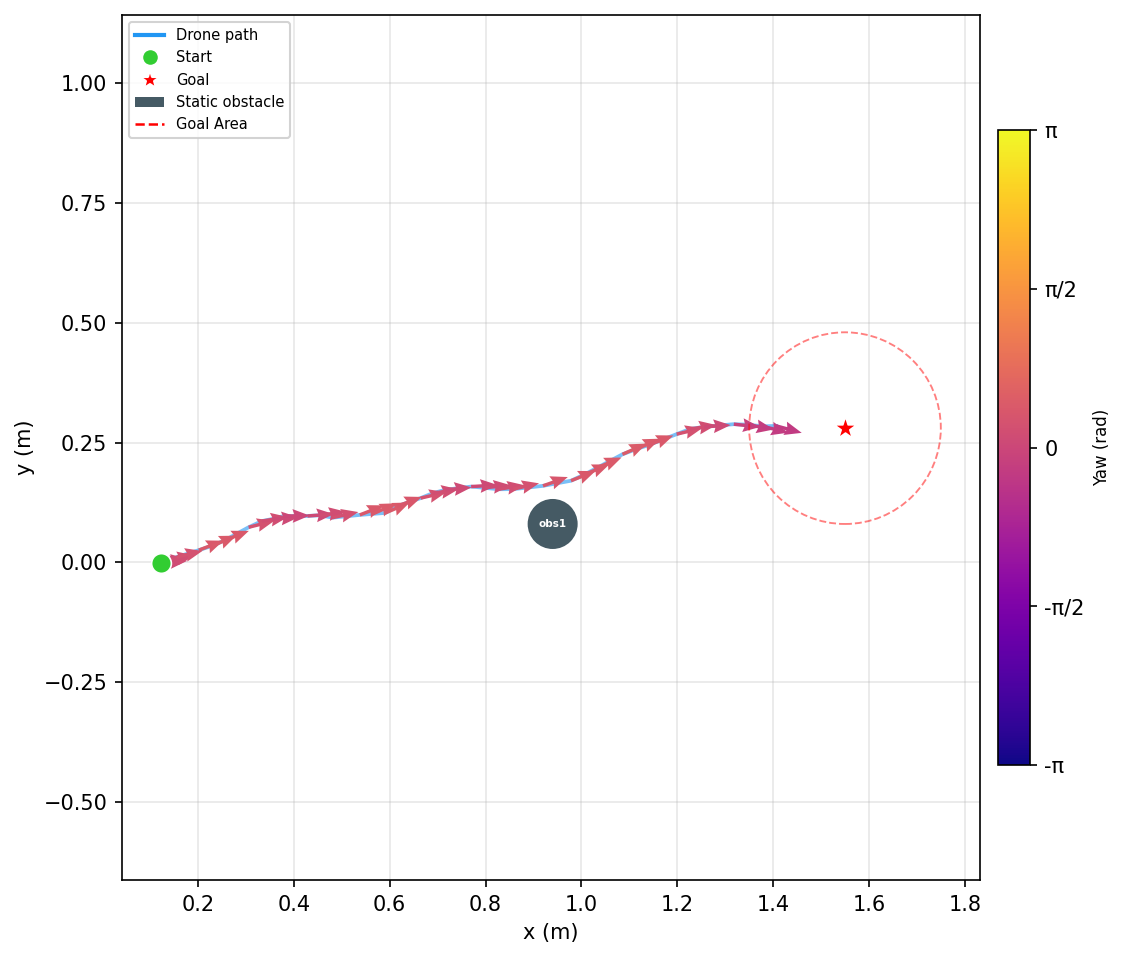} 
    \caption{}
  \end{subfigure}
  \begin{subfigure}{0.32\linewidth}
    \centering
    \includegraphics[width=\linewidth, height=0.75\linewidth]{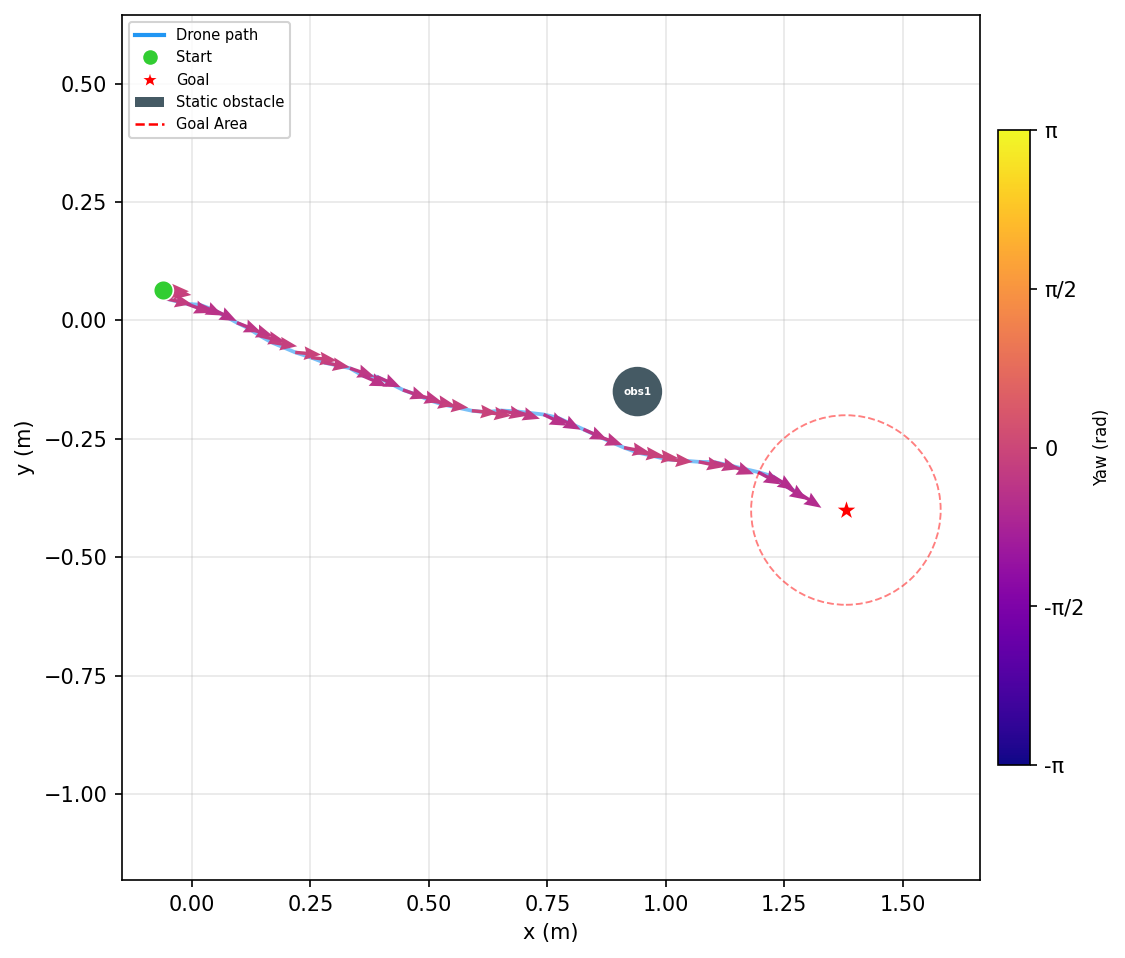} 
    \caption{}
  \end{subfigure}
 \caption{The plots of real-world UAV trajectories in the environment containing one static obstacle.}
  \label{fig:paths_episodes_one_static}
\end{figure}

Figure~\ref{fig:video_shot_one_static}a presents a composite snapshot of a successful flight navigating around a single static obstacle, created by overlaying sequential video frames onto a fixed background to capture the full trajectory from takeoff to landing. Figure~\ref{fig:video_shot_one_static}b illustrates the corresponding trajectory data recorded for this episode.

\begin{figure}
  \centering
  \begin{subfigure}{0.48\linewidth}
    \centering
    \includegraphics[width=\linewidth, height=0.75\linewidth]{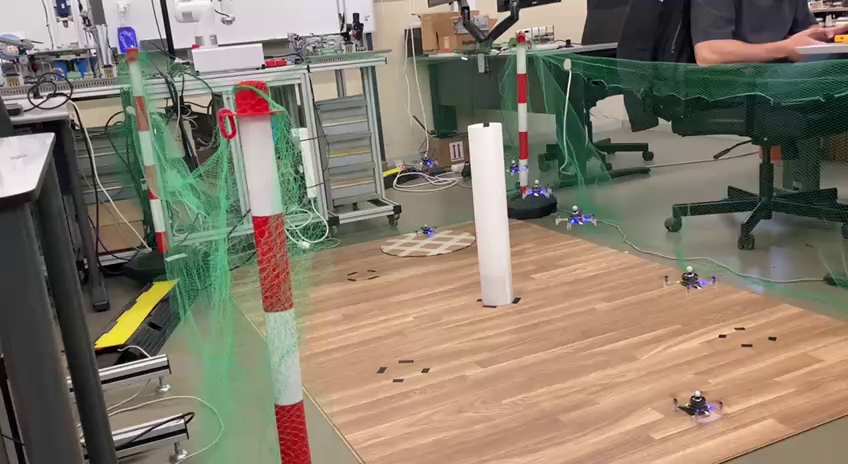} 
    \caption{}
  \end{subfigure}
  \begin{subfigure}{0.48\linewidth}
    \centering
    \includegraphics[width=\linewidth, height=0.75\linewidth]{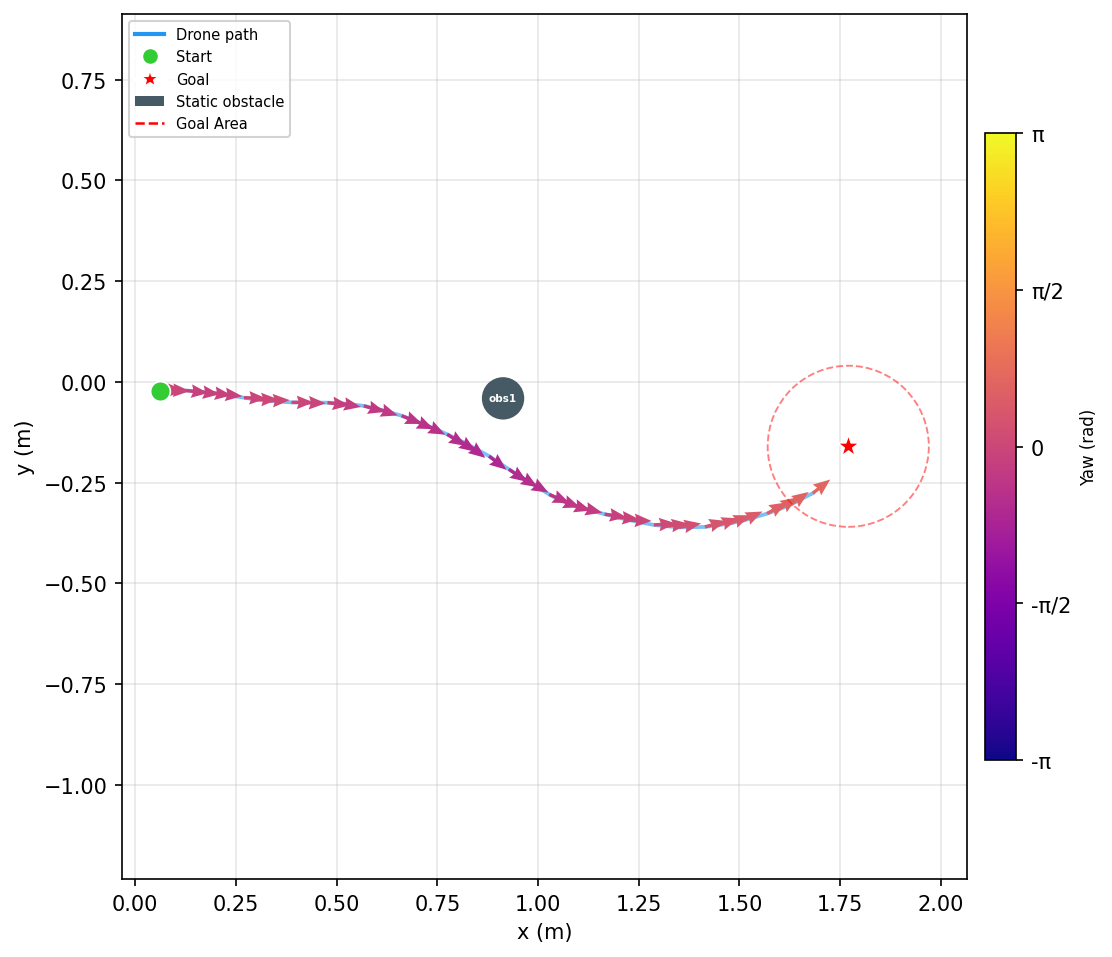} 
    \caption{}
  \end{subfigure}
 \caption{Experimental results with a single static obstacle: (a) Composite snapshot tracking the drone's sequential positions from takeoff to landing against a fixed background, and (b) the corresponding 2D trajectory plot of the full episode.}
  \label{fig:video_shot_one_static}
\end{figure}

\textbf{Four Static Obstacles:}
To further evaluate the robustness of the proposed method, experiments were conducted in a cluttered environment containing four static obstacles. To test adaptability, the target positions were varied between episodes. The quantitative results are summarized in Table~\ref{tab:real_test_info_four_static_obstacle}, and the corresponding trajectories are illustrated in Figure~\ref{fig:paths_episodes_four_static}.

In episode~a, the drone successfully reached the target. The resulting path is smooth, and the average clearance demonstrates that the generated trajectory remains safe and reliable, even within a more constrained free space.

Conversely, in episode~b, the drone collided with an obstacle and failed to reach the goal. We ran the standard method (without S-shape movement). This failure occurred because the obstacle fell within the range sensor's blind spot. Specifically, the range sensor is mounted between two propellers, leaving a structural gap between the sensor location and the outer edge of the propeller radius. Because the drone's sensing system could not detect the obstacle from this approach angle, it was unable to initiate an avoidance maneuver.

In episode~c, a similar scenario was tested, but the goal position was shifted slightly. This modification forced the drone to execute a sharper turn after passing the first obstacle to compensate for its yaw angle error. This rotational adjustment allowed the right-side sensor to partially detect the obstacle that caused the collision in episode~b, enabling the drone to successfully avoid it and reach the destination.

Compared to the single-obstacle scenario, navigation became significantly more challenging due to the reduced free space and increased maneuvering requirements. Consequently, the average planning time increased slightly to approximately 36--39~ms to accommodate the increased environmental complexity.

\begin{table}[h]
\centering
\caption{Quantitative results of real-world UAV navigation experiments in the presence of four static obstacles.}
\label{tab:real_test_info_four_static_obstacle}
\begin{tabularx}{\linewidth}{
>{\hsize=0.1\hsize}X
|>{\hsize=0.1\hsize}X
|>{\hsize=0.2\hsize}X
|>{\hsize=0.2\hsize}X
|>{\hsize=0.2\hsize}X
|>{\hsize=0.2\hsize}X
}
\hline
\textbf{Episode} & \textbf{Goal} & \textbf{Path Length (m)} & \textbf{Avg. Clearance (m)} & \textbf{Avg. Plan Time (ms)} & \textbf{Smoothness} \\
\hline
a & Yes & 1.302 & 0.270 & 36.00 & 5.959   \\
b & No & 1.410 & 0.235 & 40.841 & 7.034  \\
c & Yes & 1.817 & 0.253 & 38.98 & 7.237  \\
\hline
\end{tabularx}
\end{table}

\begin{figure}
  \centering
  \begin{subfigure}{0.32\linewidth}
    \centering
    \includegraphics[width=\linewidth, height=0.75\linewidth]{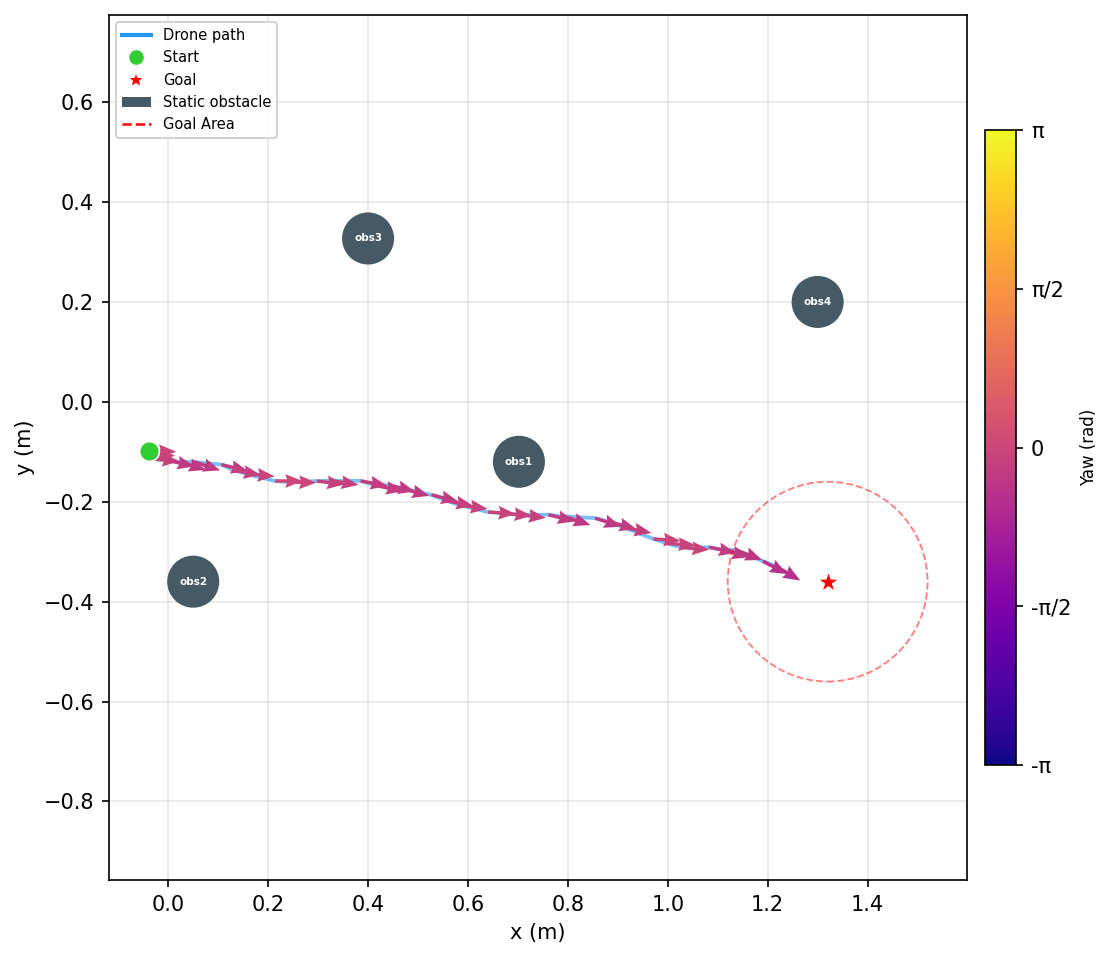} 
    \caption{}
  \end{subfigure}
  \begin{subfigure}{0.32\linewidth}
    \centering
    \includegraphics[width=\linewidth, height=0.75\linewidth]{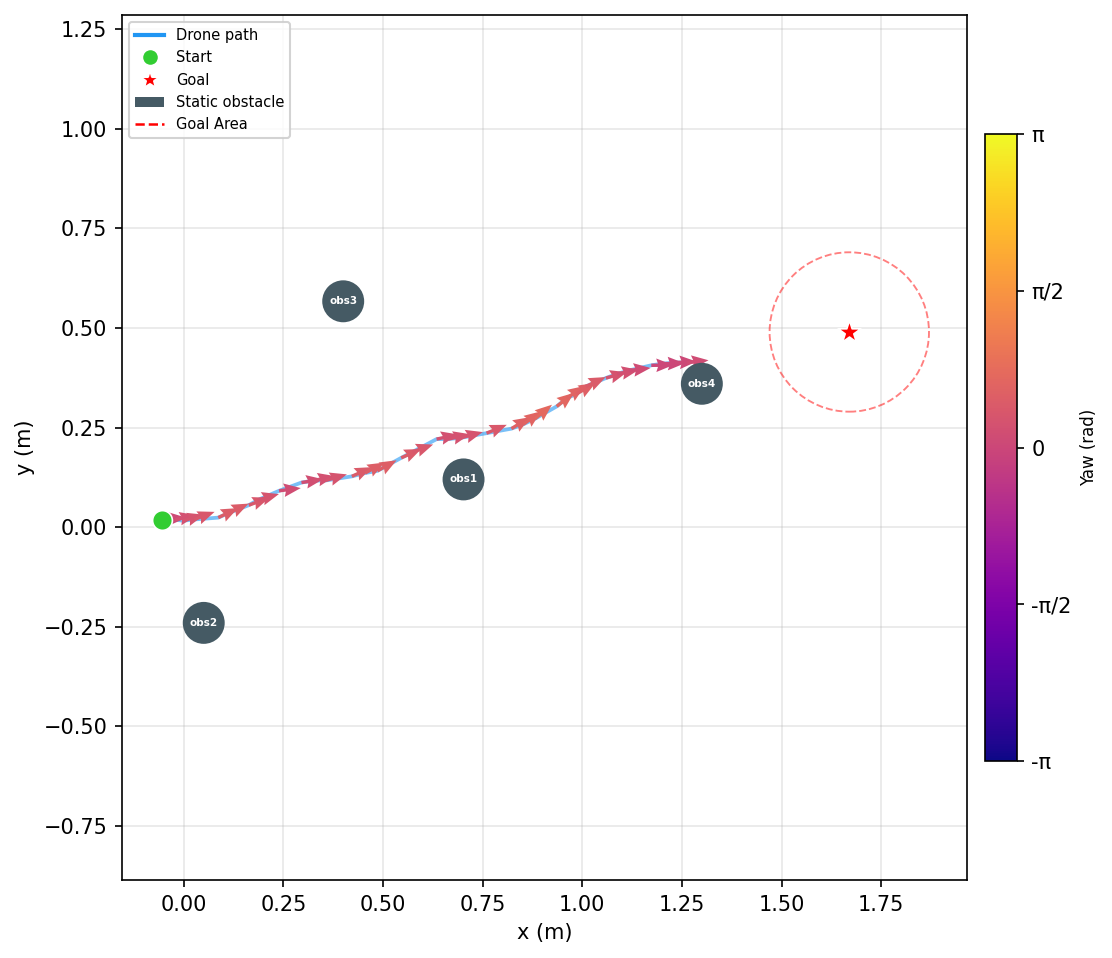} 
    \caption{}
  \end{subfigure}
  \begin{subfigure}{0.32\linewidth}
    \centering
    \includegraphics[width=\linewidth, height=0.75\linewidth]{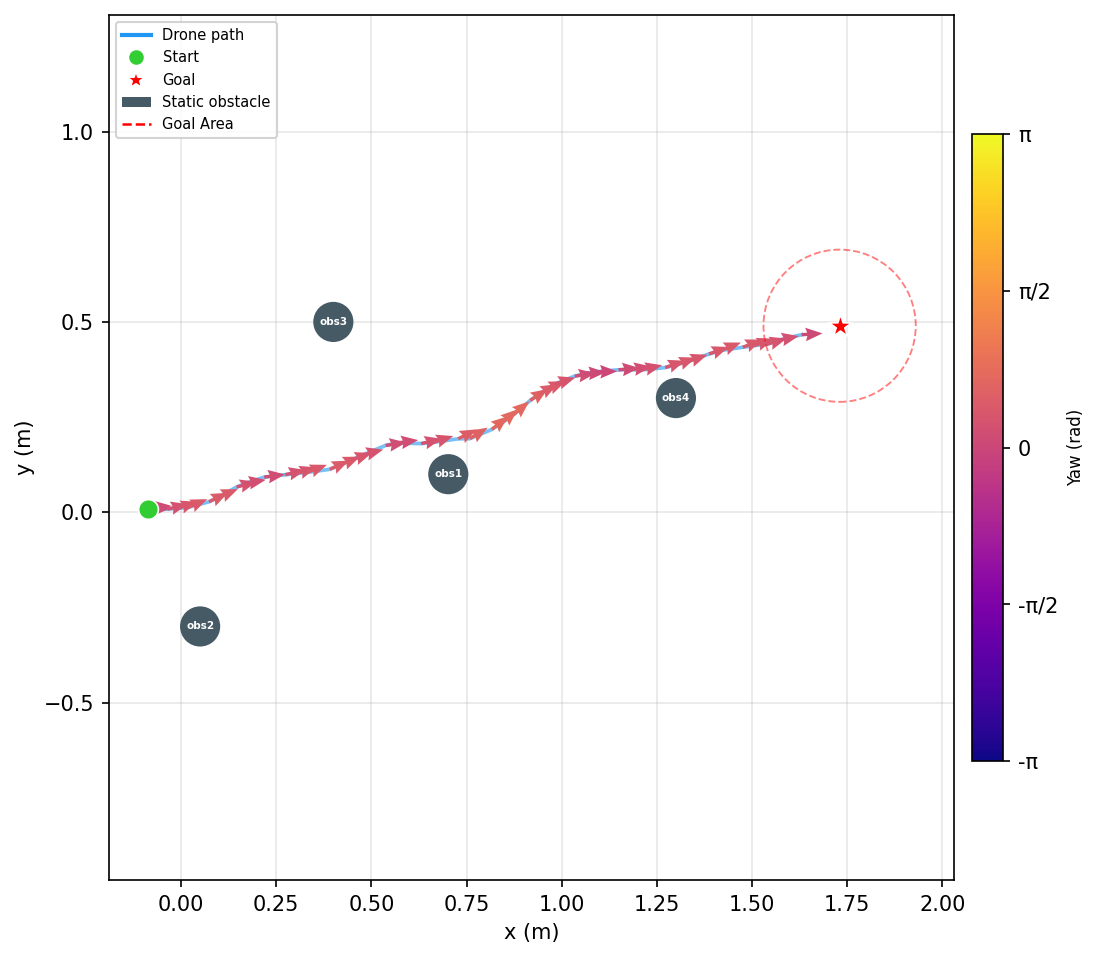} 
    \caption{}
  \end{subfigure}
  \caption{The plots of real-world UAV trajectories in the environment containing four static obstacles.}
  \label{fig:paths_episodes_four_static}
\end{figure}

Figure~\ref{fig:video_shot_four_static}a displays a composite snapshot of a successful flight navigating around four static obstacles, constructed by overlaying sequential video frames onto a fixed background to capture the full trajectory from takeoff to landing. Figure~\ref{fig:video_shot_four_static}b illustrates the corresponding trajectory data recorded for this episode.

\begin{figure}
  \centering
  \begin{subfigure}{0.48\linewidth}
    \centering
    \includegraphics[width=\linewidth, height=0.75\linewidth]{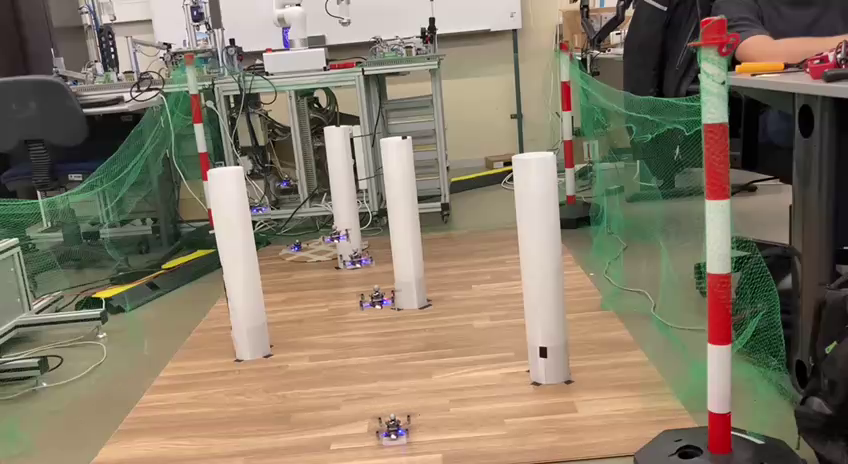} 
    \caption{}
  \end{subfigure}
  \begin{subfigure}{0.48\linewidth}
    \centering
    \includegraphics[width=\linewidth, height=0.75\linewidth]{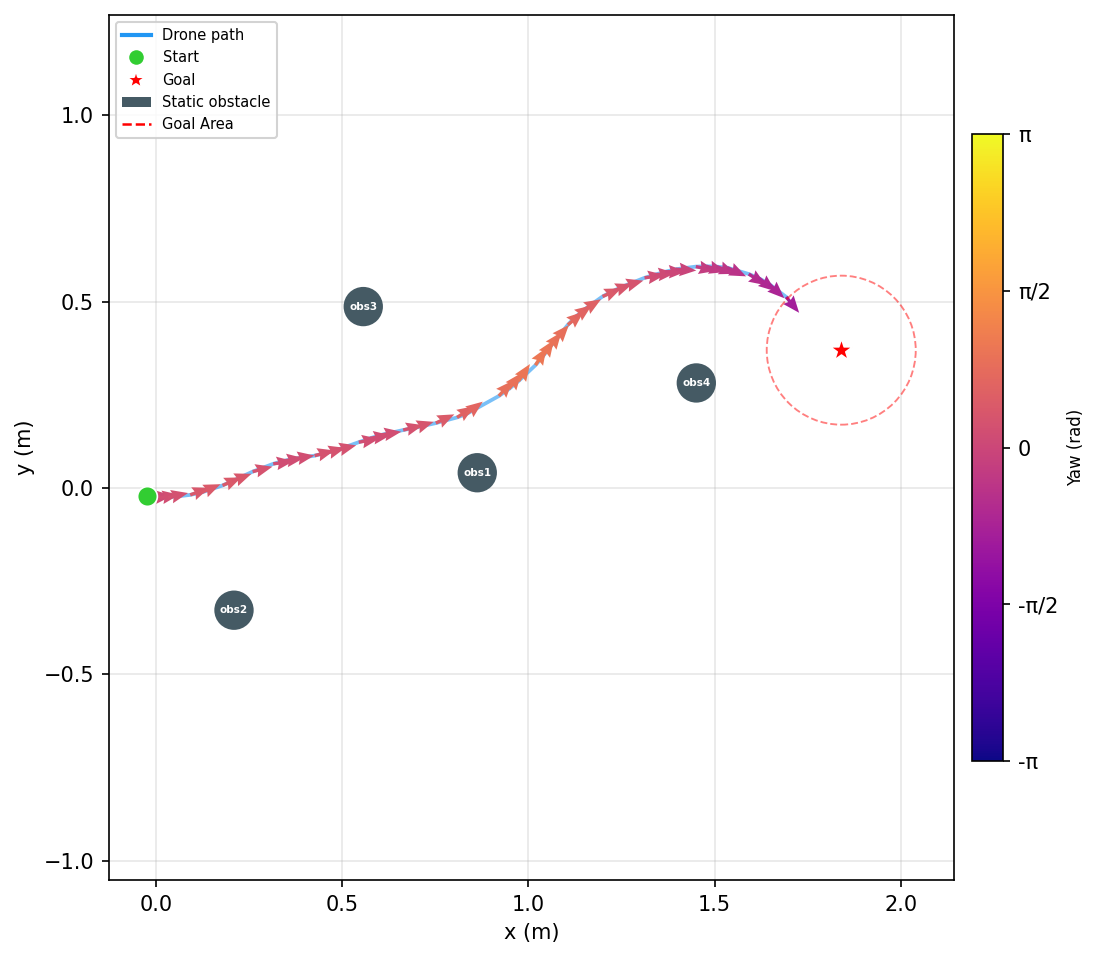} 
    \caption{}
  \end{subfigure}
 \caption{Experimental results with four static obstacles: (a) Composite snapshot tracking the drone's sequential positions from takeoff to landing against a fixed background, and (b) the corresponding 2D trajectory plot of the full episode.}
  \label{fig:video_shot_four_static}
\end{figure}

\textbf{Dynamic Obstacle:}
The final experiment evaluated the proposed approach in an environment containing a moving obstacle. The quantitative results are summarized in Table~\ref{tab:real_test_info_one_dynamic_obstacle}, and the corresponding trajectories are shown in Figure~\ref{fig:paths_episodes_one_dynamic}. In each subplot of this figure, the dynamic obstacle is designated by an orange-filled circle, and its workspace is represented by the surrounding dotted red region.

In episode~a, the target was located toward the negative $y$-axis, prompting the drone to pass along the right side of the obstacle. The drone initially rotated to minimize yaw angle error. As the front sensor detected the approaching obstacle, the UAV executed a sharper right turn. Under the subsequent activation of the left-side sensor, it maintained this rightward evasion before swinging back left to correct its heading error and successfully reach the goal.

In episode~b, the target was positioned along the positive $y$-axis. The drone initially rotated counter-clockwise to align its heading. As the dynamic obstacle (which exhibits a pendulum-like motion) swung directly into the drone's path, the drone sensed the imminent hazard. While a standard reactive model might continue turning left, the trained model dynamically initiated a sharp right turn. This decision was crucial: continuing along the leftward path would have placed the drone on an intercept course with the moving obstacle. This sudden, intelligent maneuver highlights the trained model's capacity to predict dynamic movements and successfully execute proactive collision avoidance.

In episode~c, as the drone bypassed the obstacle, the obstacle was swinging toward the positive $y$-axis. This movement naturally increased the separation distance between the two bodies, minimizing the activation of the left sensor and resulting in a less aggressive turning maneuver compared to episode~a. Furthermore, the larger initial distance between the start position and the target and the obstacle in this episode allowed for a noticeably smoother overall trajectory.

Due to the continuously changing environment, the processing overhead exhibited larger variations, with planning times ranging from 32.96~ms to 42.48~ms. Nonetheless, these results confirm that the proposed method delivers reliable, real-time obstacle avoidance in the presence of dynamic hazards.

\begin{table}[h]
\centering
\caption{Quantitative results of real-world UAV navigation experiments in the presence of one dynamic obstacle.}
\label{tab:real_test_info_one_dynamic_obstacle}
\begin{tabularx}{\linewidth}{
>{\hsize=0.1\hsize}X
|>{\hsize=0.2\hsize}X
|>{\hsize=0.2\hsize}X
|>{\hsize=0.2\hsize}X
|>{\hsize=0.2\hsize}X
}
\hline
\textbf{Episode} & \textbf{Path Length (m)} & \textbf{Avg. Clearance (m)} & \textbf{Avg. Plan Time (ms)} & \textbf{Smoothness} \\
\hline
a & 1.336 & 0.400 & 38.39 & 5.234 \\
b & 0.900 & 0.288 & 42.48 & 5.660 \\
c & 1.393 & 0.420 & 32.96 & 3.977 \\

\hline
\end{tabularx}
\end{table}

\begin{figure}
  \centering
  \begin{subfigure}{0.32\linewidth}
    \centering
    \includegraphics[width=\linewidth, height=0.75\linewidth]{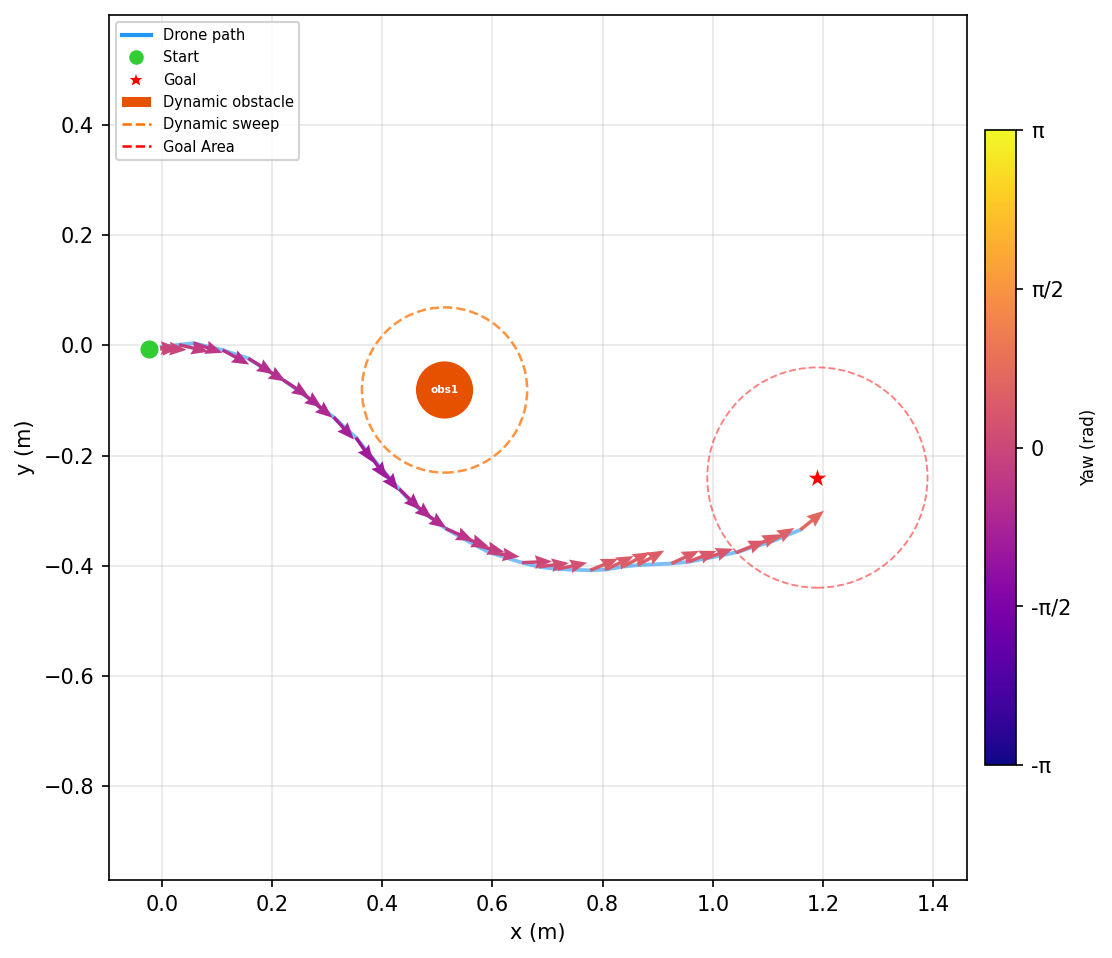} 
    \caption{}
  \end{subfigure}
  \begin{subfigure}{0.32\linewidth}
    \centering
    \includegraphics[width=\linewidth, height=0.75\linewidth]{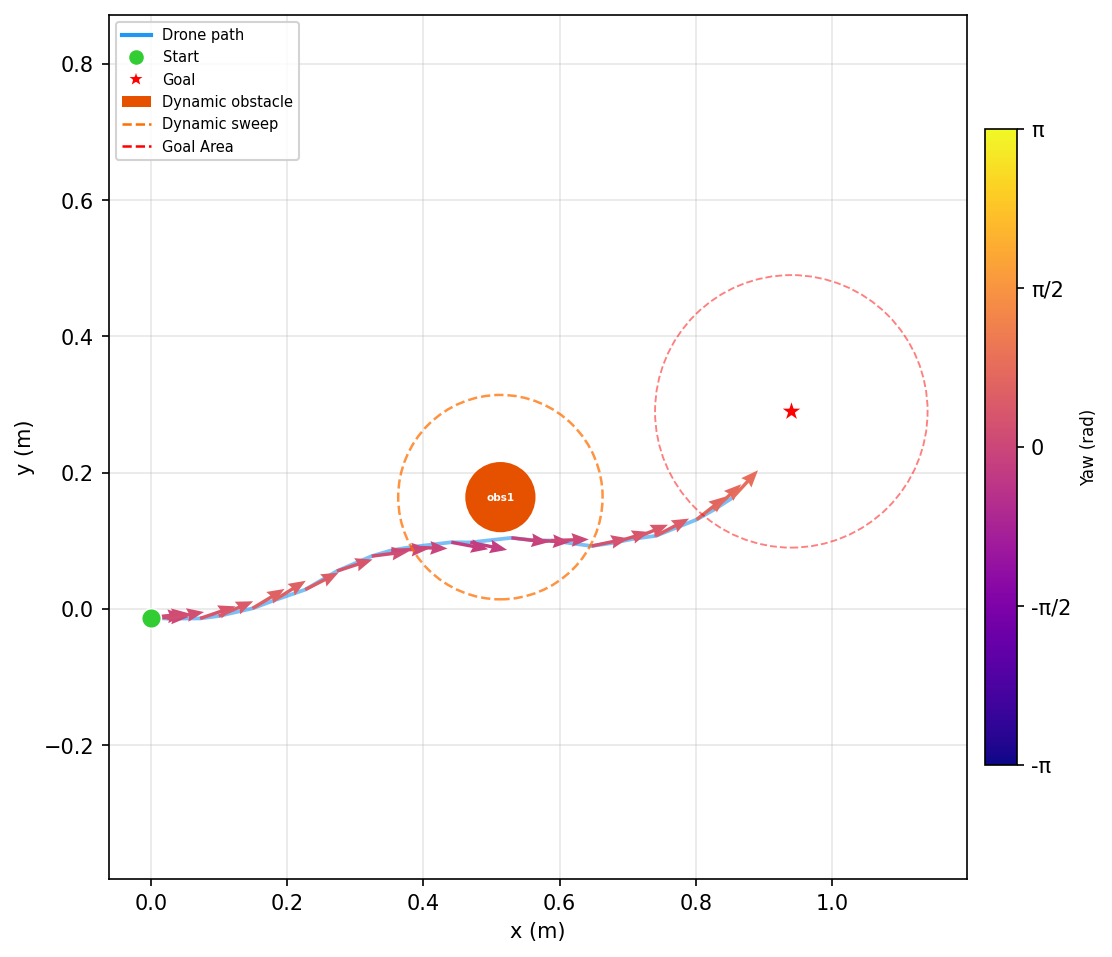} 
    \caption{}
  \end{subfigure}
  \begin{subfigure}{0.32\linewidth}
    \centering
    \includegraphics[width=\linewidth, height=0.75\linewidth]{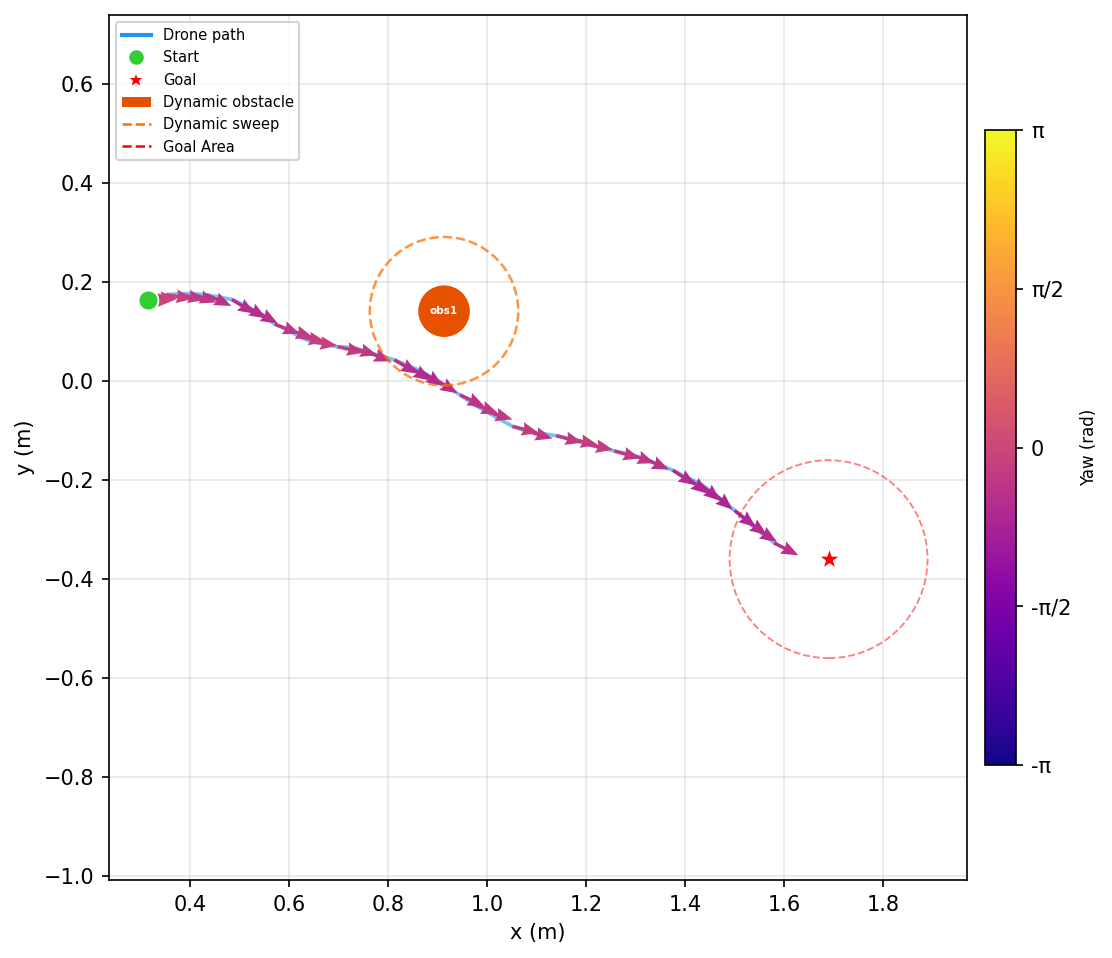} 
    \caption{}
  \end{subfigure}
 \caption{The plots of real-world UAV trajectories in the environment containing one dynamic obstacle.}
  \label{fig:paths_episodes_one_dynamic}
\end{figure}

Figure~\ref{fig:video_shot_one_dynamic}a displays a composite snapshot of a successful flight navigating around the dynamic obstacle. The snapshot is constructed by overlaying sequential video frames onto a fixed background to capture the full trajectory from takeoff to landing, with a blue curve marking the pendular workspace of the suspended obstacle. Figure~\ref{fig:video_shot_one_dynamic}b illustrates the corresponding trajectory data recorded for this episode.

\begin{figure}
  \centering
  \begin{subfigure}{0.48\linewidth}
    \centering
    \includegraphics[width=\linewidth, height=0.75\linewidth]{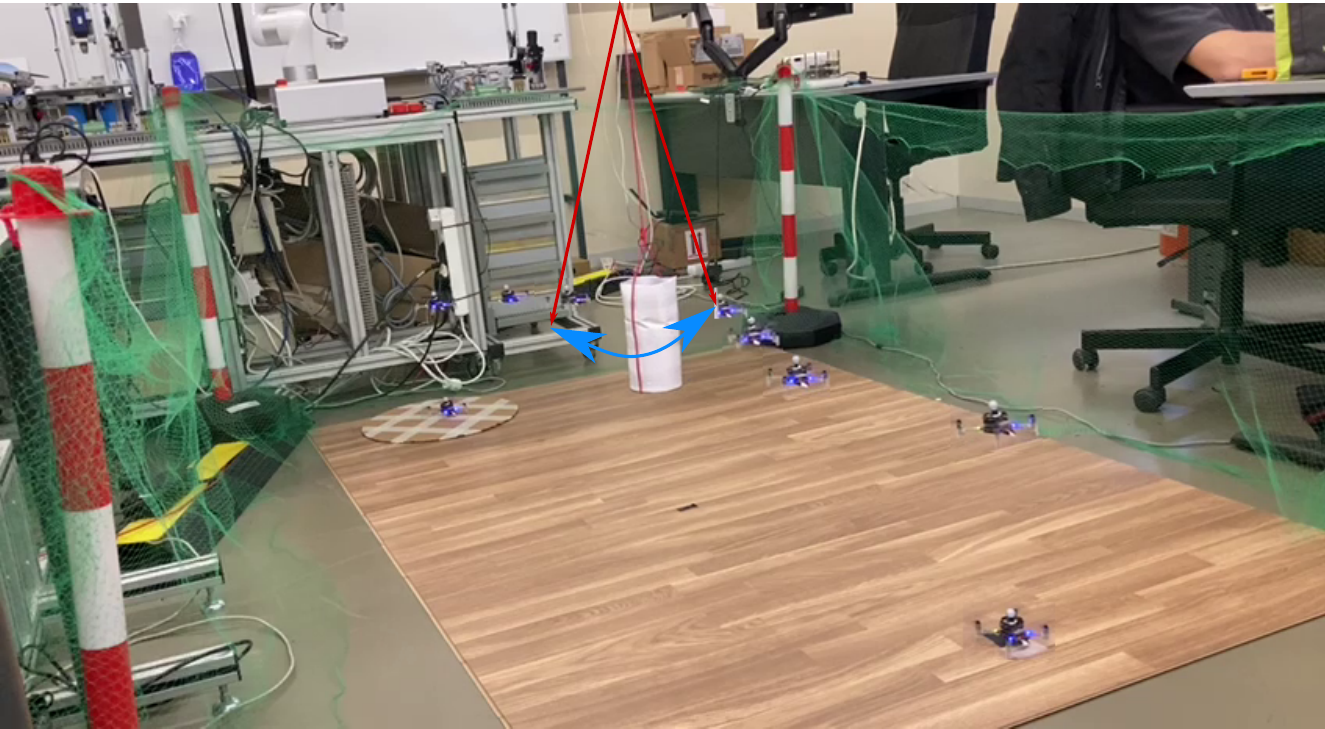} 
    \caption{}
  \end{subfigure}
  \begin{subfigure}{0.48\linewidth}
    \centering
    \includegraphics[width=\linewidth, height=0.75\linewidth]{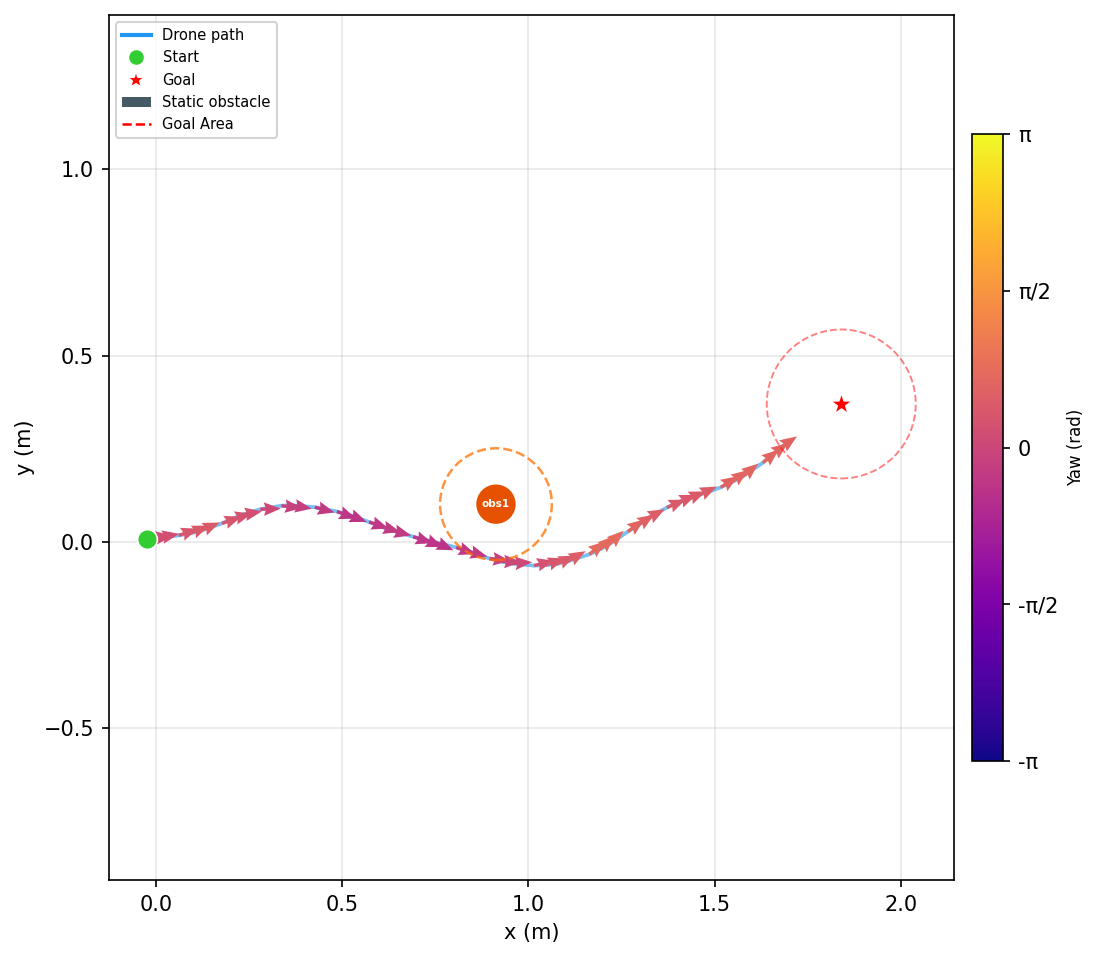} 
    \caption{}
  \end{subfigure}
 \caption{Experimental results with a dynamic obstacle: (a) Time-lapse composite snapshot tracking the drone's position relative to the obstacle's pendular workspace (indicated by the blue curve), and (b) the corresponding measured trajectory plot.}
  \label{fig:video_shot_one_dynamic}
\end{figure}

\textbf{Energy Consumption Comparison:}
Figure~\ref{fig:battary_usage_real} compares the battery voltage discharge profiles of the standard method and the enhanced method during real-world experiments.

The standard method produced a voltage range of 0.1478~V, while the enhanced method resulted in a larger voltage range of 0.2639~V, which is approximately 1.79 times higher. This indicates that the enhanced method consumed more energy during operation. The increased energy usage is related to the additional maneuvering required for increasing the environmental perception.

\begin{figure}
    \centering
    \includegraphics[width=0.75\linewidth]{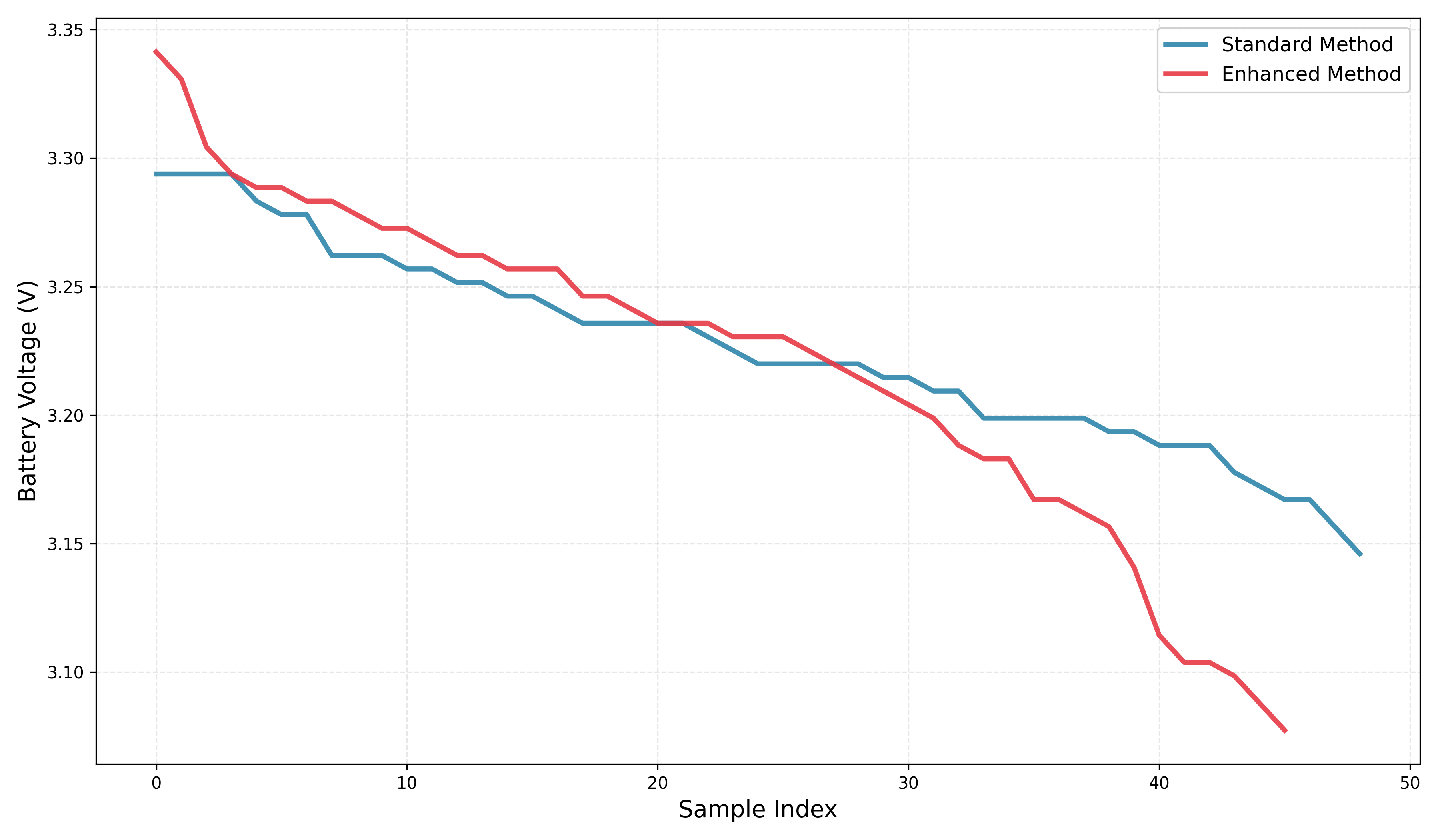}
    \caption{Drone Battery Voltage Discharge: Standard vs Enhanced Method}
    \label{fig:battary_usage_real}
\end{figure}

\subsection{Limitations and Future Work}\label{Limitations_and_Future_Work}

Although the proposed framework demonstrates strong performance in dynamic obstacle avoidance under sparse sensing, several limitations remain.

First, the current implementation is restricted to planar motion with constant altitude. Extending the framework to fully three-dimensional navigation would improve applicability in more complex aerial environments and enable operation in scenarios requiring vertical maneuvering.

Second, to match the constraints of the experimental hardware, the sensing configuration is limited to three forward-facing range measurements, resulting in a perception blind spot in the rear direction. While this is not an inherent limitation of the proposed framework, but rather of the sensing setup, it may affect safety in scenarios involving rear-approaching obstacles. As future work, additional sensors (e.g., a rear rangefinder) can be integrated, and the behavior grid map extended accordingly. Retraining the PPO policy under this augmented perception configuration would enable full directional awareness without modifying the underlying architecture.

Third, the real-world validation was conducted in relatively small indoor environments with limited obstacle diversity. Although these experiments demonstrate feasibility, larger-scale evaluations under more diverse and realistic conditions, including varying levels of environmental complexity, remain necessary to further assess generalization.

Fourth, while the yaw-dominated control structure is motivated by fixed-wing UAV constraints, the experimental validation was performed using a Crazyflie quadrotor platform. To bridge this gap, the quadrotor was intentionally constrained to yaw-based motion, enforcing nonholonomic behavior. While this provides a controlled validation setting, further experiments on platforms with inherently constrained dynamics (e.g., fixed-wing UAVs) would strengthen the applicability of the approach.

Finally, the current reward function relies on fixed empirically selected coefficients, which may not optimally balance navigation efficiency and safety across different environments. In particular, highly cluttered scenarios may require stronger emphasis on obstacle avoidance to maintain larger clearance. As future work, adaptive reward weighting mechanisms could be explored to dynamically adjust the relative importance of target progress, heading alignment, and obstacle avoidance based on environmental conditions, improving both safety and trajectory efficiency.

\end{color}




\section{Conclusion}\label{sec:conclusion}

This paper proposed a learning-based framework for UAV online motion planning under severe sensing constraints, combining a behavior grid map representation with a PPO-based control policy. The approach enables real-time navigation in dynamic environments using only sparse directional range measurements, without relying on dense perception systems such as LiDAR or cameras. This design makes the framework particularly suitable for resource-constrained UAV platforms.

\textcolor{black}{
The effectiveness of the proposed approach was demonstrated through systematic simulation studies across multiple environment sizes ($5 \times 5$ and $20 \times 20$ m$^2$) and congestion levels, as well as real-world experiments in four distinct scenarios. The results show that the framework consistently achieves reliable target-reaching behaviour while maintaining safe interaction with both static and dynamic obstacles. Compared with MPC and PPO-based baselines, the proposed method improves success rates in challenging environments while preserving real-time execution and maintaining a lightweight computational footprint. Ablation studies further highlight the role of key design components, including the grid representation, temporal update mechanism, and reward formulation.
}

\textcolor{black}{
Overall, the results indicate that effective navigation can be achieved under extreme partial observability through the combination of structured state representation and learning-based control. Future work will focus on extending the framework to fully three-dimensional motion, incorporating additional sensing to address directional limitations, and evaluating performance in larger-scale and multi-agent environments.
}

\bibliography{sn-bibliography}

\section*{Statements and Declarations}
\subsection*{Funding}
This work was conducted under the FCT doctoral scholarship (reference number 2023.01812.BD) and forms part of ongoing R\&D activities. Financial support was provided by the R\&D Unit SYSTEC – Systems and Technologies Center (UID/00147) and the Associated Laboratory ARISE – Advanced Production and Intelligent Systems (ref. LA/P/0112/2020), both funded by FCT/MECI through national funds.

\subsection*{Competing Interests}
The authors have no relevant financial or non-financial interests to disclose.

\subsection*{Author Contributions}
All authors contributed to the study conception and design. All authors read and approved the final manuscript.

\subsection*{Data Availability}
The data that support the findings of this study are available from the corresponding author upon reasonable request.

\end{document}